\theoremstyle{plain}
\theoremstyle{definition}
\theoremstyle{remark}
\DeclareRobustCommand\onedot{\futurelet\@let@token\@onedot}
\def\@onedot{\ifx\@let@token.\else.\null\fi\xspace}
\def\eg{\emph{e.g}\onedot} 
\def\ie{\emph{i.e}\onedot} 
\def\cf{\emph{cf}\onedot}
\definecolor{mycolor}{RGB}{219,90,107}
\DeclareMathOperator*{\argmin}{arg\,min}
\icmltitlerunning{The Geometry of Robust Value Functions}
\begin{document}

\twocolumn[
\icmltitle{The Geometry of Robust Value Functions}



\icmlsetsymbol{equal}{*}

\begin{icmlauthorlist}
\icmlauthor{Kaixin Wang}{nus-ids}
\icmlauthor{Navdeep Kumar}{technion-ece}
\icmlauthor{Kuangqi Zhou}{nus-ece}
\icmlauthor{Bryan Hooi}{nus-ids,nus-soc}
\icmlauthor{Jiashi Feng}{byte}
\icmlauthor{Shie Mannor}{technion-ece,nvidia}
\end{icmlauthorlist}

\icmlaffiliation{nus-ids}{Institute of Data Science, National University of Singapore, Singapore}
\icmlaffiliation{nus-ece}{Department of Electrical and Computer Engineering, National University of Singapore, Singapore}
\icmlaffiliation{nus-soc}{School of Computing, National University of Singapore, Singapore}
\icmlaffiliation{technion-ece}{Electrical and Computer Engineering, Technion, Haifa, Israel}
\icmlaffiliation{byte}{ByteDance, Singapore}
\icmlaffiliation{nvidia}{NVIDIA Research, Haifa, Israel}

\icmlcorrespondingauthor{Kaixin Wang}{kaixin.wang@u.nus.edu}

\icmlkeywords{Machine Learning, ICML}

\vskip 0.3in
]



\printAffiliationsAndNotice{}  


\begin{abstract}
The space of value functions is a fundamental concept in reinforcement learning.
Characterizing its geometric properties may provide insights for optimization and representation.
Existing works mainly focus on the value space for Markov Decision Processes (MDPs).
In this paper, we study the geometry of the robust value space for the more general Robust MDPs (RMDPs) setting, where transition uncertainties are considered.
Specifically, since we find it hard to directly adapt prior approaches to RMDPs, we start with revisiting the non-robust case, and introduce a new perspective that enables us to characterize both the non-robust and robust value space in a similar fashion.
The key of this perspective is to decompose the value space, in a state-wise manner, into unions of hypersurfaces.
Through our analysis, we show that the robust value space is determined by a set of \emph{conic hypersurfaces}, each of which contains the robust values of all policies that agree on one state. 
Furthermore, we find that taking only extreme points in the uncertainty set is sufficient to determine the robust value space.
Finally, we discuss some other aspects about the robust value space, including its non-convexity and policy agreement on multiple states.
\end{abstract}

\section{Introduction}
\label{intro}

\begin{figure}[t]
    \centering
    \includegraphics[width=\linewidth]{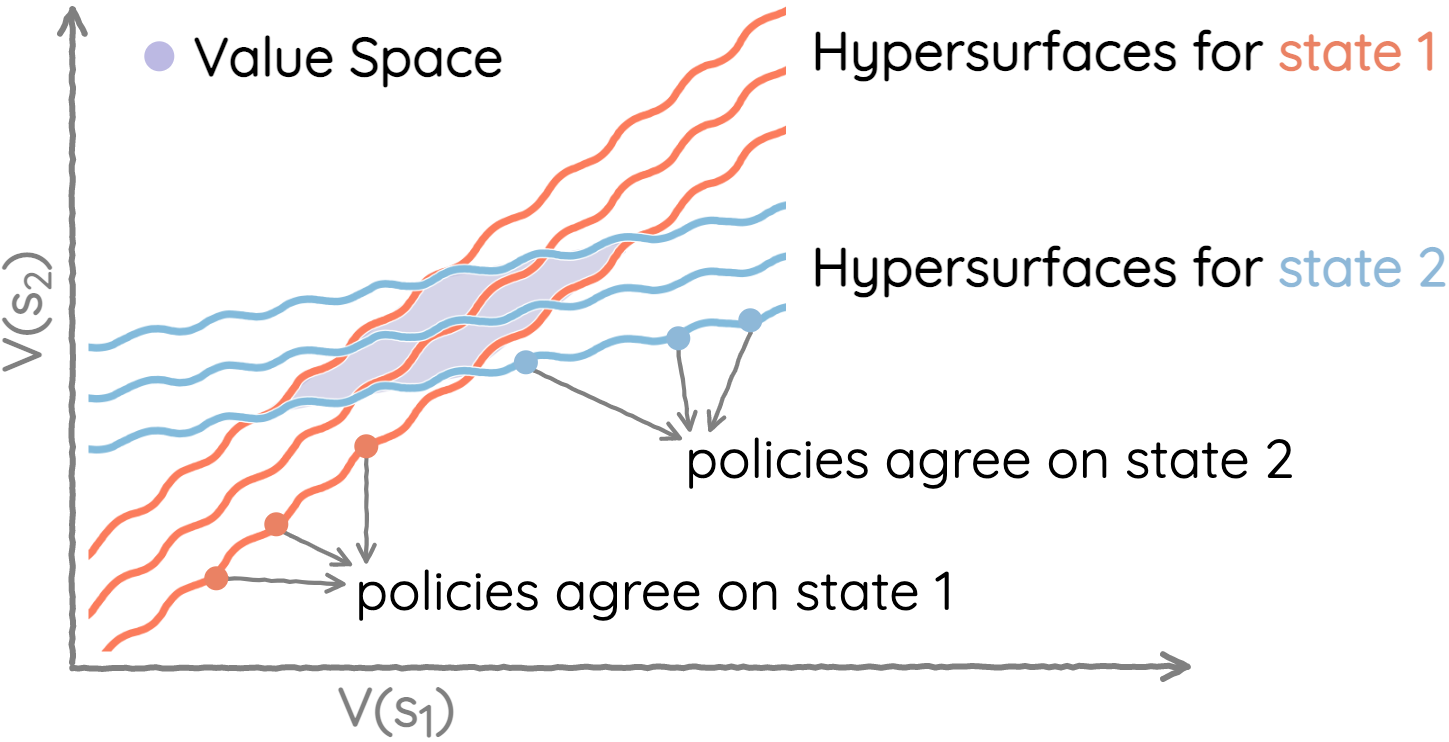}
    \caption{The value space can be decomposed in a state-wise manner as an intersection of unions of hypersurfaces. Each union corresponds to a state and each hypersurface contains the value functions of policies agreeing on that state.}
     \label{fig:opening}
\end{figure}

The space of value functions for stationary policies is a central concept in Reinforcement Learning (RL), since many RL algorithms are essentially navigating this space to find an optimal policy that maximizes the value function, such as policy gradient~\cite{sutton2000policy}, policy iteration~\cite{howard1960dynamic} and evolutionary strategies~\cite{de2005tutorial}.
Characterizing the geometric properties for the space of the value function (\ie, the value space) would offer insights for RL research. A recent work~\cite{dadashi2019value} shows that the value space for Markov Decision Processes (MDPs) is a possibly non-convex polytope, which inspires new methods in representation learning in RL~\cite{bellemare2019geometric, dabney2021value}.

Compared to MDPs, Robust MDPs (RMDPs) are more general, since they do not assume that the transition dynamics are known exactly but instead may take any value from a given uncertainty set~\cite{xu2006robustness, iyengar2005robust,nilim2005robust,wiesemann2013robust}. This makes RMDPs more suitable for real-world problems where parameters may not be precisely given. Therefore, characterizing the geometric properties of the value space for RMDPs (\ie, robust value space) is of interest.

However, we find it hard to directly adapt the prior approach~\cite{dadashi2019value} from MDPs to RMDPs. Their method builds upon on a key theorem (the Line Theorem), but we find it difficult to prove a robust counterpart of this theorem (see more discussions in Section~\ref{sec:line-theorem}).

In this work, we introduce a new perspective for investigating the geometry of the space of value functions.
Specifically, we start with revisiting the non-robust case due to its simplicity.
By decomposing the value space in a state-wise manner (as illustrated in Figure~\ref{fig:opening}), we can give an explicit form about the value function polytope.

With this decomposition-based perspective, we show that the robust value space is determined by a set of \emph{conic hypersurfaces}, each of which contains the robust value functions for policies that agree on one state.
Furthermore, from a geometric perspective, we show that the robust value space can be fully determined by a subset of the uncertainty set, which composes of extreme points of the uncertainty set.
As a result, for polyhedral uncertainty set such as $\ell_1$-ball and $\ell_\infty$-ball~\cite{ho2018fast, ho2021partial, behzadian2021fast}, we can replace the infinite uncertainty set with a finite active uncertainty subset, without losing any useful information for policy optimization. Finally, we discuss some other aspects about the robust value space, including policy agreement on more than one state, the non-convexity of the robust value space, and why it is difficult to obtain a Line Theorem for RMDPs.

All proofs and the specifics of MDPs and RMDPs used for illustration can be found in Appendix.

\section{Preliminaries}
\label{sec:prelim}

We introduce backgrounds for MDPs in
Section~\ref{sec:bg-mdp} and for RMDPs in Section~\ref{sec:bg-rmdp}. Importantly, Section~\ref{sec:bg-vfspace} sets up some essential concepts and notations for studying the value space, which will be frequently used in the rest of paper.

\emph{Notations.}
We use $\mathbf{1}$ and $\mathbf{0}$ to denote vectors of all ones and all zeros respectively, and their sizes can be inferred from the context. For vectors and matrices, $<$, $\le$, $>$ and $\ge$ denote element-wise comparisons. Calligraphic letters such as $\mathcal{P}$ are mainly for sets. For an index set $\mathcal{Z}=\{1,\cdots,k\}$, $(x_i)_{i\in\mathcal{Z}}$ denotes a vector $(x_1, x_2, \cdots, x_k)$ if $x_i$ is a scalar, or a matrix $(x_1, x_2, \cdots, x_k)^\top$ if $x_i$ is a vector. $\Delta_\mathcal{U}$ is used to denote the space of probability distributions over a set $\mathcal{U}$. For a non-empty set $\mathcal{U}$, we denote its \emph{polar cone} as $\mathcal{U}^*$\cite{bertsekas2009convex}, given by
\begin{equation}
    \mathcal{U}^* \coloneqq \{y \mid \langle y,x\rangle \le 0, \forall\,x\in\mathcal{U}\}.
\end{equation}
We use $\mathbf{conv}(\cdot)$ to denote the convex hull of a set, and $\mathbf{ext}(\cdot)$ to denote the set of extreme points of a non-empty convex set.

\subsection{Markov Decision Processes}
\label{sec:bg-mdp}

We consider an MDP $(\mathcal{S}, \mathcal{A}, P, r, \gamma, p_0)$ with a finite state set $\mathcal{S}$ and a finite action set $\mathcal{A}$.
The number of states $\lvert\mathcal{S}\rvert$ and the number of actions $\lvert\mathcal{A}\rvert$ are denoted with $S$ and $A$, respectively.
The initial state is generated according to the $p_0\in\Delta_\mathcal{S}$.
We use $P_{s,a}\in\Delta_\mathcal{S}$ to specify the probabilities of transiting to new states when taking action $a$ in state $s$, and employ $P\coloneqq(P_{s,a})_{s\in\mathcal{S},a\in\mathcal{A}}\in(\Delta_\mathcal{S})^{\mathcal{S}\times\mathcal{A}}$ as a condensed notation.
An immediate reward $r_{s,a}\in\mathbb{R}$ is given after taking action $a$ in state
$s$, and similarly $r\coloneqq(r_{s,a})_{s\in\mathcal{S},a\in\mathcal{A}}\in\mathbb{R}^{\mathcal{S}\times\mathcal{A}}$ is a condensed notation.
$\gamma\in[0,1)$ is the discount factor.
In addition, we also define $P_s\coloneqq(P_{s,a})_{a\in\mathcal{A}}\in(\Delta_\mathcal{S})^{\mathcal{A}}$ and $r_s\coloneqq(r_{s,a})_{a\in\mathcal{A}}\in\mathbb{R}^{\mathcal{A}}$.

A stationary stochastic policy $\pi\coloneqq(\pi_{s,a})_{s\in\mathcal{S},a\in\mathcal{A}}\in(\Delta_\mathcal{A})^{\mathcal{S}}$ specifies a decision making strategy, where $\pi_{s,a}\in[0,1]$ is the probability of taking some action $a$ in current state $s$. We denote $\pi_s\coloneqq(\pi_{s,a})_{a\in\mathcal{A}}\in\Delta_\mathcal{A}$ as the probability vector over actions. In particular, we use $d_{s,a}\in\Delta_\mathcal{A}$ to represent a deterministic $\pi_s$ that is all-zero except $\pi_{s,a}=1$.

Under a given policy $\pi$, we define the state-to-state transition probability as
\begin{equation}
    \begin{aligned}
    P^\pi &\coloneqq (P^{\pi_s})_{s\in\mathcal{S}}\in(\Delta_\mathcal{S})^\mathcal{S}, \quad \textrm{where} \\
    P^{\pi_s} &\coloneqq P_s\pi_s= \sum_{a\in\mathcal{A}}\pi_{s,a}P_{s,a}\in\Delta_\mathcal{S}.
\end{aligned}
\end{equation}

The reward function under this policy is defined as
\begin{equation}
\begin{aligned}
    r^\pi &\coloneqq (r^{\pi_s})_{s\in\mathcal{S}}\in\mathbb{R}^\mathcal{S}, \quad \textrm{where} \\
    r^{\pi_s} &\coloneqq r_s^\top\pi_s = \sum_{a\in\mathcal{A}}\pi_{s,a}r_{s,a} \in\mathbb{R}.
\end{aligned}
\end{equation}
The value $V^{\pi,P}\in\mathbb{R}^\mathcal{S}$ is defined to be the expected cumulative reward from starting in a state and acting according to the policy $\pi$ under transition dynamic $P$:
\begin{equation}
    V^{\pi,P}(s) \coloneqq \mathbb{E}_{P^\pi} \left[\sum_{t=0}^\infty \gamma^t r_{s_t, a_t}\mid s_0=s\right].
\end{equation}

\subsection{Robust Markov Decision Processes}
\label{sec:bg-rmdp}

Robust Markov Decision Processes (RMDPs) generalize MDPs in that the uncertainty in the transition dynamic $P$ is considered~\cite{iyengar2005robust,nilim2005robust,wiesemann2013robust}. In an RMDP, the transition dynamic $P$ is chosen adversarially from an uncertainty set $\mathcal{P}\subseteq(\Delta_\mathcal{S})^{\mathcal{S}\times \mathcal{A}}$. We assume throughout the paper that the set $\mathcal{P}$ is compact. The robust value function for a policy $\pi$ and the optimal robust value function are defined as
\begin{align}
    V^{\pi,\mathcal{P}}(s) &\coloneqq \min_{P\in\mathcal{P}} V^{\pi, P}(s),\\ V^{\star,\mathcal{P}}(s) &\coloneqq \max_{\pi\in\Pi} V^{\pi,\mathcal{P}}(s).
\end{align}
Both the policy evaluation and policy improvement problems are intractable for generic $\mathcal{P}$~\cite{wiesemann2013robust}. However, they become tractable when certain independence assumptions about $\mathcal{P}$ are made. Two common assumptions are \emph{$(s,a)$-rectangularity}~\cite{iyengar2005robust, nilim2005robust} and \emph{$s$-rectangularity}~\cite{wiesemann2013robust}, which we will use in this paper. The $(s,a)$-rectangularity assumes that the adversarial nature selects the worst transition probabilities independently for each state and action. Under $(s,a)$-rectangularity, the uncertainty set $\mathcal{P}$ can be factorized into $\mathcal{P}_{s,a}\subseteq\Delta_\mathcal{S}$ for each state-action pair, \ie,
\begin{equation}
    \mathcal{P} = \{P\mid P_{s,a}\in \mathcal{P}_{s,a}, \forall\,s\in\mathcal{S}, \forall\,a\in\mathcal{A}\},
\end{equation}
or in short $\mathcal{P}=\underset{(s,a)\in\mathcal{S}\times\mathcal{A}}{\times}\mathcal{P}_{s,a}$ where $\times$ denotes Cartesian product. The \emph{$s$-rectangularity} is less restrictive and assumes the adversarial nature selects the worst transition probabilities independently for each state. Under $s$-rectangularity, the uncertainty set $\mathcal{P}$ can be factorized into $\mathcal{P}_s\subseteq(\Delta_\mathcal{S})^\mathcal{A}$ for each state, \ie,
\begin{equation}
    \mathcal{P} = \{P\mid P_s\in \mathcal{P}_s, \forall\,s\in\mathcal{S}\},
\end{equation}
or in short $\mathcal{P}=\underset{s\in\mathcal{S}}{\times}\mathcal{P}_s$.
Note that $(s,a)$-rectangularity is a special case of $s$-rectangularity. Below we present a restatement of the remark in~\cite{ho2021partial} that the optimal policy for the robust policy evaluation MDP is deterministic. This restatement will be used later. Under $s$-rectangularity, we have for any $\pi$,
\begin{equation}
    \begin{aligned}
    \exists\,P\in\mathcal{P}\quad \textrm{s.t.} \quad V^{\pi,P}(s) = V^{\pi,\mathcal{P}}(s),\, \forall s\in\mathcal{S}.
\end{aligned}
\label{eqn:s-rect-exist}
\end{equation}

\subsection{The Space of Value Functions}
\label{sec:bg-vfspace}

The space of value functions (or value space in short) is the set of value functions for all stationary policies. We use $f_P$ and $f_{\mathcal{P}}$ to respectively represent the mapping between a set of policies and their non-robust and robust value functions, \ie,
\begin{align}
    f_P(U) &\coloneqq \{V^{\pi,P}\mid \pi\in U\}, \\
    f_{\mathcal{P}}(U) &\coloneqq \{V^{\pi,\mathcal{P}}\mid \pi\in U\}.
\end{align}
The set of all stationary stochastic policies is denoted as $\Pi=(\Delta_\mathcal{A})^\mathcal{S}$. Then, the non-robust value space for a transition dynamic $P$ and the robust value space for an uncertainty set $\mathcal{P}$ can be respectively expressed as
\begin{align}
    \mathcal{V}^P &\coloneqq f_P(\Pi), \label{eqn:def-vf-space}
    \\ \mathcal{V}^\mathcal{P} &\coloneqq f_{\mathcal{P}}(\Pi).
    \label{eqn:def-robsut-vf-space}
\end{align}
We then introduce some notations that will be frequently used later. We use $Y^{\pi_s}$ to denote the set of policies that agree with $\pi$ on $s$, \ie,
\begin{equation}
    Y^{\pi_s} \coloneqq \{\pi'\mid \pi'_s = \pi_s\}.
\end{equation}
Note that policy agreement on state $s$ does not imply disagreement on other states. Thus, $\pi$ itself is also in $Y^{\pi_s}$. The row of the matrix $I - \gamma P^\pi$ corresponds to state $s$ is denoted as $L^{\pi_s, P_s}$, \ie,
\begin{equation}
    L^{\pi_s, P_s} \coloneqq \mathbf{e}_s - \gamma P^{\pi_s} = \mathbf{e}_s - \gamma P_s \pi_s
\end{equation}
where $\mathbf{e}_s\in\mathbb{R}^\mathcal{S}$ is an all-zero vector except the entry corresponding to $s$ being 1.

\section{The Value Function Polytope Revisited}
\label{sec:revisit}

\begin{figure}[t]
    \centering
    \includegraphics[width=\linewidth]{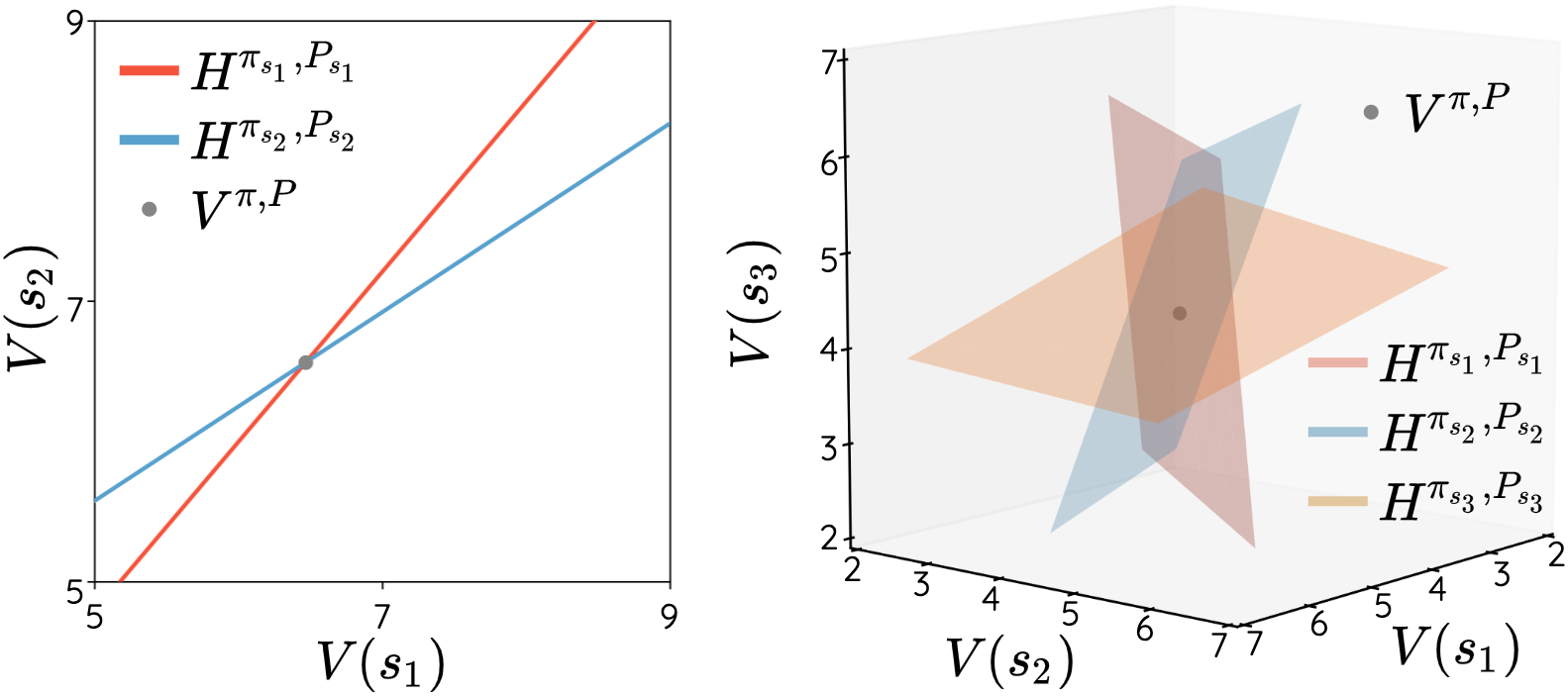}
    \caption{Hyperplanes $H^{\pi_s, P_s}$ corresponding to different $s$ intersect at the value function $V^{\pi,P}$.}
     \label{fig:single-v}
\end{figure}

In this section, we revisit the non-robust value space from a new perspective, where the value space is decomposed in a state-wise manner. 
This perspective enables us to characterize the polytope shape of the value space in a more straightforward way, and leads to an explicit form of the value polytope.

Our first step is to connect a single value function $V^{\pi,P}$ to a set of hyperplanes, each of which can be expressed as:
\begin{equation}
\label{}
    H^{\pi_s, P_s} \coloneqq \{\mathbf{x}\in\mathbb{R}^\mathcal{S}\mid \langle \mathbf{x}, L^{\pi_s,P_s} \rangle = r^{\pi_s} \}.
\end{equation}
As shown in Lemma~3 in \cite{dadashi2019value}, the value functions $f_P(Y^{\pi_s})$ lie in the hyperplane $H^{\pi_s, P_s}$.

Specifically, since $\pi\in Y^{\pi_s}$, we know every hyperplane $H^{\pi_s, P_s}$ passes through $V^{\pi,P}$ (see examples in Figure~\ref{fig:single-v}). 
The following lemma states that this intersecting point is unique.
\begin{restatable}{lemma}{singlev}
    Consider a policy $\pi$ and a transition dynamic $P$, we have
\begin{equation}
    \{V^{\pi,P}\} = \bigcap_{s\in\mathcal{S}} H^{\pi_s, P_s}
\end{equation}
\label{lem:single-v}
\end{restatable}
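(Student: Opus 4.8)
The plan is to show the intersection $\bigcap_{s \in \mathcal{S}} H^{\pi_s, P_s}$ is exactly the singleton $\{V^{\pi,P}\}$ by exhibiting it as the solution set of a nonsingular linear system. First I would observe that a point $\mathbf{x} \in \mathbb{R}^\mathcal{S}$ lies in $H^{\pi_s, P_s}$ if and only if $\langle \mathbf{x}, L^{\pi_s, P_s}\rangle = r^{\pi_s}$; stacking these $S$ scalar equations over all $s \in \mathcal{S}$ gives the single matrix equation $(I - \gamma P^\pi)\mathbf{x} = r^\pi$, since the $s$-th row of $I - \gamma P^\pi$ is precisely $L^{\pi_s, P_s} = \mathbf{e}_s - \gamma P^{\pi_s}$ by definition, and the $s$-th entry of $r^\pi$ is $r^{\pi_s}$. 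Hence $\bigcap_{s\in\mathcal{S}} H^{\pi_s, P_s} = \{\mathbf{x} \mid (I - \gamma P^\pi)\mathbf{x} = r^\pi\}$.

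Next I would invoke the standard fact that $I - \gamma P^\pi$ is invertible: $P^\pi$ is a (row-)stochastic matrix, so its spectral radius is $1$, and since $\gamma \in [0,1)$ the matrix $\gamma P^\pi$ has spectral radius $\gamma < 1$, which means $1$ is not an eigenvalue of $\gamma P^\pi$ and therefore $I - \gamma P^\pi$ is nonsingular (equivalently, the Neumann series $\sum_{t=0}^\infty \gamma^t (P^\pi)^t$ converges and equals $(I - \gamma P^\pi)^{-1}$). Consequently the linear system $(I - \gamma P^\pi)\mathbf{x} = r^\pi$ has a unique solution, namely $\mathbf{x} = (I - \gamma P^\pi)^{-1} r^\pi$, so the intersection is a singleton.

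Finally I would identify that unique solution as $V^{\pi, P}$. This is the classical Bellman equation: unrolling the definition $V^{\pi,P}(s) = \mathbb{E}_{P^\pi}[\sum_{t=0}^\infty \gamma^t r_{s_t,a_t} \mid s_0 = s]$ gives $V^{\pi,P} = r^\pi + \gamma P^\pi V^{\pi,P}$, i.e. $(I - \gamma P^\pi)V^{\pi,P} = r^\pi$, so $V^{\pi,P}$ solves the system and by uniqueness is the only element of the intersection. Alternatively, one can just cite that $f_P(Y^{\pi_s}) \subseteq H^{\pi_s, P_s}$ (already noted in the excerpt via Lemma 3 of \cite{dadashi2019value}) together with $\pi \in Y^{\pi_s}$ to get $V^{\pi,P} \in \bigcap_s H^{\pi_s,P_s}$ for free, and then only the uniqueness part needs the invertibility argument.

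None of the steps is genuinely hard; the only thing to be careful about is the invertibility of $I - \gamma P^\pi$, which is where the hypothesis $\gamma < 1$ is essential — without it (i.e. $\gamma = 1$) the intersection could be empty or a larger affine subspace. So I would treat the spectral-radius argument as the main point to state cleanly, and everything else is bookkeeping with the definitions of $L^{\pi_s,P_s}$, $r^{\pi_s}$, and $P^{\pi_s}$.
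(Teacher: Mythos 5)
Your proposal is correct and follows essentially the same route as the paper's proof: rewrite the intersection of the hyperplanes as the linear system $(I-\gamma P^\pi)\mathbf{x}=r^\pi$, note that $I-\gamma P^\pi$ is invertible, and identify the unique solution with $V^{\pi,P}$ via the Bellman equation. The extra detail you give on the spectral-radius/Neumann-series argument is just a fuller justification of the invertibility step the paper takes for granted.
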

Lemma~\ref{lem:single-v} bridges between a single value function and the intersection of $S$ different hyperplanes, each of which corresponds to a state $s$.
Then, by definition (Eqn.~\eqref{eqn:def-vf-space}), we can obtain the value space by taking the union over all $\pi\in\Pi$, \ie,
\begin{equation}
\label{eqn:vf-space-union}
    \mathcal{V}^{P} = \bigcup_{\pi\in\Pi} \bigcap_{s\in\mathcal{S}} H^{\pi_s,P_s},
\end{equation}
as illustrated in Figure~\ref{fig:vf-space}(a).

From Eqn.~\eqref{eqn:vf-space-union}, we observe that the value space $\mathcal{V}^P$ can also be expressed from an alternative perspective (as shown in Figure~\ref{fig:vf-space}(b)):
1) for each state $s\in\mathcal{S}$, taking the union of all hyperplanes corresponding to different $\pi_s\in\Delta_\mathcal{A}$; 
2) taking the intersection of the unions obtained in previous step. 
The following lemma formalizes this perspective.
\begin{figure*}[t]
    \centering
    \includegraphics[width=\linewidth]{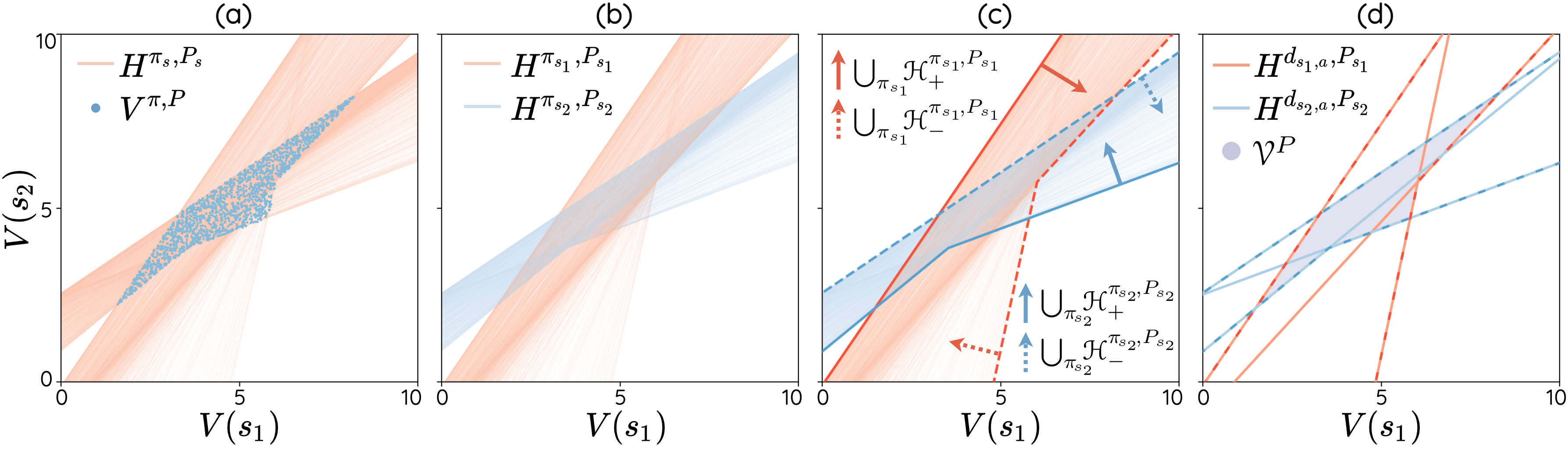}
    \caption{Visualization of the value functions for a 2-state 3-action MDP.
    \textbf{(a)} For each policy $\pi$, we plot the value function $V^{\pi,P}$ and the corresponding hyperplanes $H^{\pi_s, P_s}$ intersecting at $V^{\pi,P}$. 
    \textbf{(b)} For each policy $\pi$, the hyperplanes $H^{\pi_s, P_s}$ intersecting at $V^{\pi,P}$ are plotted in different colors for different states.
    \textbf{(c)} For each state $s$, the union of $\mathcal{H}^{\pi_s,P_s}_{+}$ and the union of $\mathcal{H}^{\pi_s,P_s}_{-}$ over all $\pi_s\in\Delta_\mathcal{A}$ are highlighted respectively.
    \textbf{(d)} For each state $s$, the hyperplanes $H^{d_{s,a},P_s}$ for different actions $a$ are plotted. The union of $\mathcal{H}_{+}^{d_{s,a},P_s}$ and the union of $\mathcal{H}_{-}^{d_{s,a},P_s}$ over all actions $a\in\mathcal{A}$ are highlighted as dashed. The entire value space $\mathcal{V}^P$ is visualized as the purple region.
    }
     \label{fig:vf-space}
\end{figure*}

\begin{restatable}{lemma}{vfspace}
    Consider a transition dynamic $P$, the value space $\mathcal{V}^P$ can be represented as
\begin{equation}
  \mathcal{V}^{P} = \bigcup_{\pi\in\Pi} \bigcap_{s\in\mathcal{S}}\,  H^{\pi_s,P_s} = \bigcap_{s\in\mathcal{S}}\, \bigcup_{\pi_s\in\Delta_\mathcal{A}} H^{\pi_s,P_s}.
\end{equation}
\label{lem:vf-space}
\end{restatable}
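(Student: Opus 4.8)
The plan is to prove the two equalities in turn. The first equality, $\mathcal{V}^{P} = \bigcup_{\pi\in\Pi}\bigcap_{s\in\mathcal{S}}H^{\pi_s,P_s}$, is essentially already in hand: Lemma~\ref{lem:single-v} gives $\{V^{\pi,P}\} = \bigcap_{s\in\mathcal{S}}H^{\pi_s,P_s}$ for each fixed $\pi$, and taking the union over all $\pi\in\Pi$ together with the definition $\mathcal{V}^P = f_P(\Pi) = \{V^{\pi,P}\mid\pi\in\Pi\}$ yields the claim — this is exactly Eqn.~\eqref{eqn:vf-space-union}. So the real content is the second equality, which is a statement about exchanging the order of the union and the intersection.

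For the inclusion $\bigcup_{\pi\in\Pi}\bigcap_{s\in\mathcal{S}}H^{\pi_s,P_s}\subseteq\bigcap_{s\in\mathcal{S}}\bigcup_{\pi_s\in\Delta_\mathcal{A}}H^{\pi_s,P_s}$, I would take any point $\mathbf{x}$ in the left-hand side, so $\mathbf{x}\in\bigcap_{s}H^{\pi_s,P_s}$ for some policy $\pi$. Fixing an arbitrary state $s$, we have $\mathbf{x}\in H^{\pi_s,P_s}$, and since $\pi_s\in\Delta_\mathcal{A}$ this hyperplane is one of the members of the union $\bigcup_{\pi_s'\in\Delta_\mathcal{A}}H^{\pi_s',P_s}$, hence $\mathbf{x}$ lies in that union; as $s$ was arbitrary, $\mathbf{x}$ lies in the intersection over all $s$. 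The only thing being used here is that $H^{\pi_s,P_s}$ depends on the policy only through its slice $\pi_s$ at state $s$ (and through $P_s$), which is transparent from the definitions of $L^{\pi_s,P_s}$ and $r^{\pi_s}$.

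For the reverse inclusion I would exploit the Cartesian-product structure $\Pi=(\Delta_\mathcal{A})^{\mathcal{S}}$. Take $\mathbf{x}\in\bigcap_{s\in\mathcal{S}}\bigcup_{\pi_s\in\Delta_\mathcal{A}}H^{\pi_s,P_s}$. Then for each state $s$ there exists a distribution $w_s\in\Delta_\mathcal{A}$ with $\mathbf{x}\in H^{w_s,P_s}$. Assembling these slices gives a legitimate stationary policy $\pi\coloneqq(w_s)_{s\in\mathcal{S}}\in\Pi$, and by construction $\mathbf{x}\in H^{\pi_s,P_s}$ for every $s$, so $\mathbf{x}\in\bigcap_{s\in\mathcal{S}}H^{\pi_s,P_s}\subseteq\bigcup_{\pi'\in\Pi}\bigcap_{s\in\mathcal{S}}H^{\pi'_s,P_s}$. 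Combining the two inclusions proves the second equality.

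I do not expect a genuinely hard step: the second equality is just the distributive law applied to the product structure of $\Pi$, i.e.\ for coordinate-separable sets $A_s(\pi_s)$ one always has $\bigcup_{\pi\in\prod_s\Delta_\mathcal{A}}\bigcap_s A_s(\pi_s)=\bigcap_s\bigcup_{\pi_s\in\Delta_\mathcal{A}}A_s(\pi_s)$. The one point that must be handled with care — and the reason to state it explicitly — is the observation flagged above, that each $H^{\pi_s,P_s}$ is a function of the single coordinate $\pi_s$ only; this is precisely what makes the ``pick a witness per state and reassemble'' step produce a well-defined policy, and it is also what will need extra attention when we pass to the robust setting, where the worst-case transition can couple the per-state slices.
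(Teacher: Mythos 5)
Your proposal is correct and follows essentially the same route as the paper: the first equality comes from Lemma~\ref{lem:single-v} plus the definition of $\mathcal{V}^P$, and the second is the exchange of union and intersection justified by the product structure $\Pi=(\Delta_\mathcal{A})^\mathcal{S}$ and the fact that $H^{\pi_s,P_s}$ depends on $\pi$ only through $\pi_s$. The paper packages this exchange as an iterated application of the distributive law of sets, whereas you verify the same identity by a direct two-inclusion element argument (picking a witness $w_s$ per state and reassembling a policy); the content is the same.
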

As suggested in Lemma~\ref{lem:vf-space}, the core of this perspective is to decompose the value space in a state-wise manner.
In this way, to study the whole value space, we only need to focus on the union of hyperplanes corresponding to one state.

Specifically, let us denote the two closed half-spaces determined by the hyperplane $H^{\pi_{s},P_s}$ as
\begin{equation}
    \begin{aligned}
    \mathcal{H}^{\pi_{s},P_s}_{+} &\coloneqq \{\mathbf{x}\in\mathbb{R}^\mathcal{S}\mid \langle \mathbf{x}, L^{\pi_s,P_s} \rangle \ge r^{\pi_s} \}, \\
    \mathcal{H}^{\pi_{s},P_s}_{-} &\coloneqq \{\mathbf{x}\in\mathbb{R}^\mathcal{S}\mid \langle \mathbf{x}, L^{\pi_s,P_s} \rangle \le r^{\pi_s} \}.
\end{aligned}
\end{equation}
Then the value space can be expressed in terms of the half-spaces:
\begin{equation}
\label{eqn:vfspace}
    \mathcal{V}_P = \bigcap_{s\in\mathcal{S}}\, \bigcup_{\pi_s\in\Delta_\mathcal{A}} \mathcal{H}^{\pi_s,P_s}_{+} \cap \mathcal{H}^{\pi_s,P_s}_{-}.
\end{equation}
Recall that in~\cite{dadashi2019value} a convex polyhedron is defined as a finite intersection of half-spaces, and a polytope is a bounded finite union of convex polyhedra. So our goal is to get rid of this infinite union $\bigcup_{\pi_s\in\Delta_\mathcal{A}}$.

To this end, we first replace the inner union in Eqn.~\eqref{eqn:vfspace} with an intersection of two unions, as illustrated in Figure~\ref{fig:vf-space}(c) and formally stated in the following lemma.
\begin{restatable}{lemma}{vfspacetwo}
    Consider a policy $\pi$ and a transition dynamic $P$, we have for all states $s\in\mathcal{S}$,
\begin{multline}
  \bigcup_{\pi_s\in\Delta_{\mathcal{A}}} \mathcal{H}^{\pi_s,P_s}_{+}\,\cap\, \mathcal{H}^{\pi_s,P_s}_{-}
  = \\
  \left[
  \bigcup_{\pi_s\in\Delta_{\mathcal{A}}} \mathcal{H}^{\pi_s,P_s}_{+}\right] \cap \left[\bigcup_{\pi_s\in\Delta_{\mathcal{A}}} \mathcal{H}^{\pi_s,P_s}_{-}\right].
\end{multline}
\label{lem:vf-space-2}
\end{restatable}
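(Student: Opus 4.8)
The plan is to prove the two set inclusions separately. The inclusion ``$\subseteq$'' is immediate: if a point $\mathbf{x}$ lies in $\mathcal{H}^{\pi_s,P_s}_{+}\cap\mathcal{H}^{\pi_s,P_s}_{-}$ for some particular $\pi_s\in\Delta_\mathcal{A}$, then in particular $\mathbf{x}\in\bigcup_{\nu\in\Delta_\mathcal{A}}\mathcal{H}^{\nu,P_s}_{+}$ and $\mathbf{x}\in\bigcup_{\nu\in\Delta_\mathcal{A}}\mathcal{H}^{\nu,P_s}_{-}$, so $\mathbf{x}$ is in the right-hand intersection; taking the union over $\pi_s$ on the left preserves this.

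The real work is the reverse inclusion ``$\supseteq$''. Fix a state $s$ and a point $\mathbf{x}$ in the right-hand set, so there exist $\pi_s^{+},\pi_s^{-}\in\Delta_\mathcal{A}$ with $\langle\mathbf{x},L^{\pi_s^{+},P_s}\rangle\ge r^{\pi_s^{+}}$ and $\langle\mathbf{x},L^{\pi_s^{-},P_s}\rangle\le r^{\pi_s^{-}}$. It suffices to produce a single $\pi_s\in\Delta_\mathcal{A}$ with $\langle\mathbf{x},L^{\pi_s,P_s}\rangle = r^{\pi_s}$, because then $\mathbf{x}\in H^{\pi_s,P_s}=\mathcal{H}^{\pi_s,P_s}_{+}\cap\mathcal{H}^{\pi_s,P_s}_{-}$, which puts $\mathbf{x}$ in the left-hand union. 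To find such a $\pi_s$, I would introduce the scalar function $g(\nu)\coloneqq\langle\mathbf{x},L^{\nu,P_s}\rangle - r^{\nu}$ for $\nu\in\Delta_\mathcal{A}$. Since $L^{\nu,P_s}=\mathbf{e}_s-\gamma P_s\nu$ and $r^{\nu}=r_s^\top\nu$ are both affine in $\nu$, the map $g$ is affine, hence continuous; and by hypothesis $g(\pi_s^{+})\ge 0\ge g(\pi_s^{-})$. Because $\Delta_\mathcal{A}$ is convex, the segment $\nu_\lambda\coloneqq\lambda\pi_s^{+}+(1-\lambda)\pi_s^{-}$ stays in $\Delta_\mathcal{A}$ for $\lambda\in[0,1]$, and $\lambda\mapsto g(\nu_\lambda)$ is a continuous real-valued function with $g(\nu_1)\ge 0\ge g(\nu_0)$. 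The intermediate value theorem then gives $\lambda^\star\in[0,1]$ with $g(\nu_{\lambda^\star})=0$, and setting $\pi_s\coloneqq\nu_{\lambda^\star}$ completes the argument.

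I do not expect a serious obstacle here: the entire content is the observation that $\nu\mapsto L^{\nu,P_s}$ and $\nu\mapsto r^{\nu}$ are affine and that $\Delta_\mathcal{A}$ is convex, so the sign of $g$ can be driven to zero by interpolation. The only point that needs a little care is that the two witnessing policies $\pi_s^{+}$ and $\pi_s^{-}$ are a priori unrelated, and it is precisely the convexity of the simplex that lets us interpolate between them while remaining feasible; this is also the step to state carefully because it is exactly the analogue we will need (with $\mathcal{P}_s$ in place of $\Delta_\mathcal{A}$) when passing to the robust setting.
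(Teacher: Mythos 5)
Your proof is correct and follows essentially the same route as the paper: both arguments fix the two witnessing policies $\pi_s^{+},\pi_s^{-}$ and interpolate along the segment between them in $\Delta_\mathcal{A}$, using that $\nu\mapsto\langle\mathbf{x},L^{\nu,P_s}\rangle-r^{\nu}$ is affine to find a policy whose hyperplane contains $\mathbf{x}$. The only cosmetic difference is that the paper writes the interpolation weight explicitly as $\beta/(\alpha+\beta)$ (after disposing of the boundary cases $\alpha=0$ or $\beta=0$), whereas you obtain the same zero of the affine function via the intermediate value theorem.
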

Although these two unions are still taken over infinite set $\Delta_\mathcal{A}$, the following Lemma~\ref{lem:vf-polyhedron} shows that they actually coincide with the finite unions of half-spaces that correspond to $d_{s,a}$ (\ie, deterministic $\pi_s$). We can get an intuition by comparing Figure~\ref{fig:vf-space}(c) and Figure~\ref{fig:vf-space}(d).
\begin{restatable}{lemma}{vfpolyhedronlem}
    Consider a policy $\pi$ and a transition dynamic $P$, we have for all states $s\in\mathcal{S}$,
\begin{equation}
  \bigcup_{\pi_s\in\Delta_{\mathcal{A}}} \mathcal{H}^{\pi_s,P_s}_{\delta} =
  \bigcup_{a\in\mathcal{A}} \mathcal{H}^{d_{s,a},P_s}_{\delta}, \quad \forall\,\delta\in\{+,-\}.
\end{equation}
\label{lem:vf-polyhedron}
\end{restatable}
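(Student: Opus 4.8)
The plan is to exploit that both the normal vector $L^{\pi_s,P_s}$ and the scalar offset $r^{\pi_s}$ depend \emph{linearly} on $\pi_s$ as $\pi_s$ ranges over the simplex $\Delta_\mathcal{A}$, so that membership of a point in the infinite union of half-spaces reduces to optimizing a single linear functional over $\Delta_\mathcal{A}$, whose optimum is attained at a vertex $d_{s,a}$.

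First I would record the identities $L^{\pi_s,P_s} = \sum_{a\in\mathcal{A}} \pi_{s,a}\, L^{d_{s,a},P_s}$ and $r^{\pi_s} = \sum_{a\in\mathcal{A}} \pi_{s,a}\, r^{d_{s,a}}$, which follow immediately from $\pi_s\in\Delta_\mathcal{A}$ (hence $\sum_a \pi_{s,a}=1$) together with the definitions $L^{\pi_s,P_s} = \mathbf{e}_s - \gamma P_s\pi_s$ and $r^{\pi_s} = r_s^\top\pi_s$. Consequently, for any fixed $\mathbf{x}\in\mathbb{R}^\mathcal{S}$ the scalar $g_{\mathbf{x}}(\pi_s) \coloneqq \langle \mathbf{x}, L^{\pi_s,P_s}\rangle - r^{\pi_s}$ is the convex combination $g_{\mathbf{x}}(\pi_s) = \sum_{a\in\mathcal{A}} \pi_{s,a}\, g_{\mathbf{x}}(d_{s,a})$, so that $\min_{a\in\mathcal{A}} g_{\mathbf{x}}(d_{s,a}) \le g_{\mathbf{x}}(\pi_s) \le \max_{a\in\mathcal{A}} g_{\mathbf{x}}(d_{s,a})$ for every $\pi_s\in\Delta_\mathcal{A}$, with both bounds attained by taking $\pi_s = d_{s,a}$ for the appropriate $a$ (a linear function on a polytope attains its extrema at vertices).

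Next I would translate the unions into these inequalities. For $\delta=+$: $\mathbf{x}\in\bigcup_{\pi_s\in\Delta_{\mathcal{A}}}\mathcal{H}^{\pi_s,P_s}_{+}$ holds iff there is some $\pi_s$ with $g_{\mathbf{x}}(\pi_s)\ge 0$, i.e. iff $\max_{\pi_s\in\Delta_{\mathcal{A}}} g_{\mathbf{x}}(\pi_s)\ge 0$; by the previous step this maximum equals $\max_{a\in\mathcal{A}} g_{\mathbf{x}}(d_{s,a})$, so the condition is equivalent to the existence of $a\in\mathcal{A}$ with $g_{\mathbf{x}}(d_{s,a})\ge 0$, that is $\mathbf{x}\in\bigcup_{a\in\mathcal{A}}\mathcal{H}^{d_{s,a},P_s}_{+}$. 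The case $\delta=-$ is identical with $\max$ replaced by $\min$ and the inequality reversed. Since $s$ and $\pi$ were arbitrary, this gives both set equalities. (One may also note the inclusion $\bigcup_{a}\mathcal{H}^{d_{s,a},P_s}_{\delta}\subseteq\bigcup_{\pi_s}\mathcal{H}^{\pi_s,P_s}_{\delta}$ is trivial because each $d_{s,a}\in\Delta_{\mathcal{A}}$, so only the reverse inclusion needs the argument above.)

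I do not expect a genuine obstacle. The only subtle point to state carefully is that $L^{\pi_s,P_s}$ and $r^{\pi_s}$ are affine in $\pi_s$, with the constant term absorbed precisely because $\pi_s$ is constrained to the simplex, which is what makes the vertex-optimality step legitimate; everything else is bookkeeping.
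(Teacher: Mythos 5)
Your proposal is correct and follows essentially the same route as the paper's proof: both rest on writing $\pi_s$ as a convex combination of the vertices $d_{s,a}$, observing that $\langle \mathbf{x}, L^{\pi_s,P_s}\rangle - r^{\pi_s}$ is then the corresponding convex combination of the vertex quantities, and concluding that the sign condition must hold at some vertex (your ``linear functional attains its extrema at a vertex'' phrasing is just a repackaging of the paper's ``nonnegative weights, so at least one term has the right sign'' step). No gaps.
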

Finally, putting everything together, we are able to represent the value space with finite union and intersection operations on half-spaces, as stated in Theorem~\ref{thm:vf-polyhedron} and illustrated in Figure~\ref{fig:vf-space}(d). Using the distributive law of sets, we can see that the value space $\mathcal{V}^P$ immediately satisfies the definition of polyhedron. Since $\mathcal{V}^P$ is bounded, we can conclude that $\mathcal{V}^P$ is a polytope.
\begin{restatable}{theorem}{vfpolyhedron}
Consider a transition dynamic $P$, the value space $\mathcal{V}^P$ can be represented as
\begin{equation}
    \begin{aligned}
    \mathcal{V}^{P} & = \bigcap_{s\in \mathcal{S}}\left[ \left[\bigcup_{a\in\mathcal{A}} \mathcal{H}^{d_{s,a},P_s}_{+}\right] \cap \left[\bigcup_{a\in\mathcal{A}} \mathcal{H}^{d_{s,a},P_s}_{-}\right] \right] \\
    & = \bigcup_{\mathbf{a}\in\mathcal{A}^{\mathcal{S}}} \bigcup_{\mathbf{a}'\in\mathcal{A}^{\mathcal{S}}} \bigcap_{s\in\mathcal{S}}
    \left[\mathcal{H}^{d_{s,\mathbf{a}_s},P_s}_{+} \cap
    \mathcal{H}^{d_{s,\mathbf{a}'_s},P_s}_{-}
    \right]
\end{aligned}
\end{equation}
where $\mathbf{a}=(\mathbf{a}_s)_{s\in\mathcal{S}}$, $\mathbf{a}'=(\mathbf{a}'_s)_{s\in\mathcal{S}}$, and $\mathbf{a}_s, \mathbf{a}'_s\in\mathcal{A}$.
\label{thm:vf-polyhedron}
\end{restatable}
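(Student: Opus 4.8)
The plan is to chain together the lemmas already established in this section and then finish with elementary set algebra. The starting point is Eqn.~\eqref{eqn:vfspace}, which expresses
$\mathcal{V}^P = \bigcap_{s\in\mathcal{S}} \bigcup_{\pi_s\in\Delta_\mathcal{A}} \bigl(\mathcal{H}^{\pi_s,P_s}_{+} \cap \mathcal{H}^{\pi_s,P_s}_{-}\bigr)$.
First I would apply Lemma~\ref{lem:vf-space-2} termwise, for each $s\in\mathcal{S}$, to rewrite the inner union as the intersection of two unions, $\bigl(\bigcup_{\pi_s}\mathcal{H}^{\pi_s,P_s}_{+}\bigr)\cap\bigl(\bigcup_{\pi_s}\mathcal{H}^{\pi_s,P_s}_{-}\bigr)$. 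Then I would invoke Lemma~\ref{lem:vf-polyhedron} with $\delta=+$ and with $\delta=-$ to replace each infinite union over $\Delta_\mathcal{A}$ by the finite union over the deterministic vertices, i.e.\ $\bigcup_{\pi_s\in\Delta_\mathcal{A}}\mathcal{H}^{\pi_s,P_s}_{\delta}=\bigcup_{a\in\mathcal{A}}\mathcal{H}^{d_{s,a},P_s}_{\delta}$. Substituting these back into the expression for $\mathcal{V}^P$ yields exactly the first displayed equality of the theorem.

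For the second equality I would carry out the distributive manipulation. Abbreviate $U^{+}_s \coloneqq \bigcup_{a\in\mathcal{A}}\mathcal{H}^{d_{s,a},P_s}_{+}$ and $U^{-}_s \coloneqq \bigcup_{a\in\mathcal{A}}\mathcal{H}^{d_{s,a},P_s}_{-}$, so that the first line reads $\mathcal{V}^P=\bigcap_{s}(U^{+}_s\cap U^{-}_s)$. By associativity/commutativity of intersection this equals $\bigl(\bigcap_{s}U^{+}_s\bigr)\cap\bigl(\bigcap_{s}U^{-}_s\bigr)$. Applying the set-theoretic distributive law $\bigcap_{s\in\mathcal{S}}\bigcup_{a\in\mathcal{A}}C_{s,a}=\bigcup_{\mathbf{a}\in\mathcal{A}^{\mathcal{S}}}\bigcap_{s\in\mathcal{S}}C_{s,\mathbf{a}_s}$ to $\bigcap_{s}U^{+}_s$ with index vector $\mathbf{a}$, and independently to $\bigcap_{s}U^{-}_s$ with index vector $\mathbf{a}'$, then distributing the outermost intersection over the resulting two unions and merging the two per-state intersections, produces $\bigcup_{\mathbf{a}\in\mathcal{A}^{\mathcal{S}}}\bigcup_{\mathbf{a}'\in\mathcal{A}^{\mathcal{S}}}\bigcap_{s\in\mathcal{S}}\bigl(\mathcal{H}^{d_{s,\mathbf{a}_s},P_s}_{+}\cap\mathcal{H}^{d_{s,\mathbf{a}'_s},P_s}_{-}\bigr)$, which is the claimed form.

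To justify the remark that $\mathcal{V}^P$ is a polytope, I would observe that each $\mathcal{H}^{d_{s,a},P_s}_{\delta}$ is by definition a closed half-space, so every term $\bigcap_{s\in\mathcal{S}}\bigl(\mathcal{H}^{d_{s,\mathbf{a}_s},P_s}_{+}\cap\mathcal{H}^{d_{s,\mathbf{a}'_s},P_s}_{-}\bigr)$ is a finite intersection of half-spaces, hence a convex polyhedron in the sense of~\cite{dadashi2019value}; the second equality then exhibits $\mathcal{V}^P$ as a finite union of at most $A^{S}\!\cdot A^{S}$ such polyhedra. Boundedness follows because every coordinate of any value function obeys $\lvert V^{\pi,P}(s)\rvert\le \max_{s,a}\lvert r_{s,a}\rvert/(1-\gamma)$, so $\mathcal{V}^P$ lies in a ball; a bounded finite union of convex polyhedra is a polytope.

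As for the main obstacle: once Lemmas~\ref{lem:vf-space-2} and~\ref{lem:vf-polyhedron} are in hand there is essentially no deep step left, and the whole content of the theorem reduces to careful bookkeeping with the distributive law. The one place where care is genuinely needed is tracking the two \emph{independent} index vectors $\mathbf{a}$ and $\mathbf{a}'$ through the distribution, so that the final object is a double union over $\mathcal{A}^{\mathcal{S}}\times\mathcal{A}^{\mathcal{S}}$ rather than a single diagonal copy; conflating them would incorrectly shrink the collection of polyhedral pieces.
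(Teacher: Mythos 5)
Your proposal is correct and follows essentially the same route as the paper: the first equality by chaining Lemma~\ref{lem:vf-space} (via Eqn.~\eqref{eqn:vfspace}) with Lemma~\ref{lem:vf-space-2} and Lemma~\ref{lem:vf-polyhedron}, and the second by the distributive law of sets with two independent index vectors $\mathbf{a},\mathbf{a}'$. Your extra remarks on boundedness and on not conflating $\mathbf{a}$ with $\mathbf{a}'$ simply spell out details the paper leaves implicit.
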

Compared to the prior approach~\cite{dadashi2019value}, our work gives an explicit form of the value function polytope, showing how the value polytope is formed (\cf the proof of Proposition~1 in~\cite{dadashi2019value}).

\begin{figure}[t]
    \centering
    \includegraphics[width=\linewidth]{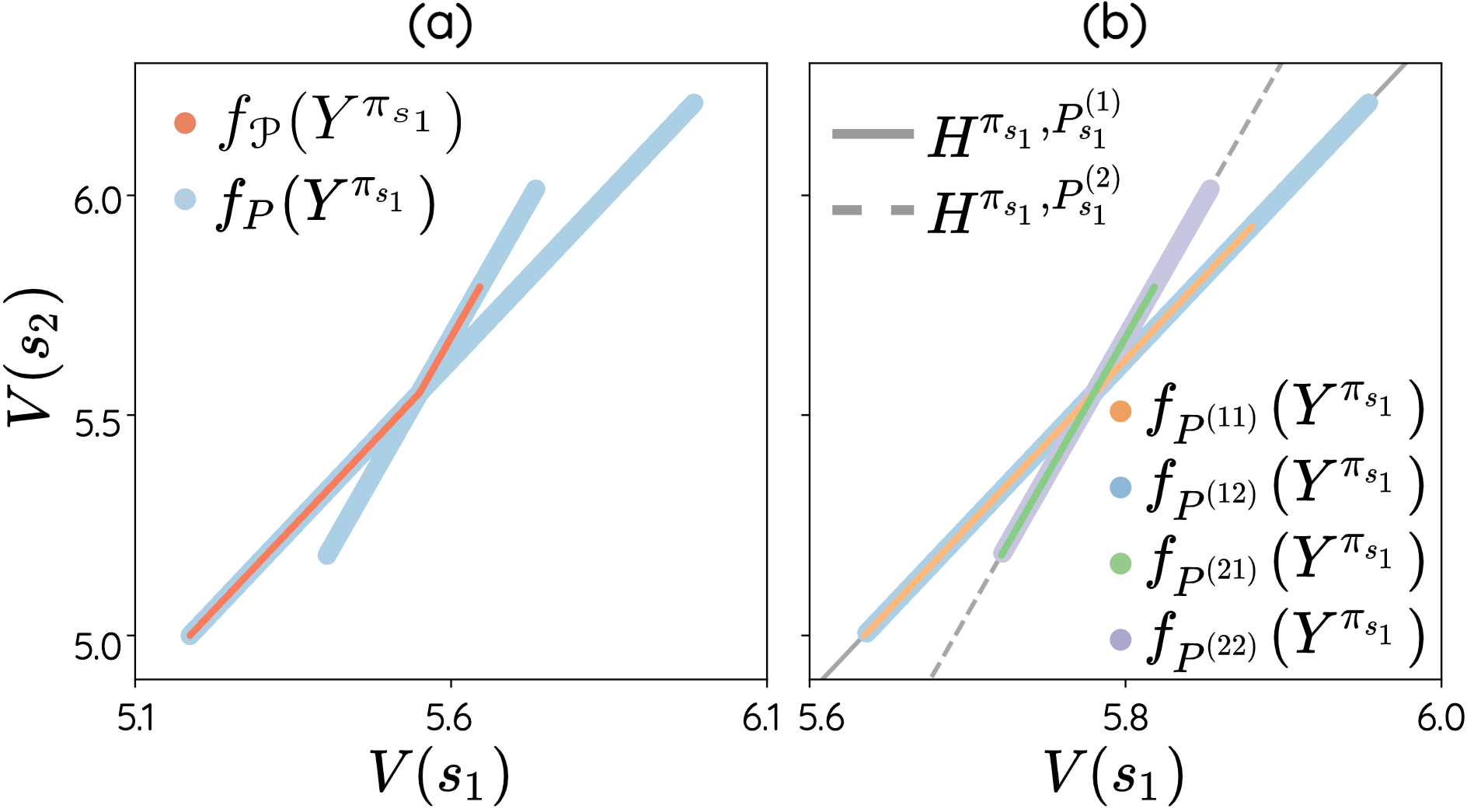}
    \caption{Visualization of the robust value functions for a 2-state 2-action RMDP with an $s$-rectangular uncertainty set. We consider $\mathcal{P}=\mathcal{P}_{s_1}\times\mathcal{P}_{s_2}$ with $\mathcal{P}_{s_1}=\{P_{s_1}^{(1)},P_{s_1}^{(2)}\}$ and $\mathcal{P}_{s_2}=\{P_{s_2}^{(1)},P_{s_2}^{(2)}\}$. Denote $P^{(ij)}\in\mathcal{P}$ such that $P^{(ij)}_{s_1}=P_{s_1}^{(i)}$ and $P^{(ij)}_{s_2}=P_{s_2}^{(j)}$. We plot with different widths to differentiate overlapping lines. \textbf{(a)} For the same set of policies $Y^{\pi_{s_1}}$, the set of non-robust value functions $f_P(Y^{\pi_{s_1}})$ for different $P\in\mathcal{P}$ and the set of robust value functions $f_\mathcal{P}(Y^{\pi_{s_1}})$ are plotted. \textbf{(b)} For different $P\in\mathcal{P}$, $f_P(Y^{\pi_{s_1}})$ are highlighted in different colors. The hyperplanes corresponding to different $P_{s_1}\in\mathcal{P}_{s_1}$ are plotted.}
     \label{fig:robust-v-example}
\end{figure}

\section{Value Space Geometry of RMDPs}
\label{sec:main-sec}
\subsection{Policy Agreement and the Conic Hypersurface}
\label{sec:conic-hypersurface}

Recall that in Section~\ref{sec:revisit}, our new perspective connects the value space to the hyperplanes where $f_P(Y^{\pi_s})$ lies. Thus in order to characterize the robust value space, we start with studying the geometric properties of robust value functions for all policies that agree on one state, \ie, $f_\mathcal{P}(Y^{\pi_s})$. Unlike the non-robust case, $f_\mathcal{P}(Y^{\pi_s})$ may not lie in a hyperplane, as shown in Figure~\ref{fig:robust-v-example}(a). Nevertheless, it looks like $f_\mathcal{P}(Y^{\pi_s})$ still lies in a hypersurface (also see the example for $\lvert\mathcal{S}\rvert=3$ in the supplementary). In what follows, we are going to characterize this hypersurface.

First, as shown in Figure~\ref{fig:robust-v-example}(b), for different $P\in\mathcal{P}$ that share the same $P_s$, their $f_P(Y^{\pi_s})$ lie in the same hyperplane $H^{\pi_s, P_s}$. Comparing Figure~\ref{fig:robust-v-example}(a) and (b), it seems that the robust value functions $f_{\mathcal{P}}(Y^{\pi_s})$ always lie in the lower half-space $\mathcal{H}^{\pi_s, P_s}_{-}$ for different $P\in\mathcal{P}$.
On the other hand, from Eqn.~\eqref{eqn:s-rect-exist}, we know that there exists $P_s\in\mathcal{P}_s$ such that $V^{\pi,\mathcal{P}}$ lies in the hyperplane $H^{\pi_s, P_s}$.
Putting it together, we have the following lemma about $f_{\mathcal{P}}(Y^{\pi_s})$.
\begin{restatable}{lemma}{robustvboundary}
    Consider an $s$-rectangular uncertainty set $\mathcal{P}$ and a policy $\pi$, we have for all states $s\in\mathcal{S}$,
\begin{equation}
    f_\mathcal{P}(Y^{\pi_s}) \subseteq \left[ \bigcap_{P_s\in\mathcal{P}_s} \mathcal{H}^{\pi_s, P_s}_{-} \right] \cap \left[ \bigcup_{P_s\in\mathcal{P}_s} H^{\pi_s, P_s} \right].
\label{eqn:robust-halfspaces}
\end{equation}
\label{lem:robust-v-boundary}
\end{restatable}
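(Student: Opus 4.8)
The plan is to establish the two set inclusions separately and then intersect them. Writing $f_\mathcal{P}(Y^{\pi_s})=\{V^{\pi',\mathcal{P}}\mid \pi'_s=\pi_s\}$, I fix an arbitrary policy $\pi'$ with $\pi'_s=\pi_s$ and show that (i) $V^{\pi',\mathcal{P}}\in H^{\pi_s,P_s}$ for \emph{some} $P_s\in\mathcal{P}_s$, and (ii) $V^{\pi',\mathcal{P}}\in\mathcal{H}^{\pi_s,P_s}_{-}$ for \emph{every} $P_s\in\mathcal{P}_s$. Taking the union over $\pi'$ in (i) and the intersection over $P_s$ in (ii) then gives the stated inclusion.

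For (i), I would invoke Eqn.~\eqref{eqn:s-rect-exist}: by $s$-rectangularity there is a single $\hat P\in\mathcal{P}$ with $V^{\pi',\hat P}=V^{\pi',\mathcal{P}}$, and necessarily $\hat P_s\in\mathcal{P}_s$. Since $\pi'_s=\pi_s$ we have $Y^{\pi'_s}=Y^{\pi_s}$ and $H^{\pi'_s,\hat P_s}=H^{\pi_s,\hat P_s}$, so the non-robust fact $f_{\hat P}(Y^{\pi_s})\subseteq H^{\pi_s,\hat P_s}$ (Lemma~3 of \cite{dadashi2019value}, recalled in Section~\ref{sec:revisit}; equivalently, the $s$-th row of the Bellman equation $V^{\pi',\hat P}=r^{\pi'}+\gamma\hat P^{\pi'}V^{\pi',\hat P}$) gives $V^{\pi',\mathcal{P}}=V^{\pi',\hat P}\in H^{\pi_s,\hat P_s}\subseteq\bigcup_{P_s\in\mathcal{P}_s}H^{\pi_s,P_s}$.

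For (ii), the clean route is the robust Bellman equation for $s$-rectangular uncertainty sets~\cite{iyengar2005robust,wiesemann2013robust}, which for the fixed policy $\pi'$ reads, in coordinate $s$, $V^{\pi',\mathcal{P}}(s)=\min_{P_s\in\mathcal{P}_s}\big(r^{\pi_s}+\gamma\langle P_s\pi_s,V^{\pi',\mathcal{P}}\rangle\big)$, where I used $\pi'_s=\pi_s$. Since a minimum is dominated by each of its terms, for every $P_s\in\mathcal{P}_s$ we get $V^{\pi',\mathcal{P}}(s)-\gamma\langle P_s\pi_s,V^{\pi',\mathcal{P}}\rangle\le r^{\pi_s}$, which rearranges (using $L^{\pi_s,P_s}=\mathbf{e}_s-\gamma P_s\pi_s$) to $\langle L^{\pi_s,P_s},V^{\pi',\mathcal{P}}\rangle\le r^{\pi_s}$, that is, $V^{\pi',\mathcal{P}}\in\mathcal{H}^{\pi_s,P_s}_{-}$. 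If one prefers to stay within the machinery already set up here and use only Eqn.~\eqref{eqn:s-rect-exist}, the same inequality can be obtained by a one-state perturbation: take $\hat P$ as in (i), overwrite its $s$-block by an arbitrary $P_s^{\circ}\in\mathcal{P}_s$ to get $\tilde P\in\mathcal{P}$, note $V^{\pi',\tilde P}\ge V^{\pi',\mathcal{P}}=V^{\pi',\hat P}$ componentwise, and verify that $\delta:=V^{\pi',\tilde P}-V^{\pi',\hat P}$ solves $(I-\gamma Q)\delta=c\,\mathbf{e}_s$ for a stochastic matrix $Q$ and $c=r^{\pi_s}+\gamma\langle P_s^{\circ}\pi_s,V^{\pi',\mathcal{P}}\rangle-V^{\pi',\mathcal{P}}(s)$; since $(I-\gamma Q)^{-1}\mathbf{e}_s\ge\mathbf{0}$ with $s$-th entry at least $1$, $\delta\ge\mathbf{0}$ forces $c\ge0$, which is precisely $V^{\pi',\mathcal{P}}\in\mathcal{H}^{\pi_s,P_s^{\circ}}_{-}$.

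The step I expect to be the main obstacle is (ii): showing $V^{\pi',\mathcal{P}}$ sits below \emph{every} hyperplane $H^{\pi_s,P_s}$, not only the one associated with the worst-case $\hat P_s$. This is exactly where $s$-rectangularity is indispensable — it is what makes the adversary's per-state choice decouple, so that the robust value is characterized as the fixed point of a \emph{pointwise} minimum of Bellman backups; without rectangularity one can only place $V^{\pi',\mathcal{P}}$ on the hyperplane of the active transition, not below all of them. The remaining bookkeeping (unfolding $L^{\pi_s,P_s}$, $r^{\pi_s}$, $\mathcal{H}^{\pi_s,P_s}_{\pm}$, and checking $\hat P_s\in\mathcal{P}_s$ from $s$-rectangularity) is routine.
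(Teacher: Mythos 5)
Your proposal is correct, and its skeleton matches the paper's: both halves of the containment come from $s$-rectangularity via Eqn.~\eqref{eqn:s-rect-exist} --- the "union of hyperplanes" part from the existence of a single worst-case $\hat P$ with $V^{\pi',\hat P}=V^{\pi',\mathcal{P}}$ (identical to the paper), and the "below every hyperplane" part from showing $\langle V^{\pi',\mathcal{P}},L^{\pi_s,P_s}\rangle\le r^{\pi_s}$ for all $P_s\in\mathcal{P}_s$. Where you differ is in how that second part is justified. The paper works only from Eqn.~\eqref{eqn:s-rect-exist}: it writes $V^{\pi',P_\dagger}-V^{\pi',P}=(I-\gamma P^{\pi'})^{-1}\gamma(P^{\pi'}_\dagger-P^{\pi'})V^{\pi',P_\dagger}$ and concludes $\gamma(P^{\pi'}_\dagger-P^{\pi'})V^{\pi',P_\dagger}\le 0$ from $V^{\pi',P_\dagger}\le V^{\pi',P}$ and nonnegativity of $(I-\gamma P^{\pi'})^{-1}$; note that this inference, as a bare matrix fact ($M\ge 0$, $My\le 0$ $\Rightarrow$ $y\le 0$), is not valid for arbitrary $y$, and making it rigorous requires exactly the per-state decoupling you spell out. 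Your primary route instead imports the robust Bellman fixed-point equation for a fixed policy under $s$-rectangularity, from which the inequality is immediate since a minimum is dominated by each of its terms; this is cleaner but leans on an external standard result rather than the paper's self-contained Eqn.~\eqref{eqn:s-rect-exist}. Your alternative route --- replacing only the $s$-block of $\hat P$ by an arbitrary $P_s^{\circ}\in\mathcal{P}_s$, noting the resulting value difference solves $(I-\gamma Q)\delta=c\,\mathbf{e}_s$ with $\delta\ge\mathbf{0}$ and $[(I-\gamma Q)^{-1}\mathbf{e}_s]_s\ge 1$, hence $c\ge 0$ --- is essentially a more careful version of the paper's computation, and in fact supplies the missing step; so your write-up is, if anything, tighter than the paper's. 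Your closing remark about where $s$-rectangularity is indispensable is accurate.
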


\begin{figure}[t]
    \centering
    \includegraphics[width=\linewidth]{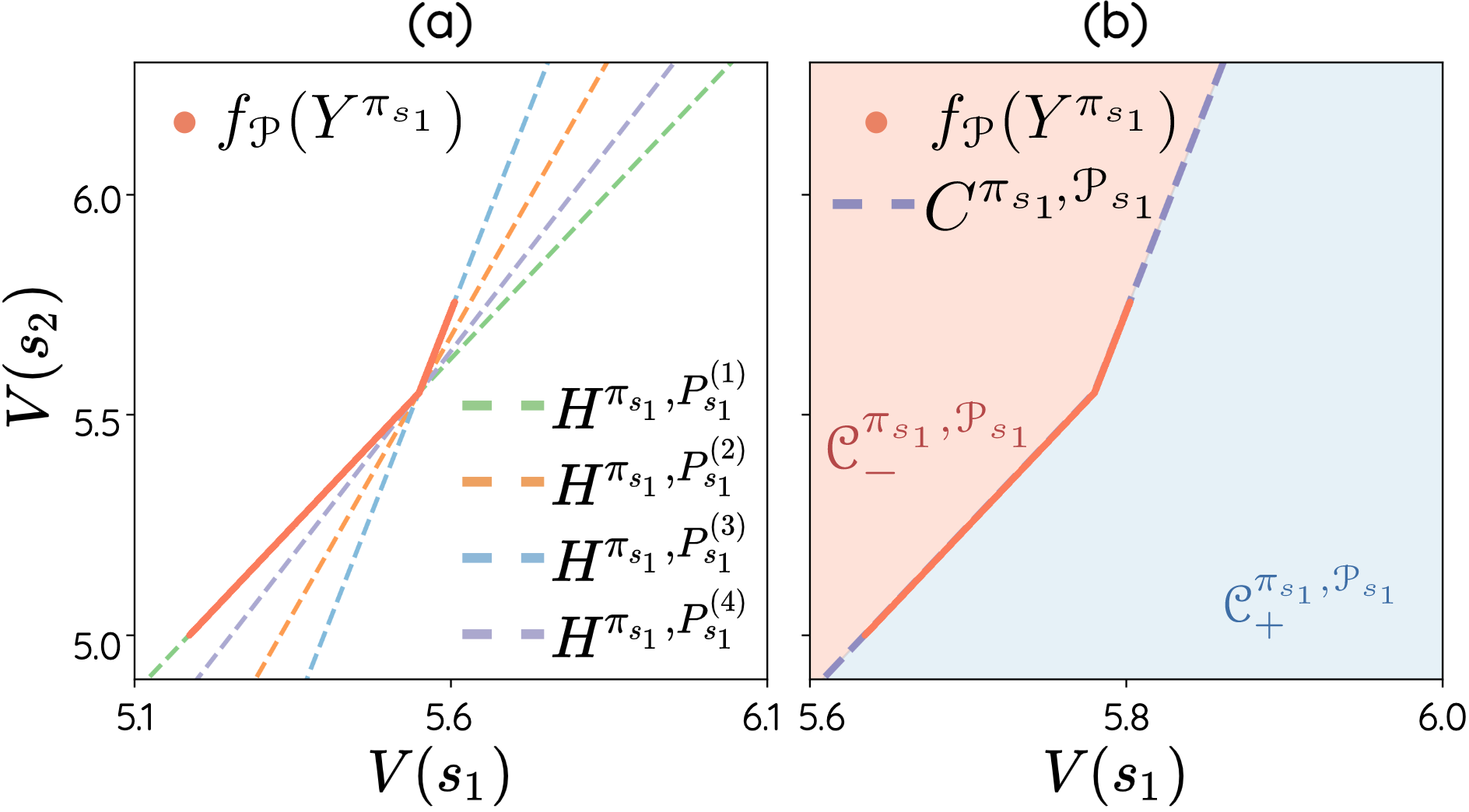}
    \caption{\textbf{(a)} For different $P_{s}\in\mathcal{P}_{s}$, the hyperplanes $H^{\pi_{s}, P_{s}}$ intersect at one point. \textbf{(b)} Illustration of the conic hypersurface in which $f_{\mathcal{P}}(Y^{\pi_s})$ lies.}
     \label{fig:conic-hypersurface}
\end{figure}

Note that the right hand side (RHS) of above Eqn.~\eqref{eqn:robust-halfspaces} is essentially the boundary of the intersection of half-spaces $\bigcap_{P_s\in\mathcal{P}_s} \mathcal{H}^{\pi_s, P_s}_{-}$. To further characterize the geometry, we need to know how these half-spaces intersect (equivalently how the hyperplanes intersect). One interesting observation is that when $\mathcal{P}_s$ contains more then 2 elements, the hyperplanes still intersect at one point, as illustrated in Figure~\ref{fig:conic-hypersurface}(a). The following lemma states this property and also gives the intersecting point.
\begin{restatable}{lemma}{intersect}
    Consider an $s$-rectangular uncertainty set $\mathcal{P}$ and a policy $\pi$, we have for all states $s\in\mathcal{S}$,
\begin{equation}
    \frac{r^{\pi_s}}{1-\gamma}\mathbf{1}\in \bigcap_{P_s\in\mathcal{P}_s} H^{\pi_s, P_s}.
\end{equation}
\label{lem:intersect}
\end{restatable}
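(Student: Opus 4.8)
The plan is to check membership directly against the definition of $H^{\pi_s,P_s}$; the computation is short enough that it essentially \emph{is} the proof. Fix a state $s$ and an arbitrary transition component $P_s\in\mathcal{P}_s$. By definition of the hyperplane, a point $\mathbf{x}$ lies in $H^{\pi_s,P_s}$ precisely when $\langle\mathbf{x},L^{\pi_s,P_s}\rangle=r^{\pi_s}$, with $L^{\pi_s,P_s}=\mathbf{e}_s-\gamma P^{\pi_s}$. So I would simply evaluate this inner product at $\mathbf{x}=\frac{r^{\pi_s}}{1-\gamma}\mathbf{1}$ (which is well defined since $\gamma\in[0,1)$) and verify it equals $r^{\pi_s}$.

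The one fact to invoke is that $P^{\pi_s}=P_s\pi_s=\sum_{a\in\mathcal{A}}\pi_{s,a}P_{s,a}$ is a convex combination of the probability vectors $P_{s,a}\in\Delta_\mathcal{S}$, hence itself lies in $\Delta_\mathcal{S}$; therefore $\langle\mathbf{1},P^{\pi_s}\rangle=1$. Combined with $\langle\mathbf{1},\mathbf{e}_s\rangle=1$, this gives
\[
\Big\langle \frac{r^{\pi_s}}{1-\gamma}\mathbf{1},\,\mathbf{e}_s-\gamma P^{\pi_s}\Big\rangle
=\frac{r^{\pi_s}}{1-\gamma}\big(1-\gamma\big)=r^{\pi_s}.
\]
Since $P_s\in\mathcal{P}_s$ was arbitrary, $\frac{r^{\pi_s}}{1-\gamma}\mathbf{1}$ lies in every $H^{\pi_s,P_s}$ and hence in $\bigcap_{P_s\in\mathcal{P}_s}H^{\pi_s,P_s}$, which is the claim.

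There is no real obstacle here; the only thing worth highlighting is \emph{why} the common point is independent of $P_s$: the normal vector $L^{\pi_s,P_s}$ varies with $P_s$, but its component along the all-ones direction is always $1-\gamma$ because $P^{\pi_s}$ sums to one, while the offset $r^{\pi_s}$ does not involve $P_s$ at all. Equivalently, $\frac{r^{\pi_s}}{1-\gamma}\mathbf{1}$ is the constant value vector solving the scalar Bellman identity $v=r^{\pi_s}+\gamma v$ at state $s$, regardless of where the transitions actually lead; this is the geometric content of Figure~\ref{fig:conic-hypersurface}(a) and is what makes the intersection in Lemma~\ref{lem:robust-v-boundary} a single apex rather than a lower-dimensional flat.
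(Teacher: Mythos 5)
Your proof is correct and follows essentially the same route as the paper: both verify membership directly by computing $\langle\mathbf{1},L^{\pi_s,P_s}\rangle$, using that $P^{\pi_s}\in\Delta_\mathcal{S}$ and that the offset $r^{\pi_s}$ is independent of $P_s$. In fact your computation gives the correct value $\langle\mathbf{1},L^{\pi_s,P_s}\rangle=1-\gamma$, whereas the paper's proof states this quantity as $\tfrac{1}{1-\gamma}$, an evident typo that your version avoids.
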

Since the hyperplanes intersect at the same point, the intersection of the half-spaces will be a convex cone. We denote
\begin{equation}
    \begin{aligned}
    \mathcal{C}^{\pi_{s},\mathcal{P}_s}_{+} &= \{\mathbf{x}\in\mathbb{R}^\mathcal{S}\mid \langle \mathbf{x}, L^{\pi_s,P_s} \rangle \ge r^{\pi_s},\, \exists P_s\in\mathcal{P}_s \}, \\
    \mathcal{C}^{\pi_{s},\mathcal{P}_s}_{-} &= \{\mathbf{x}\in\mathbb{R}^\mathcal{S}\mid \langle \mathbf{x}, L^{\pi_s,P_s} \rangle \le r^{\pi_s},\, \forall P_s\in\mathcal{P}_s \}.
\end{aligned}
\label{eqn:conic-half-spaces}
\end{equation}
The following corollary characterizes the hypersurface that  $f_\mathcal{P}(Y^{\pi_s})$ lies in. Figure~\ref{fig:conic-hypersurface}(b) gives an illustration.
\begin{restatable}{corollary}{conicsurface}
Consider an $s$-rectangular uncertainty set $\mathcal{P}$ and a policy $\pi$, we have for all states $s\in\mathcal{S}$,
\begin{equation}
    f_\mathcal{P}(Y^{\pi_s}) \subseteq C^{\pi_s, \mathcal{P}_s}
\end{equation}
where $C^{\pi_s, \mathcal{P}_s}= \mathcal{C}^{\pi_s, \mathcal{P}_s}_{+} \cap \mathcal{C}^{\pi_s, \mathcal{P}_s}_{-}$ is a conic hypersurface.
\label{col:conic-surface}
\end{restatable}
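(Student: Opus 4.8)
The plan is to derive this corollary directly from Lemma~\ref{lem:robust-v-boundary} together with Lemma~\ref{lem:intersect}, reading off the geometric meaning of each piece. First I would observe that the two sets in the right-hand side of Eqn.~\eqref{eqn:robust-halfspaces} match, almost by unravelling definitions, the two conic sets in Eqn.~\eqref{eqn:conic-half-spaces}. Indeed, $\bigcap_{P_s\in\mathcal{P}_s}\mathcal{H}^{\pi_s,P_s}_{-}$ is exactly the set of $\mathbf{x}$ with $\langle\mathbf{x},L^{\pi_s,P_s}\rangle\le r^{\pi_s}$ for \emph{all} $P_s\in\mathcal{P}_s$, which is $\mathcal{C}^{\pi_s,\mathcal{P}_s}_{-}$; likewise $\bigcup_{P_s\in\mathcal{P}_s}H^{\pi_s,P_s}$ is the set of $\mathbf{x}$ for which there \emph{exists} $P_s\in\mathcal{P}_s$ with $\langle\mathbf{x},L^{\pi_s,P_s}\rangle=r^{\pi_s}$. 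I would then note $\bigcup_{P_s}H^{\pi_s,P_s}\subseteq \mathcal{C}^{\pi_s,\mathcal{P}_s}_{+}$, since a point on some hyperplane in particular satisfies $\langle\mathbf{x},L^{\pi_s,P_s}\rangle\ge r^{\pi_s}$ for that same $P_s$. Combining, the RHS of Eqn.~\eqref{eqn:robust-halfspaces} is contained in $\mathcal{C}^{\pi_s,\mathcal{P}_s}_{-}\cap\mathcal{C}^{\pi_s,\mathcal{P}_s}_{+}=C^{\pi_s,\mathcal{P}_s}$, and so $f_\mathcal{P}(Y^{\pi_s})\subseteq C^{\pi_s,\mathcal{P}_s}$ follows from Lemma~\ref{lem:robust-v-boundary}.

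The remaining work is to justify the terminology, i.e.\ that $C^{\pi_s,\mathcal{P}_s}$ is genuinely a ``conic hypersurface'' with apex $\frac{r^{\pi_s}}{1-\gamma}\mathbf{1}$. For this I would invoke Lemma~\ref{lem:intersect}: the point $\mathbf{v}_0\coloneqq\frac{r^{\pi_s}}{1-\gamma}\mathbf{1}$ lies on every hyperplane $H^{\pi_s,P_s}$, $P_s\in\mathcal{P}_s$, hence on the boundary of every half-space $\mathcal{H}^{\pi_s,P_s}_{\pm}$. Consequently each of $\mathcal{C}^{\pi_s,\mathcal{P}_s}_{+}$ and $\mathcal{C}^{\pi_s,\mathcal{P}_s}_{-}$ is invariant under the scaling $\mathbf{x}\mapsto \mathbf{v}_0+\lambda(\mathbf{x}-\mathbf{v}_0)$ for $\lambda\ge 0$: subtracting $\mathbf{v}_0$ makes every constraint $\langle\mathbf{x}-\mathbf{v}_0,L^{\pi_s,P_s}\rangle\lessgtr 0$ homogeneous, so the sets are cones with vertex $\mathbf{v}_0$. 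Their intersection $C^{\pi_s,\mathcal{P}_s}$ is therefore also a cone with vertex $\mathbf{v}_0$; and since $\mathcal{C}^{\pi_s,\mathcal{P}_s}_{-}$ is the intersection of closed half-spaces sharing a boundary point and $\mathcal{C}^{\pi_s,\mathcal{P}_s}_{+}$ is the ``boundary-or-above'' complement, $C^{\pi_s,\mathcal{P}_s}$ is precisely the boundary (the topological frontier) of the convex cone $\mathcal{C}^{\pi_s,\mathcal{P}_s}_{-}$, which is what we mean by a conic hypersurface.

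I expect the only subtle point to be making the phrase ``conic hypersurface'' precise enough to be a theorem rather than a picture; the set-containment part is essentially bookkeeping on the definitions in Eqn.~\eqref{eqn:conic-half-spaces} and a one-line application of the two cited lemmas. One should be mildly careful that $\mathcal{C}^{\pi_s,\mathcal{P}_s}_{+}$ as defined is a union of half-spaces (hence generally nonconvex), so ``cone'' here means star-shaped/scale-invariant about $\mathbf{v}_0$ rather than convex; stating it this way avoids overclaiming. If one wanted, the degenerate case $\mathbf{v}_0\in f_\mathcal{P}(Y^{\pi_s})$ (all hyperplanes identical, the ``cone'' collapsing to a hyperplane) can be mentioned as consistent with the statement, but it is not needed for the containment.
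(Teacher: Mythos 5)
Your proposal is correct and follows essentially the same route as the paper: read the containment off Lemma~\ref{lem:robust-v-boundary} by identifying $\bigcap_{P_s\in\mathcal{P}_s}\mathcal{H}^{\pi_s,P_s}_{-}$ with $\mathcal{C}^{\pi_s,\mathcal{P}_s}_{-}$ and $\bigcup_{P_s\in\mathcal{P}_s}H^{\pi_s,P_s}$ with (a subset of) $\mathcal{C}^{\pi_s,\mathcal{P}_s}_{+}$, then use Lemma~\ref{lem:intersect} to justify the conic-hypersurface interpretation. The only difference is cosmetic: the paper shows the right-hand side of Eqn.~\eqref{eqn:robust-halfspaces} is actually \emph{equal} to $C^{\pi_s,\mathcal{P}_s}$, while you prove just the inclusion, which is all the corollary asserts.
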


\subsection{The Robust Value Space}

\begin{figure}[t]
    \centering
    \includegraphics[width=\linewidth]{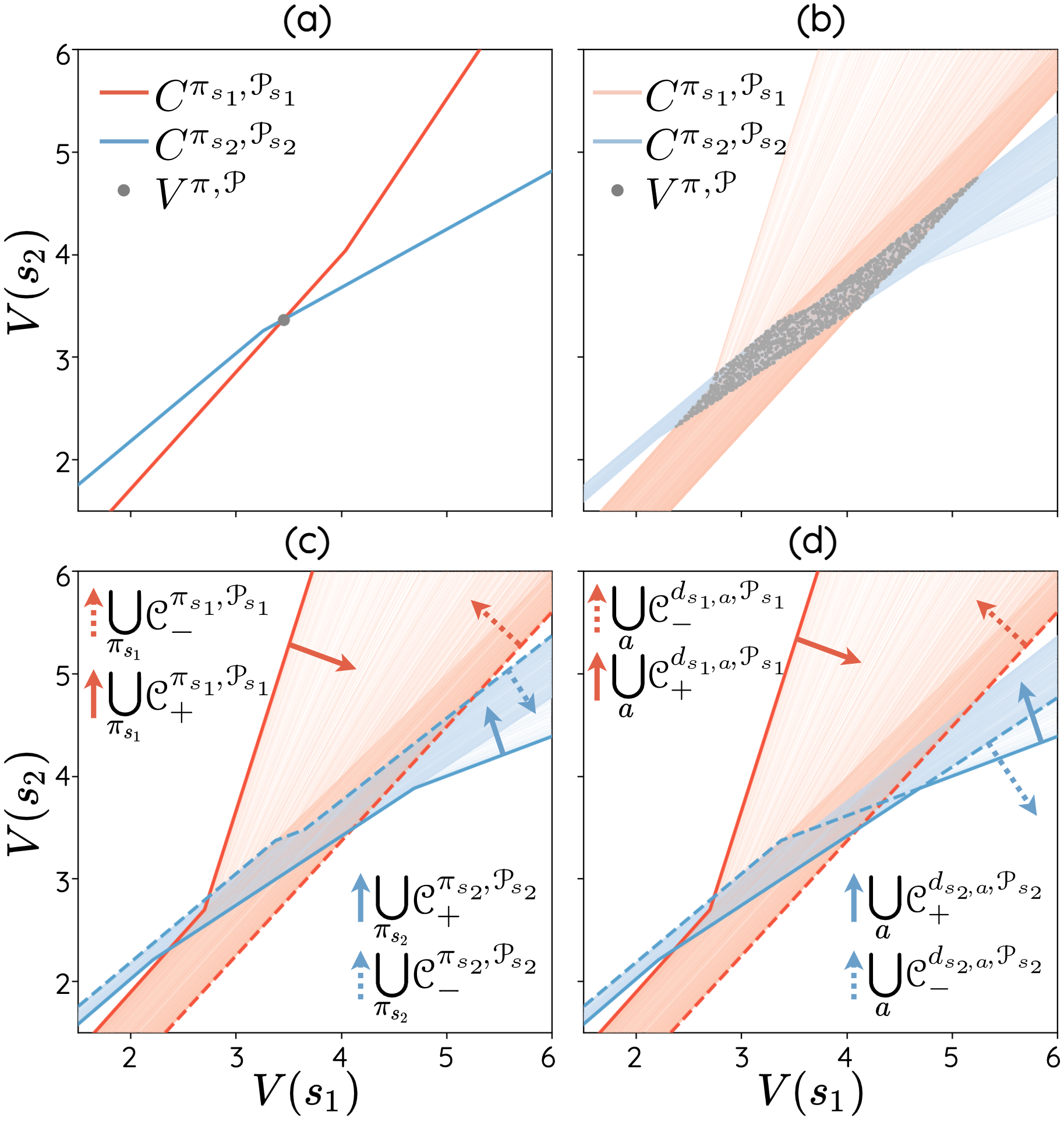}
    \caption{Visualizations of the robust value functions for a 2-state 2-action RMDP with an $s$-rectangular uncertainty set. \textbf{(a)} For a fixed $\pi$, the conic hypersurfaces $C^{\pi_s, \mathcal{P}_s}$ corresponding to different $s$ intersect at the robust value function $V^{\pi,\mathcal{P}}$. \textbf{(b)}
    For each policy $\pi$, the robust value function $V^{\pi,\mathcal{P}}$ is plotted, and the corresponding conic hypersurfaces $C^{\pi_{s}, \mathcal{P}_{s}}$ intersecting at $V^{\pi,\mathcal{P}}$ are plotted in different colors for different states.
    \textbf{(c)} For each state $s$, the union of $\mathcal{C}^{\pi_s, \mathcal{P}_s}_{+}$ and the union of $\mathcal{C}^{\pi_s, \mathcal{P}_s}_{-}$ over all $\pi_s\in\Delta_\mathcal{A}$ are highlighted respectively. \textbf{(d)} For each state $s$, the union of $\mathcal{C}^{d_{s,a}, \mathcal{P}_s}_{+}$ and the union of $\mathcal{C}^{d_{s,a}, \mathcal{P}_s}_{-}$ over all $a\in\mathcal{A}$ are highlighted respectively.}
     \label{fig:robust-vf-space}
\end{figure}

With the knowledge about the geometry of $f_\mathcal{P}(Y^{\pi_s})$, we are now ready to characterize the entire robust value space $\mathcal{V}^{\mathcal{P}}$. Similar to Section~\ref{sec:revisit}, we first connect the single robust value function to the intersection
of $S$ different conic hypersurfaces by the following lemma (see Figure~\ref{fig:robust-vf-space}(a) for an illustration).
\begin{restatable}{lemma}{robustsinglev}
    Consider an $s$-rectangular uncertainty set $\mathcal{P}$ and a policy $\pi$, we have
\begin{equation}
    \{V^{\pi, \mathcal{P}}\} = \bigcap_{s\in\mathcal{S}} C^{\pi_s, \mathcal{P}_s}.
\end{equation}
\label{lem:robust-single-v}
\end{restatable}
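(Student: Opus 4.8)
\textbf{Proof proposal for Lemma~\ref{lem:robust-single-v}.}

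The plan is to prove the two inclusions separately, leveraging the machinery already assembled. For the easy direction, $\{V^{\pi,\mathcal{P}}\} \subseteq \bigcap_{s\in\mathcal{S}} C^{\pi_s,\mathcal{P}_s}$, I would simply observe that $\pi \in Y^{\pi_s}$ for every state $s$, so $V^{\pi,\mathcal{P}} \in f_{\mathcal{P}}(Y^{\pi_s})$, and then apply Corollary~\ref{col:conic-surface} to conclude $V^{\pi,\mathcal{P}} \in C^{\pi_s,\mathcal{P}_s}$ for all $s$; intersecting over $s$ gives the inclusion.

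The substantive direction is $\bigcap_{s\in\mathcal{S}} C^{\pi_s,\mathcal{P}_s} \subseteq \{V^{\pi,\mathcal{P}}\}$. Suppose $\mathbf{x}$ lies in every $C^{\pi_s,\mathcal{P}_s} = \mathcal{C}^{\pi_s,\mathcal{P}_s}_{+} \cap \mathcal{C}^{\pi_s,\mathcal{P}_s}_{-}$. Unpacking the definitions in Eqn.~\eqref{eqn:conic-half-spaces}: from $\mathbf{x}\in\mathcal{C}^{\pi_s,\mathcal{P}_s}_{-}$ we get $\langle \mathbf{x}, L^{\pi_s,P_s}\rangle \le r^{\pi_s}$ for all $P_s\in\mathcal{P}_s$, and from $\mathbf{x}\in\mathcal{C}^{\pi_s,\mathcal{P}_s}_{+}$ we get some $\widehat{P}_s \in \mathcal{P}_s$ with $\langle \mathbf{x}, L^{\pi_s,\widehat{P}_s}\rangle \ge r^{\pi_s}$. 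Combining, for each $s$ there is a $\widehat{P}_s\in\mathcal{P}_s$ with $\langle \mathbf{x}, L^{\pi_s,\widehat{P}_s}\rangle = r^{\pi_s}$, i.e.\ $\mathbf{x}\in H^{\pi_s,\widehat{P}_s}$. By $s$-rectangularity these choices assemble into a single $\widehat{P}\in\mathcal{P}$ with $\widehat{P}_s$ the chosen component, and since $\mathbf{x}$ lies in $H^{\pi_s,\widehat{P}_s}$ for every $s$, Lemma~\ref{lem:single-v} applied to $\pi$ and $\widehat{P}$ forces $\mathbf{x} = V^{\pi,\widehat{P}}$. It remains to show $V^{\pi,\widehat{P}} = V^{\pi,\mathcal{P}}$. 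One inequality, $V^{\pi,\widehat{P}} \ge V^{\pi,\mathcal{P}}$, is immediate since $\widehat{P}\in\mathcal{P}$ and $V^{\pi,\mathcal{P}}$ is the pointwise minimum over $\mathcal{P}$. For the reverse, I would use the half-space constraints again together with a monotonicity/fixed-point argument: writing $\mathbf{x} = V^{\pi,\widehat{P}}$, the inequalities $\langle \mathbf{x}, L^{\pi_s,P_s}\rangle \le r^{\pi_s}$ for all $P_s\in\mathcal{P}_s$ say exactly that $\mathbf{x}(s) \le r^{\pi_s} + \gamma \langle P^{\pi_s}, \mathbf{x}\rangle$ for every $P_s\in\mathcal{P}_s$, hence $\mathbf{x} \le T^{\pi,\mathcal{P}}\mathbf{x}$ where $T^{\pi,\mathcal{P}}$ is the robust Bellman evaluation operator $\big(T^{\pi,\mathcal{P}}\mathbf{x}\big)(s) = \min_{P_s\in\mathcal{P}_s}\big(r^{\pi_s} + \gamma\langle P^{\pi_s},\mathbf{x}\rangle\big)$; since this operator is a $\gamma$-contraction whose unique fixed point is $V^{\pi,\mathcal{P}}$ and it is monotone, iterating gives $\mathbf{x} \le V^{\pi,\mathcal{P}}$. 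Together with the other inequality, $\mathbf{x} = V^{\pi,\mathcal{P}}$.

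I expect the main obstacle to be the final step $V^{\pi,\widehat{P}} = V^{\pi,\mathcal{P}}$: the extraction of the per-state witnesses $\widehat{P}_s$ and the appeal to Lemma~\ref{lem:single-v} are routine, but one must be careful that the monotone-contraction argument is valid, in particular that $\mathcal{P}_s$ compact guarantees the minimum in $T^{\pi,\mathcal{P}}$ is attained and that $T^{\pi,\mathcal{P}}$ is genuinely monotone in $\mathbf{x}$ (true because $P^{\pi_s}\ge 0$ and $\gamma\ge 0$). An alternative, perhaps cleaner, route to the same conclusion is to invoke Eqn.~\eqref{eqn:s-rect-exist} directly: it supplies a $P^\star\in\mathcal{P}$ attaining $V^{\pi,P^\star} = V^{\pi,\mathcal{P}}$ simultaneously at all states, so $V^{\pi,\mathcal{P}} \in H^{\pi_s,P^\star_s}$ for every $s$; combined with the fact that $V^{\pi,\mathcal{P}}$ also lies in $\mathcal{C}^{\pi_s,\mathcal{P}_s}_{-}$ and the already-proven inclusion $\{V^{\pi,\mathcal{P}}\}\subseteq\bigcap_s C^{\pi_s,\mathcal{P}_s}$, one can argue that any other $\mathbf{x}$ in the intersection must coincide with $V^{\pi,\mathcal{P}}$ by comparing the two along the cone structure. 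I would present whichever of these is shortest once the details are checked.
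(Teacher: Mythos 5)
Your proposal is correct, and its skeleton matches the paper's: you unpack $\mathcal{C}^{\pi_s,\mathcal{P}_s}_{+}\cap\mathcal{C}^{\pi_s,\mathcal{P}_s}_{-}$ into a per-state equality witness plus the family of $\le$ constraints, assemble the witnesses into a single $\widehat{P}\in\mathcal{P}$ by $s$-rectangularity, and identify $\mathbf{x}=V^{\pi,\widehat{P}}$ via uniqueness of the Bellman solution (Lemma~\ref{lem:single-v}); you also make explicit the easy inclusion $\{V^{\pi,\mathcal{P}}\}\subseteq\bigcap_s C^{\pi_s,\mathcal{P}_s}$ via Corollary~\ref{col:conic-surface}, which the paper leaves implicit. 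The only real divergence is the closing step showing $\mathbf{x}\le V^{\pi,\mathcal{P}}$: the paper argues by contradiction, taking the uniform worst-case $P_\ddagger$ from Eqn.~\eqref{eqn:s-rect-exist} and using $(I-\gamma P^{\pi}_\ddagger)^{-1}\ge 0$ to force $V^{\pi,P_\dagger}\le V^{\pi,P_\ddagger}$, whereas you invoke the robust Bellman evaluation operator $T^{\pi,\mathcal{P}}$ being a monotone $\gamma$-contraction with fixed point $V^{\pi,\mathcal{P}}$ and iterate from $\mathbf{x}\le T^{\pi,\mathcal{P}}\mathbf{x}$. Both are valid; your route imports one extra standard RMDP fact (the fixed-point characterization of $V^{\pi,\mathcal{P}}$ under $s$-rectangularity) that the paper never states, while the paper stays within the facts it has already set up (Eqn.~\eqref{eqn:s-rect-exist} plus the matrix-monotonicity trick from the proof of Lemma~\ref{lem:robust-v-boundary}). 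If you want to avoid the fixed-point theorem altogether, note that your constraints give $(I-\gamma P^{\pi})\mathbf{x}\le r^{\pi}$ for every $P\in\mathcal{P}$, and multiplying by $(I-\gamma P^{\pi})^{-1}=\sum_{t\ge 0}(\gamma P^{\pi})^t\ge 0$ yields $\mathbf{x}\le V^{\pi,P}$ for all $P$, hence $\mathbf{x}\le V^{\pi,\mathcal{P}}$ directly — this is essentially the paper's argument without the contradiction. Your second, vaguer alternative (``comparing along the cone structure'') is not developed enough to count as a proof, but since your primary argument is complete this is not a gap.
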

Then from the introduced perspective, we show that the robust value space can also be viewed as an intersection of state-wise unions of conic hypersurfaces, as illustrated in Figure~\ref{fig:robust-vf-space}(b) and formally stated in Lemma~\ref{lem:robust-vf-space}.
\begin{restatable}{lemma}{robustvfspace}
    Consider an $s$-rectangular uncertainty set $\mathcal{P}$, the robust value function space $\mathcal{V}^{\mathcal{P}}$ can be represented as
\begin{equation}
    \mathcal{V}^\mathcal{P} = \bigcup_{\pi\in\Pi} \bigcap_{s\in\mathcal{S}}\,  C^{\pi_s,\mathcal{P}_s} = \bigcap_{s\in\mathcal{S}}\, \bigcup_{\pi_s\in\Delta_\mathcal{A}} C^{\pi_s,\mathcal{P}_s}.
\end{equation}
\label{lem:robust-vf-space}
\end{restatable}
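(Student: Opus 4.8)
The plan is to mimic the structure of the non-robust argument (Lemma~\ref{lem:single-v} $\to$ Lemma~\ref{lem:vf-space}), now using the conic hypersurfaces $C^{\pi_s,\mathcal{P}_s}$ in place of the hyperplanes $H^{\pi_s,P_s}$, and relying on Lemma~\ref{lem:robust-single-v} as the analogue of Lemma~\ref{lem:single-v}. First I would establish the left equality $\mathcal{V}^\mathcal{P} = \bigcup_{\pi\in\Pi}\bigcap_{s\in\mathcal{S}} C^{\pi_s,\mathcal{P}_s}$: by definition $\mathcal{V}^\mathcal{P} = f_\mathcal{P}(\Pi) = \{V^{\pi,\mathcal{P}} \mid \pi\in\Pi\}$, and Lemma~\ref{lem:robust-single-v} says that for each fixed $\pi$ the singleton $\{V^{\pi,\mathcal{P}}\}$ equals $\bigcap_{s\in\mathcal{S}} C^{\pi_s,\mathcal{P}_s}$; taking the union over $\pi\in\Pi$ gives the claim immediately.

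For the right equality I would prove the two set inclusions between $\bigcup_{\pi\in\Pi}\bigcap_{s\in\mathcal{S}} C^{\pi_s,\mathcal{P}_s}$ and $\bigcap_{s\in\mathcal{S}}\bigcup_{\pi_s\in\Delta_\mathcal{A}} C^{\pi_s,\mathcal{P}_s}$. The forward inclusion ($\subseteq$) is the easy direction and is purely set-theoretic: if $\mathbf{x}\in\bigcap_{s} C^{\pi_s,\mathcal{P}_s}$ for some fixed $\pi$, then for every $s$ we have $\mathbf{x}\in C^{\pi_s,\mathcal{P}_s}\subseteq\bigcup_{\pi_s'\in\Delta_\mathcal{A}} C^{\pi_s',\mathcal{P}_s}$, so $\mathbf{x}$ lies in the intersection over $s$ of these unions; this needs no properties of RMDPs beyond the definitions. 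The reverse inclusion ($\supseteq$) is the substantive step: suppose $\mathbf{x}\in\bigcap_{s\in\mathcal{S}}\bigcup_{\pi_s\in\Delta_\mathcal{A}} C^{\pi_s,\mathcal{P}_s}$, so for each state $s$ there is a distribution $\hat\pi_s\in\Delta_\mathcal{A}$ with $\mathbf{x}\in C^{\hat\pi_s,\mathcal{P}_s}$. Assemble these into a single policy $\hat\pi\coloneqq(\hat\pi_s)_{s\in\mathcal{S}}\in\Pi$; then $\mathbf{x}\in\bigcap_{s\in\mathcal{S}} C^{\hat\pi_s,\mathcal{P}_s}$, which by Lemma~\ref{lem:robust-single-v} equals $\{V^{\hat\pi,\mathcal{P}}\}$, so $\mathbf{x}=V^{\hat\pi,\mathcal{P}}\in\mathcal{V}^\mathcal{P}$.

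The key point making the reverse inclusion work is that the conic hypersurface $C^{\pi_s,\mathcal{P}_s}$ depends on the policy $\pi$ only through its state-$s$ component $\pi_s$ (this is visible from the definition of $\mathcal{C}^{\pi_s,\mathcal{P}_s}_{\pm}$ via $L^{\pi_s,P_s}$ and $r^{\pi_s}$), so that a state-wise choice of distributions $\hat\pi_s$ can be glued into one coherent policy $\hat\pi$ without any compatibility conflict — exactly as in the non-robust proof of Lemma~\ref{lem:vf-space}. The main obstacle, and the place I would be most careful, is ensuring Lemma~\ref{lem:robust-single-v} applies to the glued policy $\hat\pi$: this is fine because $\hat\pi\in\Pi$ is a genuine stationary policy and Lemma~\ref{lem:robust-single-v} holds for all policies, but I would double-check that the $s$-rectangularity hypothesis (needed for Lemma~\ref{lem:robust-single-v} through Eqn.~\eqref{eqn:s-rect-exist}) is carried through and that the indexing $\bigcup_{\pi_s\in\Delta_\mathcal{A}}$ on the right-hand side genuinely ranges over $\Delta_\mathcal{A}$ for each $s$ independently, which is what licenses the gluing.
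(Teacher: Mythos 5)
Your proposal is correct and matches the paper's argument: both rest on the first equality following from Lemma~\ref{lem:robust-single-v} together with the definition of $\mathcal{V}^\mathcal{P}$, and on the fact that $C^{\pi_s,\mathcal{P}_s}$ depends on $\pi$ only through $\pi_s$, which is what lets state-wise choices be glued into a single policy. The paper phrases the swap of $\bigcup_{\pi\in\Pi}$ and $\bigcap_{s\in\mathcal{S}}$ as an iterated application of the distributive law of sets, while you prove the two inclusions directly, but these are the same argument in substance.
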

Next, we show the equivalence between each inner union in RHS of the above equation and an intersection of two unions in Lemma~\ref{lem:robust-vf-space-2}. Figure~\ref{fig:robust-vf-space}(c) gives an illustration.
Similar to the non-robust case, Lemma~\ref{lem:robust-vf-space-2} will help us characterize the relationship between the robust value space $\mathcal{V}^\mathcal{P}$ and the conic hypersurfaces corresponding to $d_{s,a}$.

\begin{restatable}{lemma}{robustvfspacetwo}
    Consider an $s$-rectangular uncertainty set $\mathcal{P}$, we have for all states $s\in\mathcal{S}$,
\begin{equation}
  \bigcup_{\pi_s\in\Delta_\mathcal{A}}\!\!C^{\pi_s,\mathcal{P}_s} = 
  \left[\bigcup_{\pi_s\in\Delta_\mathcal{A}}\!\! \mathcal{C}^{\pi_s,\mathcal{P}_s}_{+}\right] \cap \left[\bigcup_{\pi_s\in\Delta_\mathcal{A}}\!\! \mathcal{C}^{\pi_s,\mathcal{P}_s}_{-}\right]\!.
\end{equation}
\label{lem:robust-vf-space-2}
\end{restatable}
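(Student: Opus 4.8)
The plan is to mirror the non-robust argument of Lemma~\ref{lem:vf-space-2}, but replacing the half-spaces $\mathcal{H}^{\pi_s,P_s}_{\pm}$ with the conic half-spaces $\mathcal{C}^{\pi_s,\mathcal{P}_s}_{\pm}$. The inclusion ``$\subseteq$'' is immediate: since $C^{\pi_s,\mathcal{P}_s} = \mathcal{C}^{\pi_s,\mathcal{P}_s}_{+}\cap\mathcal{C}^{\pi_s,\mathcal{P}_s}_{-}$ by Corollary~\ref{col:conic-surface}, any $\mathbf{x}\in C^{\pi_s,\mathcal{P}_s}$ for some $\pi_s$ lies in both $\bigcup_{\pi_s}\mathcal{C}^{\pi_s,\mathcal{P}_s}_{+}$ and $\bigcup_{\pi_s}\mathcal{C}^{\pi_s,\mathcal{P}_s}_{-}$, hence in their intersection; taking the union over $\pi_s$ on the left preserves this.

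The substance is the reverse inclusion ``$\supseteq$''. Fix $\mathbf{x}$ in the right-hand side: there exist $\pi_s^{+},\pi_s^{-}\in\Delta_\mathcal{A}$ with $\mathbf{x}\in\mathcal{C}^{\pi_s^{+},\mathcal{P}_s}_{+}$ and $\mathbf{x}\in\mathcal{C}^{\pi_s^{-},\mathcal{P}_s}_{-}$. Unfolding the definitions in Eqn.~\eqref{eqn:conic-half-spaces}, this says $\langle\mathbf{x},L^{\pi_s^{+},P_s}\rangle\ge r^{\pi_s^{+}}$ for \emph{some} $P_s\in\mathcal{P}_s$, and $\langle\mathbf{x},L^{\pi_s^{-},P_s}\rangle\le r^{\pi_s^{-}}$ for \emph{all} $P_s\in\mathcal{P}_s$. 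I want to produce a single $\bar\pi_s\in\Delta_\mathcal{A}$ with $\mathbf{x}\in C^{\bar\pi_s,\mathcal{P}_s}$, i.e. satisfying both a ``$\ge$ for some $P_s$'' and a ``$\le$ for all $P_s$'' condition simultaneously; equivalently, there is some $P_s\in\mathcal{P}_s$ with $\langle\mathbf{x},L^{\bar\pi_s,P_s}\rangle = r^{\bar\pi_s}$ while $\langle\mathbf{x},L^{\bar\pi_s,P_s'}\rangle\le r^{\bar\pi_s}$ for all other $P_s'\in\mathcal{P}_s$. The natural device is a continuity/intermediate-value argument along the segment $\pi_s(t)=(1-t)\pi_s^{-}+t\,\pi_s^{+}$, $t\in[0,1]$: define $g(t) \coloneqq \max_{P_s\in\mathcal{P}_s}\bigl(\langle\mathbf{x},L^{\pi_s(t),P_s}\rangle - r^{\pi_s(t)}\bigr)$. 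Both $L^{\pi_s,P_s}$ and $r^{\pi_s}$ are affine in $\pi_s$, and $\mathcal{P}_s$ is compact, so $g$ is continuous (a max of continuous functions over a compact set). We have $g(0)\le 0$ (the $\le$-for-all condition at $\pi_s^{-}$) and $g(1)\ge 0$ (the $\ge$-for-some condition at $\pi_s^{+}$ makes the max nonnegative). Hence there is $t^\star\in[0,1]$ with $g(t^\star)=0$: at $\bar\pi_s \coloneqq \pi_s(t^\star)$ the max equals $0$, meaning $\langle\mathbf{x},L^{\bar\pi_s,P_s}\rangle\le r^{\bar\pi_s}$ for all $P_s$ (so $\mathbf{x}\in\mathcal{C}^{\bar\pi_s,\mathcal{P}_s}_{-}$) with equality for the maximizing $P_s$ (so $\mathbf{x}\in\mathcal{C}^{\bar\pi_s,\mathcal{P}_s}_{+}$). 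Thus $\mathbf{x}\in C^{\bar\pi_s,\mathcal{P}_s}\subseteq\bigcup_{\pi_s\in\Delta_\mathcal{A}}C^{\pi_s,\mathcal{P}_s}$, which completes the reverse inclusion.

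The main obstacle is making sure the intermediate-value step is airtight: I must verify that $g$ is genuinely continuous on $[0,1]$ — this uses that $\pi_s\mapsto\langle\mathbf{x},L^{\pi_s,P_s}\rangle-r^{\pi_s}$ is jointly continuous in $(\pi_s,P_s)$ and that $\mathcal{P}_s$ is compact (Berge's maximum theorem, or a direct $\varepsilon$-$\delta$ estimate using that $\Delta_\mathcal{A}$ and $\mathcal{P}_s$ are bounded) — and that $g(1)\ge 0$ really follows from membership in $\mathcal{C}^{\pi_s^{+},\mathcal{P}_s}_{+}$, which it does since that membership asserts the defining inequality holds for at least one $P_s$, forcing the maximum to be $\ge 0$. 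One should also note the degenerate case $\pi_s^{+}=\pi_s^{-}$, where the same point works directly. Everything else is bookkeeping with the definitions in Eqn.~\eqref{eqn:conic-half-spaces}.
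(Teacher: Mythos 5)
Your proof is correct, and it reaches the same goal as the paper's by the same underlying idea — interpolate along the segment between $\pi_s^{-}$ and $\pi_s^{+}$ until the worst-case constraint becomes tight — but the mechanics are genuinely different. The paper's proof is constructive and algebraic: after disposing of boundary cases, it defines $\alpha^{P_s}$, $\beta^{P_s}$, sets $\lambda=\min_{P_s\in\mathscr{P}_s}\alpha^{P_s}/(\alpha^{P_s}+\beta^{P_s})$, and verifies by direct computation that the convex combination $\pi_s^\dagger=(1-\lambda)\pi_s'+\lambda\pi_s''$ satisfies both the ``$\ge$ for some $P_s$'' and ``$\le$ for all $P_s$'' conditions. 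You instead run an intermediate-value argument on $g(t)=\max_{P_s\in\mathcal{P}_s}\bigl(\langle\mathbf{x},L^{\pi_s(t),P_s}\rangle-r^{\pi_s(t)}\bigr)$, which avoids the explicit coefficient and the case bookkeeping, at the price of invoking compactness of $\mathcal{P}_s$ (which the paper does assume throughout, and which passes to each factor under $s$-rectangularity) for continuity of $g$ and, importantly, for \emph{attainment} of the maximum at $t^\star$ — attainment is what turns $g(t^\star)=0$ into the existence of an actual $P_s\in\mathcal{P}_s$ achieving equality, as required by the definition of $\mathcal{C}^{\bar\pi_s,\mathcal{P}_s}_{+}$; you use this implicitly when you speak of ``the maximizing $P_s$,'' so state it explicitly. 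Your route is shorter and arguably more robust (it sidesteps the paper's auxiliary set $\mathscr{P}_s$, whose minimum is taken over a set defined by a strict inequality), while the paper's route yields an explicit formula for the interpolating policy, which can be useful if one wants a constructive witness.
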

As shown in Figure~\ref{fig:robust-vf-space}(d), unlike the non-robust case, the infinite union $\bigcup_{\pi_s\in\Delta_\mathcal{A}} \mathcal{C}^{\pi_s,\mathcal{P}_s}_{-}$ does not necessarily coincides with the finite union $\bigcup_{a\in\mathcal{A}} \mathcal{C}^{d_{s,a},\mathcal{P}_s}_{-}$. The following Lemma~\ref{lem:robust-vf-polyhedron} characterizes their relationship.
\begin{restatable}{lemma}{robustvfpolyhedronlem}
Consider an $s$-rectangular uncertainty set $\mathcal{P}$, we have for all states $s\in\mathcal{S}$,
\begin{align}
    \bigcup_{\pi_s\in\Delta_\mathcal{A}} \mathcal{C}^{\pi_s,\mathcal{P}_s}_{+} & = \bigcup_{a\in\mathcal{A}} \mathcal{C}^{d_{s,a},\mathcal{P}_s}_{+}, \\ \bigcup_{\pi_s\in\Delta_\mathcal{A}} \mathcal{C}^{\pi_s,\mathcal{P}_s}_{-} & \supseteq \bigcup_{a\in\mathcal{A}} \mathcal{C}^{d_{s,a},\mathcal{P}_s}_{-}, \label{eqn:lower-bound}
\end{align}
where the equality in the second line holds when $\mathcal{P}$ is $(s,a)$-rectangular.
\label{lem:robust-vf-polyhedron}
\end{restatable}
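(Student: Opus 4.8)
The plan is to prove the two statements separately, handling the ``$+$'' case first since it is an equality and is structurally the simpler of the two. For the first line, one inclusion is immediate: since $d_{s,a}\in\Delta_\mathcal{A}$, the finite union $\bigcup_{a\in\mathcal{A}}\mathcal{C}^{d_{s,a},\mathcal{P}_s}_{+}$ is trivially contained in $\bigcup_{\pi_s\in\Delta_\mathcal{A}}\mathcal{C}^{\pi_s,\mathcal{P}_s}_{+}$. For the reverse inclusion, take any $\mathbf{x}\in\mathcal{C}^{\pi_s,\mathcal{P}_s}_{+}$ for some $\pi_s$; by definition there is $P_s\in\mathcal{P}_s$ with $\langle \mathbf{x}, L^{\pi_s,P_s}\rangle \ge r^{\pi_s}$. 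Expanding $L^{\pi_s,P_s}=\mathbf{e}_s-\gamma P_s\pi_s$ and $r^{\pi_s}=r_s^\top\pi_s$, this inequality reads $\sum_a \pi_{s,a}\big(\langle \mathbf{x}, \mathbf{e}_s-\gamma P_{s,a}\rangle - r_{s,a}\big)\ge 0$, i.e. $\pi_s$ puts nonnegative weighted average on the scalars $g_a(\mathbf{x}) \coloneqq \langle \mathbf{x}, \mathbf{e}_s - \gamma P_{s,a}\rangle - r_{s,a}$. Hence at least one action $a^\star$ has $g_{a^\star}(\mathbf{x})\ge 0$, which is exactly the statement that $\mathbf{x}\in\mathcal{C}^{d_{s,a^\star},\mathcal{P}_s}_{+}$ (taking that same $P_s$). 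This proves the ``$+$'' equality.

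For the ``$-$'' inclusion \eqref{eqn:lower-bound}, again take $\mathbf{x}\in\mathcal{C}^{d_{s,a},\mathcal{P}_s}_{-}$, meaning $g_a(\mathbf{x})\le 0$ for \emph{all} $P_s\in\mathcal{P}_s$ with that fixed action $a$; I need to produce a (possibly different) $\pi_s$ with $\langle\mathbf{x},L^{\pi_s,P_s}\rangle\le r^{\pi_s}$ for all $P_s\in\mathcal{P}_s$. The natural choice is $\pi_s = d_{s,a}$ itself, but one must be careful: the quantifier ``$\forall P_s\in\mathcal{P}_s$'' in the definition of $\mathcal{C}^{\pi_s,\mathcal{P}_s}_{-}$ ranges over the \emph{full} $s$-rectangular set $\mathcal{P}_s\subseteq(\Delta_\mathcal{S})^\mathcal{A}$, and $L^{d_{s,a},P_s} = \mathbf{e}_s - \gamma P_{s,a}$ depends only on the $a$-th block $P_{s,a}$, so the condition for $d_{s,a}$ is precisely $g_a(\mathbf{x})\le 0$ for all admissible $P_{s,a}$ — which is exactly the defining condition of $\mathcal{C}^{d_{s,a},\mathcal{P}_s}_{-}$. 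So $\mathbf{x}\in\mathcal{C}^{d_{s,a},\mathcal{P}_s}_{-}$ directly witnesses membership in the infinite union with $\pi_s=d_{s,a}$, giving \eqref{eqn:lower-bound}. (Figure~\ref{fig:robust-vf-space}(d) shows the inclusion can be strict: a stochastic $\pi_s$ imposes the constraint $\sum_a\pi_{s,a}g_a(\mathbf{x})\le 0$, which can be satisfied by points where every individual $g_a$ is positive for some $P_{s,a}$, so the mixed half-space can reach regions no deterministic one does.)

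Finally, for the equality under $(s,a)$-rectangularity: here $\mathcal{P}_s = \times_{a\in\mathcal{A}}\mathcal{P}_{s,a}$, so the blocks $P_{s,a}$ vary independently. I would show $\bigcup_{\pi_s}\mathcal{C}^{\pi_s,\mathcal{P}_s}_{-}\subseteq\bigcup_a\mathcal{C}^{d_{s,a},\mathcal{P}_s}_{-}$. Take $\mathbf{x}\in\mathcal{C}^{\pi_s,\mathcal{P}_s}_{-}$, so $\sum_a\pi_{s,a}g_a(\mathbf{x})\le 0$ for every choice of $(P_{s,a})_a$ independently; by taking, for each $a$ with $\pi_{s,a}>0$, the worst-case block that maximizes $g_a(\mathbf{x})$ (a max attained by compactness of $\mathcal{P}_{s,a}$), independence lets us combine them, forcing $\sum_a\pi_{s,a}\max_{P_{s,a}}g_a(\mathbf{x})\le 0$, hence $\max_{P_{s,a}} g_a(\mathbf{x})\le 0$ for some $a$ with $\pi_{s,a}>0$, i.e. $\mathbf{x}\in\mathcal{C}^{d_{s,a},\mathcal{P}_s}_{-}$. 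The main subtlety I anticipate is precisely this use of rectangularity: under mere $s$-rectangularity the blocks are coupled, so one cannot separately maximize each $g_a$, and the argument — correctly — breaks down, which is the content of the strict inclusion. I should also make sure the degenerate case $\pi_{s,a}=0$ for all but a zero-measure set is handled (if $\pi_s$ is deterministic the claim is trivial), and that the $\max$ over $\mathcal{P}_{s,a}$ is justified by compactness, which was assumed for $\mathcal{P}$ and is inherited by each block under $(s,a)$-rectangularity.
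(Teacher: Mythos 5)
Your proposal is correct and follows essentially the same route as the paper's proof: expanding $\pi_s$ as a convex combination of the $d_{s,a}$ and using nonnegativity of the weights for the ``$+$'' equality, noting the trivial containment for \eqref{eqn:lower-bound}, and exchanging the maximum over $\mathcal{P}_s$ with the weighted sum via $(s,a)$-rectangularity for the final equality. The extra care you take about which quantifier ranges over which block and about attainment of the maximum matches what the paper leaves implicit, so no gap remains.
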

\begin{figure}[t]
    \centering
    \includegraphics[width=\linewidth]{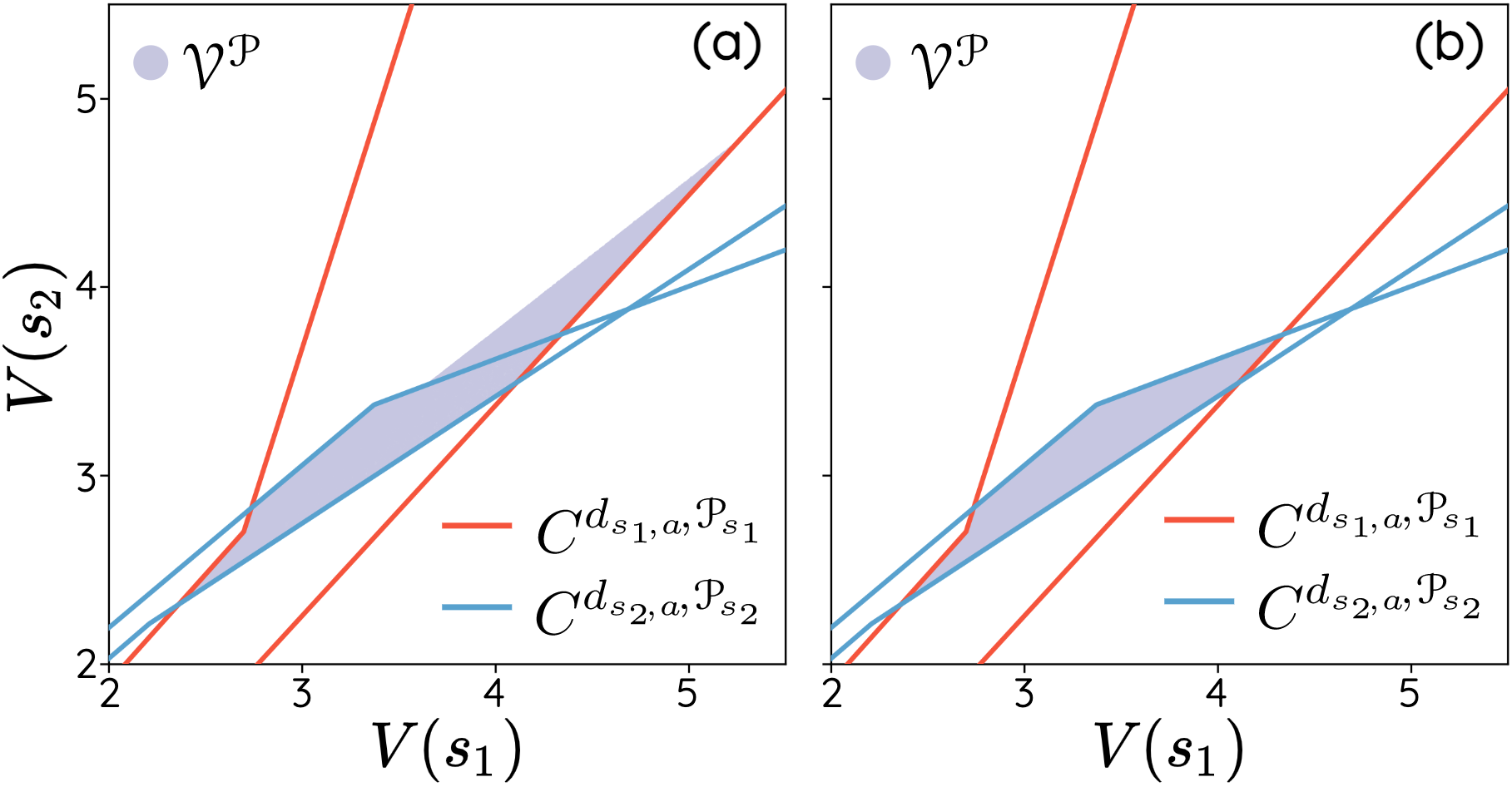}
    \caption{The robust value space $\mathcal{V}^\mathcal{P}$ and the conic hypersurfaces $C^{d_{s,a},\mathcal{P}_s}$ under $s$-rectangularity \textbf{(a)} and $(s,a)$-rectangularity \textbf{(b)}.}
     \label{fig:robust-vf-polyhedron}
\end{figure}
Putting it together, the robust value space can be characterized in Theorem~\ref{thm:robust-vf-polyhedron}.
Figure~\ref{fig:robust-vf-polyhedron} highlights the difference in the robust value space between $s$-rectangularity and $(s,a)$-rectangularity, by using the same set of probability values (see Appendix~\ref{app:specs}).
Our results also provide a geometric perspective on why the optimal policies under $s$-rectangularity might be stochastic, which is only exemplified in prior works~\cite{wiesemann2013robust}. The robust value functions of deterministic policies always lie in the region defined by RHS of Eqn.~\eqref{eqn:robust-vf-polyhedron} but the optimal value might lie outside.

\begin{restatable}{theorem}{robustvfpolyhedronthm}
Consider an $s$-rectangular uncertainty set $\mathcal{P}$, the robust value function space $\mathcal{V}^{\mathcal{P}}$ satisfies
\begin{equation}
\begin{aligned}
    \mathcal{V}^{\mathcal{P}} & = \bigcap_{s\in \mathcal{S}}\left[ \left[\bigcup_{a\in\mathcal{A}} \mathcal{C}^{d_{s,a},\mathcal{P}_s}_{+}\right] \cap \left[\bigcup_{\pi_s\in\Delta_\mathcal{A}} \mathcal{C}^{\pi_s,\mathcal{P}_s}_{-}\right] \right]
    \\ 
    & \supseteq \bigcap_{s\in \mathcal{S}}\left[ \left[\bigcup_{a\in\mathcal{A}} \mathcal{C}^{d_{s,a},\mathcal{P}_s}_{+}\right] \cap \left[\bigcup_{a\in\mathcal{A}} \mathcal{C}^{d_{s,a},\mathcal{P}_s}_{-}\right] \right]
\end{aligned}
\label{eqn:robust-vf-polyhedron}
\end{equation}
where the equality in the second line holds when $\mathcal{P}$ is $(s,a)$-rectangular.
\label{thm:robust-vf-polyhedron}
\end{restatable}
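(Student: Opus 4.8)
The plan is to chain together the four lemmas already proved in this section, so that the theorem reduces to bookkeeping with set operations. The starting point is Lemma~\ref{lem:robust-vf-space}, which gives $\mathcal{V}^\mathcal{P} = \bigcap_{s\in\mathcal{S}} \bigcup_{\pi_s\in\Delta_\mathcal{A}} C^{\pi_s,\mathcal{P}_s}$. First I would rewrite each inner union via Lemma~\ref{lem:robust-vf-space-2}, replacing $\bigcup_{\pi_s\in\Delta_\mathcal{A}} C^{\pi_s,\mathcal{P}_s}$ by $\bigl[\bigcup_{\pi_s\in\Delta_\mathcal{A}}\mathcal{C}^{\pi_s,\mathcal{P}_s}_{+}\bigr]\cap\bigl[\bigcup_{\pi_s\in\Delta_\mathcal{A}}\mathcal{C}^{\pi_s,\mathcal{P}_s}_{-}\bigr]$. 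At this stage the representation of $\mathcal{V}^\mathcal{P}$ still involves infinite unions over $\Delta_\mathcal{A}$ in both the ``$+$'' and ``$-$'' factors.

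Next I would invoke Lemma~\ref{lem:robust-vf-polyhedron}. Its first identity lets me replace the ``$+$'' factor $\bigcup_{\pi_s\in\Delta_\mathcal{A}}\mathcal{C}^{\pi_s,\mathcal{P}_s}_{+}$ by the finite union $\bigcup_{a\in\mathcal{A}}\mathcal{C}^{d_{s,a},\mathcal{P}_s}_{+}$ \emph{exactly}; substituting this into the expression from the previous paragraph yields the first line of the theorem verbatim. For the ``$-$'' factor, the same lemma only gives the inclusion $\bigcup_{\pi_s\in\Delta_\mathcal{A}}\mathcal{C}^{\pi_s,\mathcal{P}_s}_{-} \supseteq \bigcup_{a\in\mathcal{A}}\mathcal{C}^{d_{s,a},\mathcal{P}_s}_{-}$, with equality under $(s,a)$-rectangularity.

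To finish I would appeal to monotonicity of the set operations: if $B_s \supseteq B'_s$ for every $s$ and $A_s$ is fixed, then $\bigcap_{s}(A_s\cap B_s) \supseteq \bigcap_{s}(A_s\cap B'_s)$. Applying this with $A_s=\bigcup_{a\in\mathcal{A}}\mathcal{C}^{d_{s,a},\mathcal{P}_s}_{+}$, $B_s=\bigcup_{\pi_s\in\Delta_\mathcal{A}}\mathcal{C}^{\pi_s,\mathcal{P}_s}_{-}$ and $B'_s=\bigcup_{a\in\mathcal{A}}\mathcal{C}^{d_{s,a},\mathcal{P}_s}_{-}$ gives the claimed ``$\supseteq$'' in the second line, and when $\mathcal{P}$ is $(s,a)$-rectangular the equality $B_s=B'_s$ (second part of Lemma~\ref{lem:robust-vf-polyhedron}) upgrades this to an equality.

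I do not anticipate a genuine obstacle: essentially all the geometric content lives in Lemmas~\ref{lem:robust-vf-space}--\ref{lem:robust-vf-polyhedron}, and the only care needed is to track which unions are over $\Delta_\mathcal{A}$ versus over $\mathcal{A}$ and to apply the monotonicity step to the correct (fixed) ``$+$'' factor. If one additionally wanted the fully expanded, distributive-law form analogous to Theorem~\ref{thm:vf-polyhedron}, one would at the end push the intersection over $s$ inside the finite unions; but that is optional and not part of the stated claim.
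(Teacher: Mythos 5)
Your proposal is correct and follows exactly the route of the paper's proof, which likewise obtains the result directly from Lemma~\ref{lem:robust-vf-space}, Lemma~\ref{lem:robust-vf-space-2} and Lemma~\ref{lem:robust-vf-polyhedron}; your extra monotonicity step just makes explicit the bookkeeping the paper leaves implicit.
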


\begin{figure}[t]
    \centering
    \includegraphics[width=\linewidth]{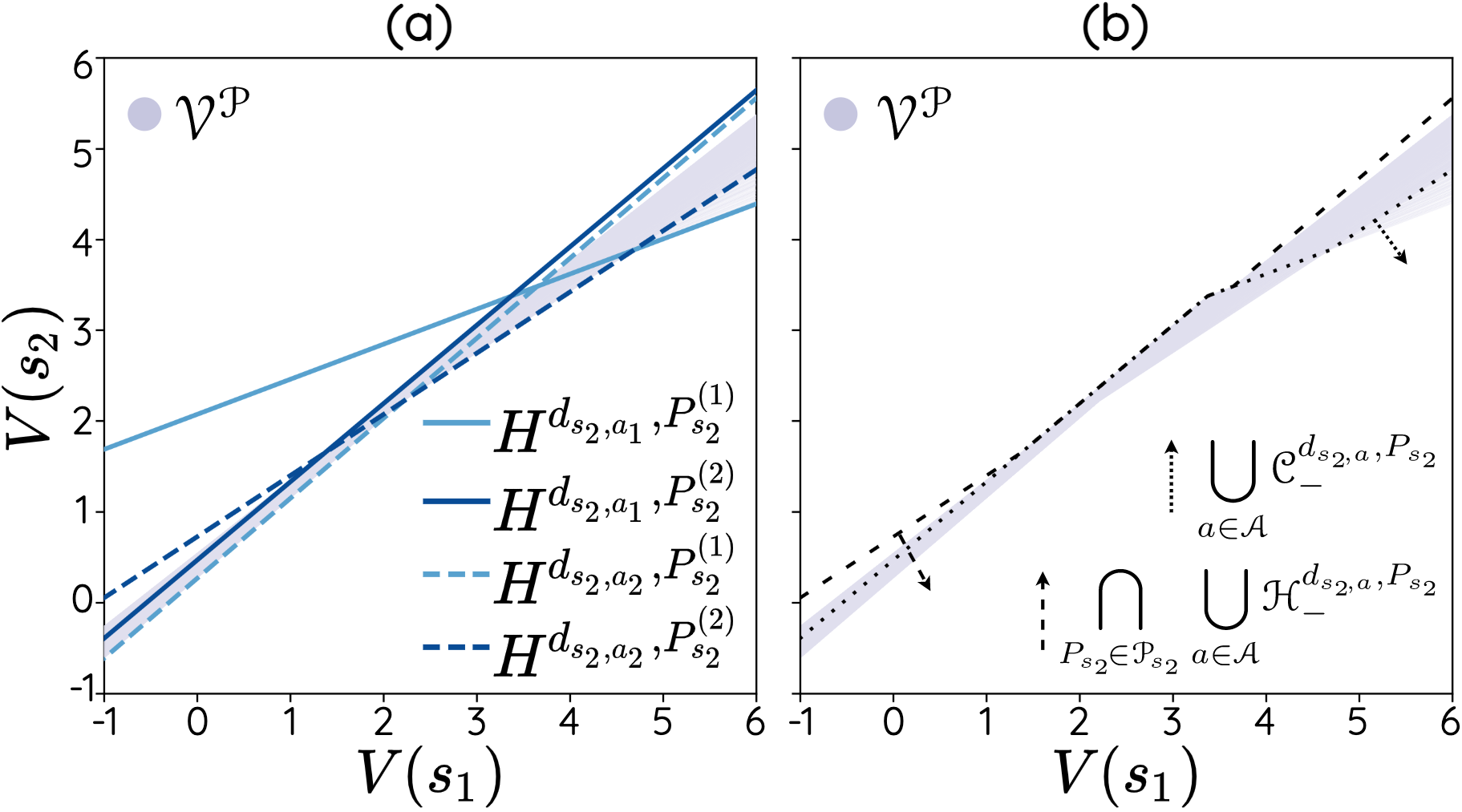}
    \caption{A closer look at the ``extra'' region under $s$-rectangularity. Here $\mathcal{P}_{s_2}=\{P_{s_2}^{(1)},P_{s_2}^{(2)}\}$. We highlight the hyperplanes in \textbf{(a)}, and the upper and lower bounds of the region    $\bigcup_{\pi_s\in\Delta_\mathcal{A}} \mathcal{C}^{\pi_s, \mathcal{P}_s}$ in \textbf{(b)}. Note that the black boundaries in \textbf{(b)} are composed by the hyperplanes in \textbf{(a)}.}
     \label{fig:extra-region}
\end{figure}

Furthermore, we take a closer look at this ``extra'' region under $s$-rectangularity. Since the space can be decomposed state-wisely, we focus on a single state $s$. Recall the definition of $\mathcal{C}_{-}^{\pi_s, \mathcal{P}_s}$ in Eqn.~\eqref{eqn:conic-half-spaces}, \ie,
\begin{equation}
    \mathcal{C}_{-}^{\pi_s, \mathcal{P}_s} = \bigcap_{P_s\in\mathcal{P}_s} \mathcal{H}_{-}^{\pi_s,P_s}.
\end{equation}
From our results in Section~\ref{sec:revisit}, we know
\begin{equation}
    \mathcal{H}_{-}^{\pi_s,P_s} \subseteq \bigcup_{a\in\mathcal{A}} \mathcal{H}_{-}^{d_{s,a},P_s}.
\end{equation}
Therefore, we can obtain
\begin{equation}
    \mathcal{C}_{-}^{\pi_s, \mathcal{P}_s} \subseteq \bigcap_{P_s\in\mathcal{P}_s} \bigcup_{a\in\mathcal{A}} \mathcal{H}_{-}^{d_{s,a},P_s},
\end{equation}
and accordingly
\begin{equation}
    \bigcup_{\pi_s\in\Delta_\mathcal{A}} \mathcal{C}_{-}^{\pi_s, \mathcal{P}_s} \subseteq \bigcap_{P_s\in\mathcal{P}_s} \bigcup_{a\in\mathcal{A}} \mathcal{H}_{-}^{d_{s,a},P_s}.
\end{equation}
The RHS of the above equation gives us an upper bound of the region $\bigcup_{\pi_s\in\Delta_\mathcal{A}} \mathcal{C}_{-}^{\pi_s, \mathcal{P}_s}$ while the RHS of Eqn.~\eqref{eqn:lower-bound} provides a lower bound. The ``extra'' region lies within the gap between them. Figure~\ref{fig:extra-region} gives an illustration using the same RMDP example as in Figure~\ref{fig:robust-vf-polyhedron}.

\subsection{Active Uncertainty Subsets}
In above sections, we have shown that the robust value space $\mathcal{V}^\mathcal{P}$ depends on $\mathcal{P}$ in the form of a set of conic hypersurfaces $C^{\pi_s,\mathcal{P}_s}$. In this section, by taking a closer look at how $\mathcal{P}_s$ and $C^{\pi_s,\mathcal{P}_s}$ are related, we will show that only a subset $\mathcal{P}^\dagger\subseteq\mathcal{P}$ is sufficient to determine the robust value space, \ie,
\begin{equation}
    \mathcal{V}^\mathcal{P} = \mathcal{V}^{\mathcal{P}^\dagger}.
\end{equation}
We term $\mathcal{P}^\dagger$ as \emph{active} uncertainty subset, analogous to active constraints, in the sense that all $P\in\mathcal{P}^\dagger$ are active in determining the shape of the robust value space $\mathcal{V}^\mathcal{P}$.

\begin{figure}[t]
    \centering
    \includegraphics[width=\linewidth]{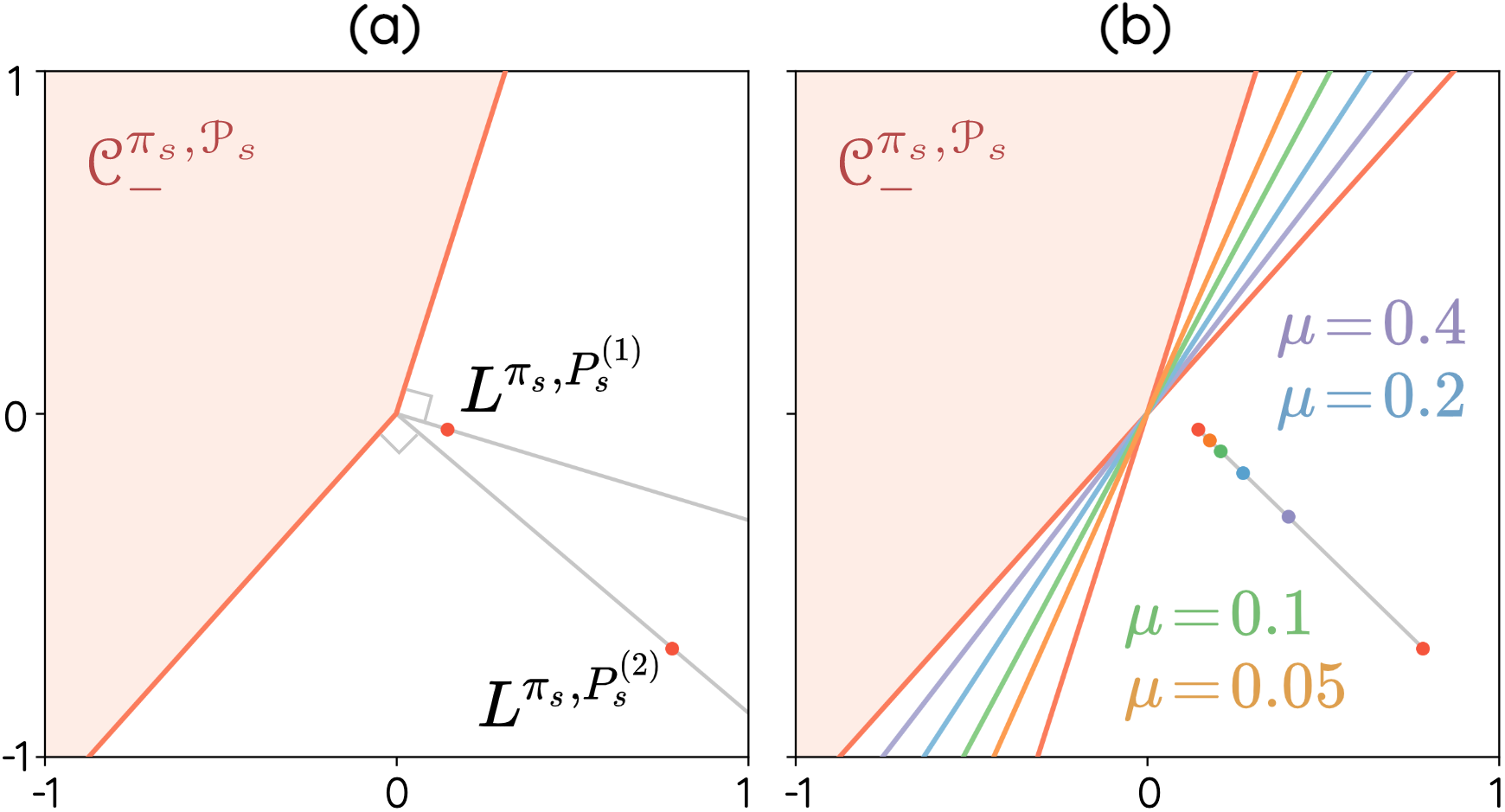}
    \caption{Visualization of the convex cone $\mathcal{C}^{\pi_s, \mathcal{P}_s}_{-}$ for a fixed $\pi_s$ and different $\mathcal{P}_s$. The translation $r^{\pi_s}\mathbf{1}$ is ignored since $\pi_s$ is fixed. \textbf{(a)} We set $\mathcal{P}_s=\{P_{s}^{(1)}, P_{s}^{(2)}\}$. \textbf{(b)} We set $\mathcal{P}_s=\{P_{s}^{(\mu)} \mid P_{s}^{(\mu)}=\mu P_{s}^{(1)} + (1-\mu) P_{s}^{(2)}, 0\le\mu\le 1\}$ and also plot the hyperplanes $H^{\pi_s, P_{s}^{(\mu)}}$ for different $\mu$.}
     \label{fig:polar-cone}
\end{figure}

First, let us keep $\pi_s$ fixed, and note that the conic hypersurface $C^{\pi_s, \mathcal{P}_s}$ is uniquely determined by the convex cone $\mathcal{C}^{\pi_s, \mathcal{P}_s}_{-}$. We then focus on the relationship between $\mathcal{P}_s$ and $\mathcal{C}^{\pi_s, \mathcal{P}_s}_{-}$. Denote the set
\begin{equation}
    \mathcal{L}^{\pi_s, \mathcal{P}_s} \coloneqq \{L^{\pi_s, P_s}\mid P_s\in\mathcal{P}_s \}.
\end{equation}
From the definition of $\mathcal{C}^{\pi_s, \mathcal{P}_s}_{-}$, we can see $\mathcal{C}^{\pi_s, \mathcal{P}_s}_{-}$ is exactly the polar cone of $\mathcal{L}^{\pi_s, \mathcal{P}_s}$ (plus a translation), denoted with
\begin{equation}
    \mathcal{C}^{\pi_s, \mathcal{P}_s}_{-} = (\mathcal{L}^{\pi_s, \mathcal{P}_s})^* + \{r^{\pi_s}\mathbf{1}\}.
\label{eqn:polar-cone}
\end{equation}
Here $+$ denotes the Minkowski addition.
Figure~\ref{fig:polar-cone}(a) gives an illustration. Note that for fixed $\pi_s$, $\mathcal{L}^{\pi_s, \mathcal{P}_s}$ is the image of $\mathcal{P}_s$ under a fixed affine transformation. We denote this affine transformation as $g$, \ie, $\mathcal{L}^{\pi_s, \mathcal{P}_s}=g(\mathcal{P}_s)$.
Then we are able to obtain the following lemma:
\begin{restatable}{lemma}{activesubsetlem}
Consider a $s$-rectangular uncertainty set $\mathcal{P}$ and a policy $\pi$, we have
\begin{equation}
    (\mathcal{L}^{\pi_s, \mathcal{P}_s})^* = (g(\mathcal{P}_s))^* = (g(\mathbf{ext}(\mathbf{conv}(\mathcal{P}_s))))^*.
\end{equation}
\label{lem:activesubsetlem}
\end{restatable}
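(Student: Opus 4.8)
The plan is to reduce the claim to two elementary facts: (i) the polar cone of a set depends only on its convex hull, and (ii) an affine map commutes with taking convex hulls, combined with the Minkowski--Krein--Milman representation of a compact convex set by its extreme points. First I would observe that $(\mathcal{L}^{\pi_s,\mathcal{P}_s})^* = (g(\mathcal{P}_s))^*$ is immediate from the definition $\mathcal{L}^{\pi_s,\mathcal{P}_s} = g(\mathcal{P}_s)$, so only the second equality needs work.

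Next I would record the auxiliary lemma that for any non-empty set $\mathcal{U}\subseteq\mathbb{R}^\mathcal{S}$ one has $\mathcal{U}^* = (\mathbf{conv}(\mathcal{U}))^*$: the inclusion $\mathcal{U}\subseteq\mathbf{conv}(\mathcal{U})$ gives $(\mathbf{conv}(\mathcal{U}))^*\subseteq\mathcal{U}^*$, and conversely if $\langle y,x\rangle\le 0$ for all $x\in\mathcal{U}$ then the same holds for every convex combination of points of $\mathcal{U}$, hence for all $x\in\mathbf{conv}(\mathcal{U})$. Then I would use that $g$ is affine (it is the fixed affine map $P_s\mapsto \mathbf{e}_s-\gamma P_s\pi_s$, with $\pi_s$ held fixed), so $g(\mathbf{conv}(\mathcal{U})) = \mathbf{conv}(g(\mathcal{U}))$ for any $\mathcal{U}$, since affine maps send convex combinations to convex combinations of the images.

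With these in hand, the chain of equalities is: since $\mathcal{P}$ is compact, so is its coordinate projection $\mathcal{P}_s$, and therefore $\mathbf{conv}(\mathcal{P}_s)$ is a compact convex subset of the finite-dimensional space $(\Delta_\mathcal{S})^\mathcal{A}$; by Minkowski's theorem it equals $\mathbf{conv}(\mathbf{ext}(\mathbf{conv}(\mathcal{P}_s)))$. Applying the auxiliary lemma, then the affine-hull identity, then Minkowski, then the affine-hull identity again, and finally the auxiliary lemma once more,
\begin{align*}
(g(\mathcal{P}_s))^*
&= (\mathbf{conv}(g(\mathcal{P}_s)))^*
= (g(\mathbf{conv}(\mathcal{P}_s)))^* \\
&= \bigl(g\bigl(\mathbf{conv}(\mathbf{ext}(\mathbf{conv}(\mathcal{P}_s)))\bigr)\bigr)^*
= \bigl(\mathbf{conv}\bigl(g(\mathbf{ext}(\mathbf{conv}(\mathcal{P}_s)))\bigr)\bigr)^* \\
&= \bigl(g(\mathbf{ext}(\mathbf{conv}(\mathcal{P}_s)))\bigr)^*,
\end{align*}
which is exactly the asserted identity.

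I do not expect a serious obstacle here; the only points requiring care are verifying that $g$ is genuinely affine in $P_s$ for fixed $\pi_s$ (so that it distributes over convex hulls), and invoking Minkowski's theorem in the correct form, which needs the compactness of $\mathbf{conv}(\mathcal{P}_s)$ — this is where the standing assumption that $\mathcal{P}$ is compact, together with finite-dimensionality, is used. Everything else is bookkeeping with the definition of the polar cone.
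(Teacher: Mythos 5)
Your proposal is correct and follows essentially the same route as the paper's proof: the polar cone is invariant under taking convex hulls, the affine map $g$ commutes with convex hulls, and Minkowski/Krein--Milman recovers $\mathbf{conv}(\mathcal{P}_s)$ from its extreme points. Your version is slightly more explicit about why compactness of $\mathcal{P}_s$ (inherited from the compact, $s$-rectangular $\mathcal{P}$) licenses the extreme-point representation, but the argument is otherwise identical.
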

This lemma implies that, in order to determine the conic hypersurface $C^{\pi_s, \mathcal{P}_s}$, we only need to care about those $P_s\in\mathcal{P}_s$ that are extreme points of the convex hull. Figure~\ref{fig:polar-cone}(b) gives an illustration. We then generalize it to the whole robust value space and present the following theorem:
\begin{restatable}{theorem}{activesubsetthm}
Consider a $s$-rectangular uncertainty set $\mathcal{P}$, we have
\begin{equation}
    \mathcal{V}^\mathcal{P} = \mathcal{V}^{\mathcal{P}^\dagger}
\end{equation}
where $\mathcal{P}^\dagger=\mathbf{ext}(\mathbf{conv}(\mathcal{P}))\subseteq\mathcal{P}$.
\end{restatable}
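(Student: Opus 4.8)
The plan is to push the claim through the state-wise decomposition of Lemma~\ref{lem:robust-vf-space} and then apply the single-coordinate statement of Lemma~\ref{lem:activesubsetlem}. The first task is to identify $\mathcal{P}^\dagger$ as an uncertainty set. Since $\mathcal{P}$ is $s$-rectangular, $\mathcal{P}=\times_{s\in\mathcal{S}}\mathcal{P}_s$, and I would first record two elementary facts about products of convex sets: $\mathbf{conv}\!\big(\times_s\mathcal{P}_s\big)=\times_s\mathbf{conv}(\mathcal{P}_s)$ (expand the product of two convex combinations coordinate-wise), and $\mathbf{ext}\!\big(\times_s K_s\big)=\times_s\mathbf{ext}(K_s)$ for convex sets $K_s$ (a product point is extreme iff each of its coordinates is). Together these give $\mathcal{P}^\dagger=\mathbf{ext}(\mathbf{conv}(\mathcal{P}))=\times_s\mathbf{ext}(\mathbf{conv}(\mathcal{P}_s))$, so $\mathcal{P}^\dagger$ is again $s$-rectangular, with components $\mathcal{P}^\dagger_s=\mathbf{ext}(\mathbf{conv}(\mathcal{P}_s))$. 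I would also note $\mathcal{P}^\dagger\subseteq\mathcal{P}$: writing an extreme point $p$ of $\mathbf{conv}(\mathcal{P}_s)$ as a finite convex combination of elements of $\mathcal{P}_s$ (Carath\'eodory), extremality of $p$ forces $p$ to equal one of them, so $p\in\mathcal{P}_s$.

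The second step is to show that passing from $\mathcal{P}$ to $\mathcal{P}^\dagger$ leaves every conic hypersurface unchanged. Fix $s$ and $\pi_s\in\Delta_\mathcal{A}$. The affine map $g$ with $\mathcal{L}^{\pi_s,\mathcal{P}_s}=g(\mathcal{P}_s)$ depends only on $(s,\pi_s)$, not on the uncertainty set, so $\mathcal{L}^{\pi_s,\mathcal{P}^\dagger_s}=g(\mathbf{ext}(\mathbf{conv}(\mathcal{P}_s)))$. Combining Eqn.~\eqref{eqn:polar-cone} with Lemma~\ref{lem:activesubsetlem},
\[
  \mathcal{C}^{\pi_s,\mathcal{P}_s}_{-}=(\mathcal{L}^{\pi_s,\mathcal{P}_s})^{*}+\{r^{\pi_s}\mathbf{1}\}
  =\big(g(\mathbf{ext}(\mathbf{conv}(\mathcal{P}_s)))\big)^{*}+\{r^{\pi_s}\mathbf{1}\}
  =\mathcal{C}^{\pi_s,\mathcal{P}^\dagger_s}_{-},
\]
using also that $r^{\pi_s}$ depends only on $\pi_s$. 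As noted just before Lemma~\ref{lem:activesubsetlem}, the conic hypersurface $C^{\pi_s,\mathcal{P}_s}=\mathcal{C}^{\pi_s,\mathcal{P}_s}_{+}\cap\mathcal{C}^{\pi_s,\mathcal{P}_s}_{-}$ is the boundary of the convex cone $\mathcal{C}^{\pi_s,\mathcal{P}_s}_{-}$ and hence is completely determined by it (equivalently, one checks directly that $\mathcal{C}^{\pi_s,\mathcal{P}_s}_{+}=\mathcal{C}^{\pi_s,\mathcal{P}^\dagger_s}_{+}$ since an affine functional on $\mathbf{conv}(\mathcal{P}_s)$ attains its maximum at an extreme point). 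Therefore $C^{\pi_s,\mathcal{P}_s}=C^{\pi_s,\mathcal{P}^\dagger_s}$ for all $s$ and all $\pi_s$.

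Finally I would assemble the pieces with Lemma~\ref{lem:robust-vf-space}, applied to $\mathcal{P}$ and to the $s$-rectangular set $\mathcal{P}^\dagger$:
\[
  \mathcal{V}^{\mathcal{P}}=\bigcap_{s\in\mathcal{S}}\ \bigcup_{\pi_s\in\Delta_\mathcal{A}}C^{\pi_s,\mathcal{P}_s}
  =\bigcap_{s\in\mathcal{S}}\ \bigcup_{\pi_s\in\Delta_\mathcal{A}}C^{\pi_s,\mathcal{P}^\dagger_s}
  =\mathcal{V}^{\mathcal{P}^\dagger}.
\]
I expect the main obstacle to be the bookkeeping of the first step — verifying carefully that $\mathbf{ext}\circ\mathbf{conv}$ commutes with the Cartesian product, so that the single-coordinate Lemma~\ref{lem:activesubsetlem} can legitimately be fed the components $\mathcal{P}^\dagger_s$. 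A further subtlety is that the set of extreme points of a compact convex set in $\mathbb{R}^{\mathcal{S}}$ need not be closed, so $\mathcal{P}^\dagger$ may fail to be compact; a clean way to sidestep this is to factor the argument through $\mathbf{conv}(\mathcal{P})$, which is compact, convex and $s$-rectangular, proving $\mathcal{V}^{\mathcal{P}}=\mathcal{V}^{\mathbf{conv}(\mathcal{P})}=\mathcal{V}^{\mathcal{P}^\dagger}$ — the first equality again from Lemma~\ref{lem:activesubsetlem}, and the second because a linear functional on $\mathbf{conv}(\mathcal{P}_s)$ attains its minimum at an extreme point, so $V^{\pi,\mathbf{conv}(\mathcal{P})}$ solves the robust Bellman equation for $\mathcal{P}^\dagger$. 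Everything else is immediate from the lemmas already established.
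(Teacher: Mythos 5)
Your proposal is correct and follows essentially the same route as the paper's own proof: it uses the polar-cone identity (Eqn.~\eqref{eqn:polar-cone}) together with Lemma~\ref{lem:activesubsetlem} to show each conic hypersurface depends only on $\mathbf{ext}(\mathbf{conv}(\mathcal{P}_s))$, identifies $\mathcal{P}^\dagger$ as the state-wise product of the sets $\mathbf{ext}(\mathbf{conv}(\mathcal{P}_s))$ via the same convex-hull/extreme-point/Cartesian-product identities, and concludes through the decomposition of Lemma~\ref{lem:robust-vf-space}. Your additional care---checking $\mathcal{C}^{\pi_s,\mathcal{P}_s}_{+}=\mathcal{C}^{\pi_s,\mathcal{P}^\dagger_s}_{+}$ directly and routing through the compact set $\mathbf{conv}(\mathcal{P})$ because $\mathbf{ext}(\mathbf{conv}(\mathcal{P}))$ need not be closed---only tightens steps the paper leaves implicit.
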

If the $\mathcal{P}$ (or more generally $\mathbf{conv}(\mathcal{P}$)) is polyhedral, such as $\ell_1$-ball and $\ell_\infty$-ball~\cite{ho2018fast, ho2021partial, behzadian2021fast}, then we can reduce $\mathcal{P}$ to a finite set without losing any useful information for policy optimization. In addition,  $\mathbf{conv}(\mathcal{P})$ being polyhedral implies that $\mathcal{C}^{\pi_s, \mathcal{P}_s}_{-}$ is a polyhedral cone. Combining with Theorem~\ref{thm:robust-vf-polyhedron}, it means that the robust value space for an $(s,a)$-rectangular uncertainty set will be a polytope.

\section{Discussion}
\label{sec:discuss}

\subsection{Policy Agreement on More States}
\label{sec:sub-polytope}

\begin{figure}[t]
    \centering
    \includegraphics[width=\linewidth]{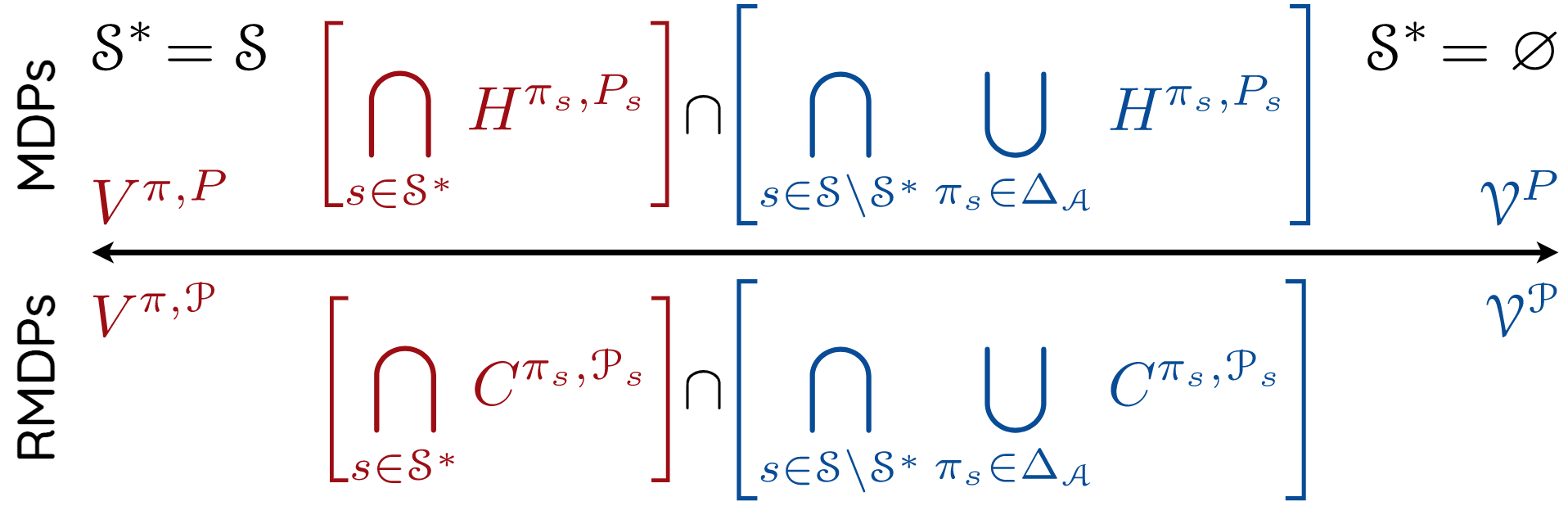}
    \caption{A spectrum of the spaces of value functions.}
     \label{fig:sub-polytope}
\end{figure}

We already know that the value functions for policies that agree on a single state lie in a hyperplane for MDPs~\cite{dadashi2019value}, and a conic hypersurface for $s$-rectangular RMDPs (Section~\ref{sec:conic-hypersurface}).
One natural question is how the space of value functions looks like when we fix the policies at more states.
With our new decomposition-based perspective, the results are immediately available from Lemma~\ref{lem:vf-space} and Lemma~\ref{lem:robust-vf-space}.

In Figure~\ref{fig:sub-polytope}, we show the space of value functions for policies agree on states in $\mathcal{S}^\ast\subseteq\mathcal{S}$, under both non-robust and robust setting.
Moreover, as illustrated in Figure~\ref{fig:sub-polytope}, our perspective reveals a spectrum of the spaces of value functions.
When the policies agree on all states, then it reduces to a single value function.
When the policies are free to vary on all states, then it is the whole value space.
This perspective enables us to characterize every point on this spectrum in an explicit form.
In comparison, for non-robust case, prior works~\cite{dadashi2019value} only prove that the spaces are polytopes without giving a clear characterization.

\subsection{The Non-convexity of the Robust Value Space}
Like the non-robust case, the robust value space $\mathcal{V}^\mathcal{P}$ is also possibly non-convex (\eg,  Figure~\ref{fig:robust-vf-polyhedron}).
Despite the non-convexity, $\mathcal{V}^\mathcal{P}$ exhibits some interesting properties analogous to monotone polygons. As shown in Figure~\ref{fig:non-convexity}(a), for any point in the robust value space $\mathcal{V}^\mathcal{P}$, if we draw an axis-parallel line passing this point, the intersection will be a line segment (or a point in degenerated case).
We formalize this observation in the following corollary.
\begin{restatable}{corollary}{robustvfsegment}
Consider an $s$-rectangular uncertainty set $\mathcal{P}$, if an axis-parallel line intersects with the robust value space $\mathcal{V}^\mathcal{P}$, then the intersection will be a line segment.
\end{restatable}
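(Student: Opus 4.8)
The plan is to reduce the claim to the one-dimensional fact that an intersection of intervals contained in a fixed line is again an interval, and then to exploit a sign property of the normal vectors $L^{\pi_s,P_s}$. Write the axis-parallel line as $\ell=\{v+t\,\mathbf{e}_{s_0}\mid t\in\mathbb{R}\}$ for a fixed coordinate $s_0\in\mathcal{S}$ and fixed $v\in\mathbb{R}^\mathcal{S}$, and parametrize $\ell$ by $t$. By Lemma~\ref{lem:robust-vf-space}, $\mathcal{V}^\mathcal{P}\cap\ell=\bigcap_{s\in\mathcal{S}}\bigl[(\bigcup_{\pi_s\in\Delta_\mathcal{A}}C^{\pi_s,\mathcal{P}_s})\cap\ell\bigr]$. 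Since $\ell$ is a line, an intersection of subsets of $\ell$ each of which is an interval is again an interval; hence it suffices to prove that for each fixed $s$ the set $(\bigcup_{\pi_s}C^{\pi_s,\mathcal{P}_s})\cap\ell$ is an interval.

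Fix $s$. By Lemma~\ref{lem:robust-vf-space-2}, $\bigcup_{\pi_s}C^{\pi_s,\mathcal{P}_s}=\bigl[\bigcup_{\pi_s}\mathcal{C}^{\pi_s,\mathcal{P}_s}_{+}\bigr]\cap\bigl[\bigcup_{\pi_s}\mathcal{C}^{\pi_s,\mathcal{P}_s}_{-}\bigr]$, so by the same interval-intersection remark it is enough to show that each of the two unions, intersected with $\ell$, is an interval. The key observation concerns the $s_0$-th coordinate of $L^{\pi_s,P_s}=\mathbf{e}_s-\gamma P^{\pi_s}$: it equals $1-\gamma P^{\pi_s}(s_0)\ge 1-\gamma>0$ when $s_0=s$, and equals $-\gamma P^{\pi_s}(s_0)\le 0$ when $s_0\ne s$. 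In either case its sign is the same for every $\pi_s\in\Delta_\mathcal{A}$ and every $P_s\in\mathcal{P}_s$. Hence the affine function $t\mapsto\langle v+t\,\mathbf{e}_{s_0},L^{\pi_s,P_s}\rangle=\langle v,L^{\pi_s,P_s}\rangle+t\,L^{\pi_s,P_s}(s_0)$ is monotone in $t$ with a slope whose sign does not depend on $(\pi_s,P_s)$.

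Consequently, on $\ell$ the half-space $\mathcal{H}^{\pi_s,P_s}_{+}$ restricts to a closed ray pointing in one fixed direction common to all $(\pi_s,P_s)$ (with the degenerate possibilities $\ell$ or $\emptyset$ only when $s_0\ne s$ and $P^{\pi_s}(s_0)=0$), and $\mathcal{H}^{\pi_s,P_s}_{-}$ restricts to a closed ray in the opposite, also fixed, direction. Therefore $\bigl(\bigcup_{\pi_s}\mathcal{C}^{\pi_s,\mathcal{P}_s}_{+}\bigr)\cap\ell=\bigcup_{\pi_s}\bigcup_{P_s\in\mathcal{P}_s}(\mathcal{H}^{\pi_s,P_s}_{+}\cap\ell)$ is a union of co-directed rays, hence an interval; and $\bigl(\bigcup_{\pi_s}\mathcal{C}^{\pi_s,\mathcal{P}_s}_{-}\bigr)\cap\ell=\bigcup_{\pi_s}\bigcap_{P_s\in\mathcal{P}_s}(\mathcal{H}^{\pi_s,P_s}_{-}\cap\ell)$, where each inner intersection of co-directed rays is a ray (or $\emptyset$, or $\ell$) and the outer union of co-directed rays is again an interval. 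This proves the interval claim for each $s$, and hence $\mathcal{V}^\mathcal{P}\cap\ell$ is an interval. Finally, $\mathcal{V}^\mathcal{P}$ is the image of the compact set $\Pi=(\Delta_\mathcal{A})^\mathcal{S}$ under the continuous map $\pi\mapsto V^{\pi,\mathcal{P}}$, so it is compact; thus $\mathcal{V}^\mathcal{P}\cap\ell$ is a closed and bounded interval, i.e.\ a line segment (possibly degenerating to a point), whenever it is nonempty.

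The main obstacle I anticipate is the bookkeeping in the degenerate case $s_0\ne s$ with $P^{\pi_s}(s_0)=0$: there $\mathcal{H}^{\pi_s,P_s}_{\pm}\cap\ell$ is $\ell$ or $\emptyset$ rather than a proper ray, and one must check that mixing these with the proper rays under the union/intersection pattern above still yields only intervals (it does, since $\ell$ and $\emptyset$ are themselves intervals absorbed correctly). A secondary, purely technical point is the continuity/compactness argument used at the end to upgrade ``interval'' to ``closed bounded line segment''; this needs only joint continuity of $(\pi,P)\mapsto V^{\pi,P}=(I-\gamma P^\pi)^{-1}r^\pi$ together with compactness of $\mathcal{P}$.
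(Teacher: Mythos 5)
Your proof is correct and follows essentially the same route as the paper's: parametrize the axis-parallel line, use the state-wise decomposition of $\mathcal{V}^\mathcal{P}$ into the cones $\mathcal{C}^{\pi_s,\mathcal{P}_s}_{\pm}$, and exploit that the $s_0$-coordinate of $L^{\pi_s,P_s}$ has a fixed sign (positive for $s=s_0$, nonpositive otherwise), so each piece restricts to co-directed rays whose unions and intersections are intervals. Your closing compactness argument for upgrading ``interval'' to ``closed bounded segment'' is a small, sound addition that the paper leaves implicit.
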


\begin{figure}[t]
    \centering
    \includegraphics[width=\linewidth]{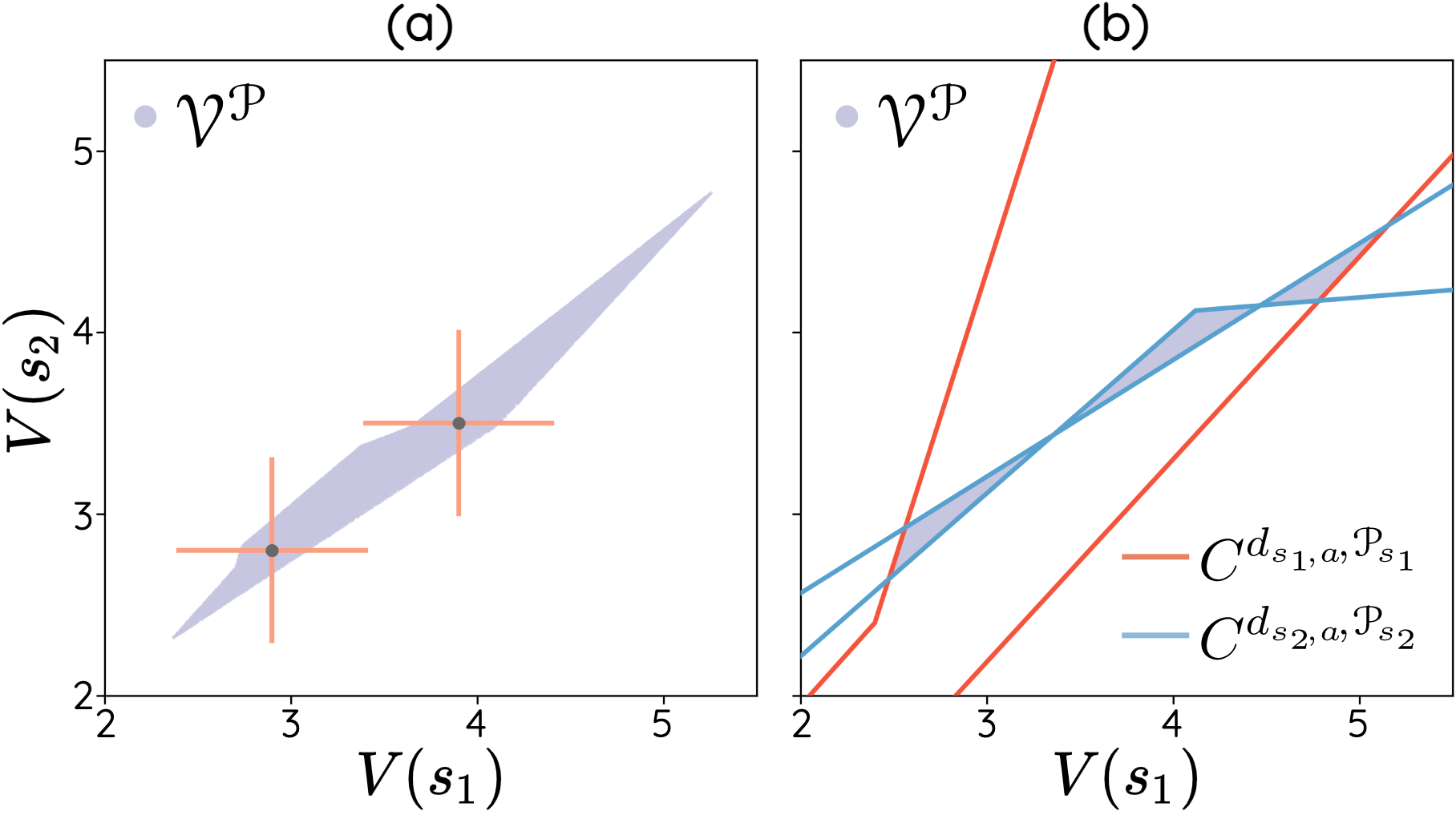}
    \caption{\textbf{(a)} The intersection between the robust value space and axis-parallel lines are line segments. \textbf{(b)} An example showing that the robust value space is not star-convex.}
     \label{fig:non-convexity}
\end{figure}
From the examples in Figure~\ref{fig:robust-vf-polyhedron}, one may wonder if the robust value function space $\mathcal{V}^\mathcal{P}$ is a star-convex set. For many randomly generated RMDPs, $\mathcal{V}^\mathcal{P}$ does look like a star-convex set (see Figure~\ref{fig:star-convexity} in Appendix~\ref{app:figure}). However, we show a carefully crafted counter-example in Figure~\ref{fig:non-convexity}(b), which is clearly not star-shaped. Nevertheless, it seems to be a rare case.
One interesting question to explore in the future is, how non-convex the robust value space can be and how likely it exhibits such non-convexity.
If it is nearly convex for most time, then we might be able to design some efficient algorithms tailored for such case.

\subsection{The Line Theorem for RMDPs}
\label{sec:line-theorem}
As mentioned before, one major obstacle that prevents us from adapting the prior method~\cite{dadashi2019value} from MDPs to RMDPs is that deriving a robust counterpart of the Line Theorem is highly challenging. Here we elaborate on this issue, with the help of our findings about the robust value space.
Without loss of generality, suppose the set of policies only differ on $s_1$.
From the discussions in Section~\ref{sec:sub-polytope}, we know the resulting set of robust value functions is
\begin{equation}
   \left[\bigcap_{i=2}^S C^{\pi_{s_i},\mathcal{P}_{s_i}} \right] \cap \left[\bigcup_{\pi_{s_1}\in\Delta_\mathcal{A}} C^{\pi_{s_1},\mathcal{P}_{s_1}} \right].
\end{equation}
The first term is an intersection of $S-1$ conic hypersurfaces and the second term is an infinite union of conic hypersurfaces. 
Both are hard to further characterize. 
For example, though we know the first term could be a curve, it is challenging to give a closed-form expression for it. 
In comparison, for MDPs, the first term is just a line and its direction is known (see the proof of Lemma 4 (ii) in ~\cite{dadashi2019value}).

\section{Related Works}
\label{sec:related}

The geometry of the space of value functions has been studied only recently. \citet{dadashi2019value} first investigate it, and establish that for MDPs the value space is a possibly non-convex polytope. Their results provide a geometric perspective to help understand the dynamics of different RL algorithms~\cite{kumar2019generalized, chan2020inverse, harb2020policy, chan2021greedification}, and also inspire new methods in representation learning in RL~\cite{bellemare2019geometric, dabney2021value}. In RMDP literature, some works take advantage of the geometric properties of special uncertainty sets to design efficient algorithms~\cite{ho2018fast, behzadian2021fast, ho2021partial}, but no prior works studies the geometry of the robust value space.

Our work can be viewed as an extension of~\cite{dadashi2019value} to RMDPs. We introduce a new perspective to characterize the geometric properties of the value space for RMDPs. Our approach also leads to a finer characterization of the value function polytope in MDPs setting.

\section{Conclusion and Future Work}
\label{sec:conclusion}

In this work, we characterize the geometry of the space of robust value functions from a new perspective, where the value space is decomposed in a state-wise manner.
We show that the robust value space is determined by a set of conic hypersurfaces. Furthermore, we can reduce the uncertainty set to a subset of extreme points without sacrificing any useful information for policy optimization.

There remain some interesting open questions. As discussed in Section~\ref{sec:discuss}, it is worth studying how non-convex the robust value space can be (\ie, can it be approximated as a convex set?). A further question is whether the level of non-convexity increases or decreases with the number of states/actions. Another direction is to investigate the geometry for other uncertainty set, such as coupled uncertainty~\cite{mannor2012lightning}, $r$-rectangular sets~\cite{goyal2018robust} or more general ones.
In addition, as in the non-robust case, it is interesting to study the geometry of robust value functions when the state space is very large and some approximation is needed.
We will leave these questions to future works.

\section*{Acknowledgements}

This work was partially supported by the Israel Science Foundation under contract 2199/20.
We appreciate the valuable feedback from ICML anonymous reviewers. We also thank Bingyi Kang and Pengqian Yu for some helpful discussions about RMDPs.


\bibliography{example_paper}
\bibliographystyle{icml2022}

\newpage
\appendix
\onecolumn

\section{Details of MDPs and RMDPs}
\label{app:specs}

In this section, we give the specifics of the MDPs and the RMDPs used for illustrations in this work.

Figure~\ref{fig:single-v}(a) and Figure~\ref{fig:vf-space}:
\begin{align*}
    & S = 2, A = 3 \\
    & r_{s_1} = (0.0199, 0.6097, 0.8313), r_{s_2} = (0.4044, 0.5534, 0.8319) \\
    & P_{s_1, a_1} = (0.7793, 0.2207), P_{s_1, a_2} = (0.9713, 0.0287), P_{s_1, a_3} = (0.0668, 0.9332) \\
    & P_{s_2, a_1} = (0.0676, 0.9324), P_{s_2, a_2} = (0.5929, 0.4071), P_{s_2, a_3} = (0.2497, 0.7503) \\
    & \pi_{s_1} = (0.2, 0.3, 0.5), \pi_{s_2} = (0.3, 0.1, 0.6)
\end{align*}

Figure~\ref{fig:single-v}(b):
\begin{align*}
    & S = 3, A = 2 \\
    & r_{s_1} = (0.5, 0.8), r_{s_2} = (0.4, 0.2), r_{s_3} = (0.2, 0.6) \\
    & P_{s_1, a_1} = (0.14, 0.75, 0.11), P_{s_1, a_2} = (0.44, 0.45, 0.11) \\
    & P_{s_2, a_1} = (0.23, 0.19, 0.58), P_{s_2, a_2} = (0.44, 0.32, 0.24) \\
    & P_{s_3, a_1} = (0.45, 0.43, 0.12), P_{s_3, a_2} = (0.14, 0.54, 0.32) \\
    & \pi_{s_1} = (0.46, 0.54), \pi_{s_2} = (0.38, 0.62),
    \pi_{s_3} = (0.49, 0.51)
\end{align*}

Figure~\ref{fig:robust-v-example}:
\begin{align*}
    & S = 2, A = 2 \\
    & r_{s_1} = (0.5, 0.6), r_{s_2} = (0.4, 0.7) \\
    & \mathcal{P}_{s_1} = \left\{
    \begin{pmatrix}
    0.78 & 0.22 \\
    0.79 & 0.21
    \end{pmatrix},
    \begin{pmatrix}
    0.85 & 0.15 \\
    0.99 & 0.01
    \end{pmatrix}
    \right\} \\
    & \mathcal{P}_{s_2} = \left\{
    \begin{pmatrix}
    0.59 & 0.41 \\
    0.92 & 0.08
    \end{pmatrix},
    \begin{pmatrix}
    0.60 & 0.40 \\
    0.39 & 0.61
    \end{pmatrix}
    \right\} \\
    & \pi_{s_1} = (0.45, 0.55), \pi_{s_2} = (0.10, 0.90)
\end{align*}

Figure~\ref{fig:conic-hypersurface}:
\begin{align*}
    & S = 2, A = 2 \\
    & r_{s_1} = (0.5, 0.6), r_{s_2} = (0.4, 0.7) \\
    & \mathcal{P}_{s_1} = \left\{
    \begin{pmatrix}
    0.78 & 0.22 \\
    0.79 & 0.21
    \end{pmatrix},
    \begin{pmatrix}
    0.85 & 0.15 \\
    0.99 & 0.01
    \end{pmatrix},
    \begin{pmatrix}
    0.92 & 0.08 \\
    0.99 & 0.01
    \end{pmatrix},
    \begin{pmatrix}
    0.92 & 0.08 \\ 
    0.83 & 0.17
    \end{pmatrix}
    \right\} \\
    & \mathcal{P}_{s_2} = \left\{
    \begin{pmatrix}
    0.59 & 0.41 \\
    0.92 & 0.08
    \end{pmatrix}
    \right\} \\
    & \pi_{s_1} = (0.45, 0.55), \pi_{s_2} = (0.10, 0.90)
\end{align*}

Figure~\ref{fig:robust-vf-space}, Figure~\ref{fig:robust-vf-polyhedron}(a), Figure~\ref{fig:extra-region}, Figure~\ref{fig:polar-cone} and Figure~\ref{fig:non-convexity}(a):
\begin{align*}
    & S = 2, A = 2 \\
    & r_{s_1} = (0.27, 0.9398), r_{s_2} = (0.3374, 0.2212) \\
    & \mathcal{P}_{s_1} = \left\{
    \begin{pmatrix}
    0.95 & 0.05 \\
    0.17 & 0.83
    \end{pmatrix},
    \begin{pmatrix}
    0.24 & 0.76 \\
    0.05 & 0.95
    \end{pmatrix}
    \right\} \\
    & \mathcal{P}_{s_2} = \left\{
    \begin{pmatrix}
    0.07 & 0.93 \\
    0.83 & 0.17
    \end{pmatrix},
    \begin{pmatrix}
    0.70 & 0.30 \\
    0.23 & 0.77
    \end{pmatrix}
    \right\} \\
    & \pi_{s_1} = (0.8, 0.2), \pi_{s_2} = (0.9, 0.1)
\end{align*}

Figure~\ref{fig:robust-vf-polyhedron}(b):
\begin{align*}
    & S = 2, A = 2 \\
    & r_{s_1} = (0.27, 0.9398), r_{s_2} = (0.3374, 0.2212) \\
    & \mathcal{P}_{s_1, a_1} = \{
    (0.95, 0.05), (0.24, 0.76)
    \} \\
    & \mathcal{P}_{s_1, a_2} = \{
    (0.17, 0.83), (0.05, 0.95)
    \} \\
    & \mathcal{P}_{s_2, a_1} = \{
    (0.07, 0.93), (0.70, 0.30)
    \} \\
    & \mathcal{P}_{s_1, a_1} = \{
    (0.83, 0.17), (0.23, 0.77)
    \}
\end{align*}

Figure~\ref{fig:non-convexity}(b):
\begin{align*}
    & S = 2, A = 2 \\
    & r_{s_1} = (0.24, 0.998), r_{s_2} = (0.3574, 0.412) \\
    & \mathcal{P}_{s_1} = \left\{
    \begin{pmatrix}
    0.95 & 0.05 \\ 
    0.05 & 0.95
    \end{pmatrix},
    \begin{pmatrix}
    0.24 & 0.76 \\
    0.95 & 0.05
    \end{pmatrix}
    \right\} \\
    & \mathcal{P}_{s_2} = \left\{
    \begin{pmatrix}
    0.2 & 0.8 \\ 
    0.99 & 0.01
    \end{pmatrix},
    \begin{pmatrix}
    0.2 & 0.8 \\ 
    0.01 & 0.99
    \end{pmatrix}
    \right\}
\end{align*}

\section{Proofs}
\label{app:proof}

\singlev*
\begin{proof} 
Observe that
\begin{equation}
    H^{\pi_s, P_s} = \{\mathbf{x}\in\mathbb{R}^\mathcal{S}\mid \langle \mathbf{x}, L^{\pi_s,P_s} \rangle = r^{\pi_s} \}
\end{equation}
is the set of vectors that satisfy the $s$-th equation of the following system of linear equations:
\begin{equation}
    (I-\gamma P^\pi)\mathbf{x} =  r^\pi.
\end{equation}
Since $(I-\gamma P^{\pi})$ is invertible, this system of linear equations has a unique solution $V^{\pi,P}$. Hence, we have
\begin{equation}
    \{V^{\pi,P}\} = \bigcap_{s\in\mathcal{S}} H^{\pi_s,P_s}
\end{equation}
which completes the proof.
\end{proof}

\vfspace*
\begin{proof}
By the definition of $\mathcal{V}^{P}$ and Lemma~\ref{lem:single-v}, we have
\begin{equation}
    \mathcal{V}^P = \bigcup_{\pi\in\Pi}\{V^{\pi,P}\} = \bigcup_{\pi\in\Pi}\bigcap_{s\in\mathcal{S}}H^{\pi_s,P_s}.
\end{equation}
We can break the union into nested unions by fixing $\pi_s$ for each $s$:
\begin{equation}
   \bigcup_{\pi\in\Pi}\bigcap_{s\in\mathcal{S}}H^{\pi_s,P_s} = \bigcup_{\pi_{s_{S}}\in\Delta_\mathcal{A}} \cdots \bigcup_{\pi_{s_{2}}\in\Delta_\mathcal{A}} \bigcup_{\pi_{s_{1}}\in\Delta_\mathcal{A}} \bigcap_{s\in\mathcal{S}}H^{\pi_s,P_s}.
\end{equation}
Then, we have
\begin{equation}
    \begin{aligned}
    \mathcal{V}^P & = \bigcup_{\pi_{s_{S}}\in\Delta_\mathcal{A}} \cdots \bigcup_{\pi_{s_{2}}\in\Delta_\mathcal{A}} \bigcup_{\pi_{s_{1}}\in\Delta_\mathcal{A}} \bigcap_{s\in\mathcal{S}}H^{\pi_s,P_s} \\
    & = \bigcup_{\pi_{s_{S}}\in\Delta_\mathcal{A}} \cdots \bigcup_{\pi_{s_{2}}\in\Delta_\mathcal{A}} \bigcup_{\pi_{s_{1}}\in\Delta_\mathcal{A}} \left[ H^{\pi_{s_1},P_{s_1}} \cap \left[ \bigcap_{i=2}^S H^{\pi_{s_i},P_{s_i}}\right]\right] \\
    & = \bigcup_{\pi_{s_{S}}\in\Delta_\mathcal{A}} \cdots \bigcup_{\pi_{s_{2}}\in\Delta_\mathcal{A}} \left[\left[\bigcup_{\pi_{s_{1}}\in\Delta_\mathcal{A}}  H^{\pi_{s_1},P_{s_1}}\right] \cap \left[ \bigcap_{i=2}^S H^{\pi_{s_i},P_{s_i}}\right]\right]. & \text{(distributive law of sets)}
\end{aligned}
\end{equation}
By iteratively applying the distributive law of sets, we can obtain
\begin{equation}
    \mathcal{V}^P = \bigcap_{s\in\mathcal{S}}\bigcup_{\pi_s\in\Delta_\mathcal{A}} H^{\pi_s,P_s}
\end{equation}
which completes the proof.
\end{proof}

\vfspacetwo*
\begin{proof}
First, by the distributive property of sets, it is trivial to obtain $\textrm{LHS}\subseteq\textrm{RHS}$. Next, we will show $\textrm{RHS}\subseteq\textrm{LHS}$. For any $\mathbf{x}\in\textrm{RHS}$, we have
\begin{equation}
    \exists\,\pi'_s, \pi''_s \in \Delta_\mathcal{A} \quad\textrm{s.t.}\quad \mathbf{x}\in \mathcal{H}^{\pi'_s,P_s}_{+} \cap \mathcal{H}^{\pi''_s,P_s}_{-}.
\end{equation}
When $\pi'_s=\pi''_s$, it is trivial to obtain $\mathbf{x}\in\textrm{LHS}$. When $\pi'_s\neq\pi''_s$, then there exists $\alpha,\beta\ge 0$ such that
\begin{equation}
    \begin{aligned}
    \langle \mathbf{x}, L^{\pi'_s,P_s} \rangle - r^{\pi'_s} & = \alpha, \\
    \langle \mathbf{x}, L^{\pi''_s,P_s} \rangle - r^{\pi''_s} & = -\beta.
\end{aligned}
\end{equation}

When either $\alpha=0$ or $\beta=0$, we have $\mathbf{x}\in H^{\pi'_s,P_s}$ or $\mathbf{x}\in H^{\pi''_s,P_s}$, and accordingly $\mathbf{x}\in\textrm{LHS}$. Therefore, we only focus on the case where $\alpha,\beta > 0$. If we set
\begin{equation}
    \pi^\dagger_s = \frac{\beta}{\alpha+\beta} \pi'_s + \frac{\alpha}{\alpha+\beta} \pi''_s,
\end{equation}
then we have
\begin{equation}
    \begin{aligned}
    \, & \langle \mathbf{x}, L^{\pi^\dagger_s,P_s} \rangle - r^{\pi^\dagger_s} \\
    =\, & \langle \mathbf{x}, \frac{\beta}{\alpha+\beta} L^{\pi'_s,P_s} + \frac{\alpha}{\alpha+\beta} L^{\pi''_s,P_s} \rangle - \frac{\beta}{\alpha+\beta} r^{\pi'_s} - \frac{\alpha}{\alpha+\beta} r^{\pi''_s} \\
    =\, & \langle \mathbf{x}, \frac{\beta}{\alpha+\beta} L^{\pi'_s,P_s} \rangle - \frac{\beta}{\alpha+\beta} r^{\pi'_s} + \langle \mathbf{x}, \frac{\alpha}{\alpha+\beta} L^{\pi''_s,P_s} \rangle - \frac{\alpha}{\alpha+\beta} r^{\pi''_s} \\
    =\, & \frac{\beta}{\alpha+\beta} \left(\langle\mathbf{x}, L^{\pi'_s,P_s} \rangle - r^{\pi'_s}\right) + \frac{\alpha}{\alpha+\beta} \left(\langle\mathbf{x}, L^{\pi''_s,P_s} \rangle - r^{\pi''_s}\right) \\
    =\, & 0.
\end{aligned}
\end{equation}

Note that $\pi^\dagger_s\in\Delta_\mathcal{A}$. The above result implies $\mathbf{x}$ lies in the hyperplane $H^{\pi^\dagger_s,P_s}$. Thus $\mathbf{x}\in\textrm{LHS}$ and accordingly $\textrm{RHS}\subseteq\textrm{LHS}$. Putting it together, we obtain $\textrm{LHS}=\textrm{RHS}$.
\end{proof}

\vfpolyhedronlem*
\begin{proof}
We first prove $\bigcup_{a\in\mathcal{A}} \mathcal{H}^{d_{s,a},P_s}_{+} = \bigcup_{\pi_s\in\Delta_\mathcal{A}} \mathcal{H}^{\pi_s,P_s}_{+}$. It is trivial that $\textrm{LHS}\subseteq\textrm{RHS}$. We then focus on proving $\textrm{RHS}\subseteq\textrm{LHS}$. For any $\mathbf{x}\in\textrm{RHS}$, we have
\begin{equation}
    \exists\,\pi'_s\in\Delta_\mathcal{A} \quad\textrm{s.t.}\quad \mathbf{x}\in\mathcal{H}^{\pi'_s,P_s}_{+}.
\end{equation}
Note that any $\pi_s\in\Delta_\mathcal{A}$ can be written as a convex combination of $d_{s,a}, a\in\mathcal{A}$. In our case, we write
\begin{equation}
    \pi'_s = \sum_{a\in\mathcal{A}} \pi'_{s,a} d_{s,a},
\end{equation}
then we have
\begin{equation}
    \begin{aligned}
    \langle \mathbf{x}, L^{\pi'_s,P_s} \rangle - r^{\pi'_s} & \ge 0 \\
    \langle \mathbf{x}, \sum_{a\in\mathcal{A}} \pi'_{s,a} L^{d_{s,a},P_s} \rangle - \sum_{a\in\mathcal{A}} \pi'_{s,a} r^{d_{s,a}} & \ge 0 \\
    \sum_{a\in\mathcal{A}} \pi'_{s,a} \left( \langle \mathbf{x}, L^{d_{s,a},P_s} \rangle - r^{d_{s,a}}\right) & \ge 0.
\end{aligned}
\end{equation}

Since $\pi'_{s,a}\ge 0$ for all $a\in\mathcal{A}$, the above inequality implies
\begin{equation}
    \exists\,a'\in\mathcal{A} \quad\textrm{s.t.}\quad \langle \mathbf{x}, L^{d_{s,a'},P_s} \rangle - r^{d_{s,a'}} \ge 0.
\end{equation}
This is equivalent to $\mathbf{x}\in\textrm{LHS}$. Putting it together, we obtain $\textrm{LHS}=\textrm{RHS}$.

The second part $\bigcup_{a\in\mathcal{A}} \mathcal{H}^{d_{s,a},P_s}_{-} = \bigcup_{\pi_s\in\Delta_\mathcal{A}} \mathcal{H}^{\pi_s,P_s}_{-}$ can be proved in the same way.
\end{proof}

\vfpolyhedron*
\begin{proof}
The first equality follow immediately from Lemma~\ref{lem:vf-space}, Lemma~\ref{lem:vf-space-2} and Lemma~\ref{lem:vf-polyhedron}. The second equality can be obtained using the distributive law of sets.
\end{proof}

\robustvboundary*
\begin{proof}
For any $\pi'\in Y^{\pi_s}$, from Eqn.~\eqref{eqn:s-rect-exist}, we know that
\begin{equation}
    \exists\,P_\dagger\in\mathcal{P}, \quad\textrm{s.t.}\quad \forall P\in\mathcal{P},\, V^{\pi', P_\dagger} \le V^{\pi', P}.
\end{equation}
Using the Bellman equation~\cite{bellman1957dynamic}, we can obtain
\begin{equation}
\begin{aligned}
   V^{\pi', P_\dagger} - V^{\pi', P} & = \gamma P^{\pi'}_\dagger V^{\pi', P_\dagger} - \gamma P^{\pi'} V^{\pi', P} \\
   & = \gamma(P^{\pi'}_\dagger - P^{\pi'}) V^{\pi',P_\dagger} - \gamma P^{\pi'} (V^{\pi', P_\dagger} - V^{\pi', P}) \\
   & = (I-\gamma P^{\pi'})^{-1} \gamma(P^{\pi'}_\dagger - P^{\pi'})V^{\pi',P_\dagger}
\end{aligned}
\end{equation}
Note that $(I-\gamma P^{\pi'})^{-1}=\sum_{t=0}^\infty (\gamma P^{\pi'})^t \ge 0$. Thus, we have
\begin{equation}
    \forall P\in\mathcal{P},\quad
    \gamma(P^{\pi'}_\dagger - P^{\pi'})V^{\pi',P_\dagger} \le 0
\end{equation}
Rearranging the above inequality, we obtain
\begin{equation}
    \forall P\in\mathcal{P},\quad
    (I - \gamma P^{\pi'})V^{\pi',P_\dagger} \le (I-\gamma P^{\pi'}_\dagger)V^{\pi',P_\dagger}.
\end{equation}
Since $(I-\gamma P^{\pi'}_\dagger)V^{\pi',P_\dagger} = r^{\pi'}$ and $V^{\pi',\mathcal{P}}=V^{\pi',P_\dagger}$, we have
\begin{equation}
    \forall P\in\mathcal{P},\quad
    (I - \gamma P^{\pi'})V^{\pi',\mathcal{P}} \le r^{\pi'}.
\end{equation}
Taking the $s$-th inequality and noting that $\pi'_s = \pi_s$, we have
\begin{equation}
    \forall P_s\in\mathcal{P}_s, \quad \langle V^{\pi', \mathcal{P}}, L^{\pi_s,P_s} \rangle \le r^{\pi_s}.
\end{equation}
Therefore, we have
\begin{equation}
    f_\mathcal{P}(Y^{\pi_s})\subseteq\bigcap_{P_s\in\mathcal{P}_s} \mathcal{H}^{\pi_s, P_s}_{-}.
\end{equation}
On the other hand, from Eqn.~\eqref{eqn:s-rect-exist} we know
\begin{equation}
    \exists\,P_s\in\mathcal{P}_s,\quad \langle V^{\pi', \mathcal{P}}, L^{\pi_s, P_s} \rangle = r^{\pi_s},
\end{equation}
which is equivalent to
\begin{equation}
    f_\mathcal{P}(Y^{\pi_s}) \subseteq \bigcup_{P_s\in\mathcal{P}_s} H^{\pi_s, P_s}.
\end{equation}
Putting it together, we get
\begin{equation}
    f_\mathcal{P}(Y^{\pi_s}) \subseteq \left[ \bigcap_{P_s\in\mathcal{P}_s} \mathcal{H}^{\pi_s, P_s}_{-}\right] \cap \left[ \bigcup_{P_s\in\mathcal{P}_s} H^{\pi_s, P_s} \right],
\end{equation}
which completes the proof.
\end{proof}

\intersect*
\begin{proof}
Recall that
\begin{equation}
    H^{\pi_s,P_s} = \{\mathbf{x}\in\mathbb{R}^\mathcal{S}\mid \langle \mathbf{x}, L^{\pi_s, P_s} \rangle = r^{\pi_s}\}.
\end{equation}
From the definition of $L^{\pi_s, P_s}$, we know
\begin{equation}
    \langle \mathbf{1}, L^{\pi_s, P_s} \rangle = \frac{1}{1-\gamma}.
\end{equation}
Thus, it is easy to verify that $\langle \frac{r^{\pi_s}}{1-\gamma}\mathbf{1}, L^{\pi_s, P_s} \rangle = r^{\pi_s}$ for all $P_s\in\mathcal{P}_s$, which concludes the proof.
\end{proof}

\conicsurface*
\begin{proof}
This corollary is a restatement of Lemma~\ref{lem:robust-v-boundary}. Note that
\begin{equation}
    \begin{aligned}
    \left[ \bigcap_{P_s\in\mathcal{P}_s} \mathcal{H}^{\pi_s, P_s}_{-}\right] \cap \left[ \bigcup_{P_s\in\mathcal{P}_s} H^{\pi_s, P_s} \right]
    = & \bigcup_{P_s\in\mathcal{P}_s}\left[\left[\bigcap_{P_s\in\mathcal{P}_s} \mathcal{H}^{\pi_s, P_s}_{-}\right] \cap  H^{\pi_s, P_s} \right] \\
    =& \bigcup_{P_s\in\mathcal{P}_s}\left[\left[\bigcap_{P_s\in\mathcal{P}_s} \mathcal{H}^{\pi_s, P_s}_{-}\right] \cap  \mathcal{H}^{\pi_s, P_s}_{+} \right] \\
    = & \left[\bigcap_{P_s\in\mathcal{P}_s} \mathcal{H}^{\pi_s, P_s}_{-}\right] \cap \left[ \bigcup_{P_s\in\mathcal{P}_s} \mathcal{H}^{\pi_s, P_s}_{+} \right] \\
    = & \,\,\mathcal{C}^{\pi_s, \mathcal{P}_s}_{-} \cap \mathcal{C}^{\pi_s, \mathcal{P}_s}_{+}.
\end{aligned}
\end{equation}

From Lemma~\ref{lem:intersect}, we know all halfspaces $\mathcal{H}^{\pi_s, P_s}_{-}$ intersect at the same point. Then their intersection $\mathcal{C}^{\pi_s, \mathcal{P}_s}_{-}$ will be a convex cone. Note that each $H^{\pi_s, P_s}$ is a supporting hyperplane of the cone $\mathcal{C}^{\pi_s, \mathcal{P}_s}_{-}$ and all $H^{\pi_s, P_s}$ determine this cone. Thus the intersection of $\bigcup_{P_s\in\mathcal{P}_s} H^{\pi_s, P_s}$ and $\mathcal{C}^{\pi_s, \mathcal{P}_s}_{-}$ is exactly the surface of $\mathcal{C}^{\pi_s, \mathcal{P}_s}_{-}$.
\end{proof}

\robustsinglev*
\begin{proof}
For any $\mathbf{x}\in\textrm{RHS}$, we have that for all $s\in\mathcal{S}$
    \begin{align}
    \exists\, P_s\in\mathcal{P}_s, \quad \langle \mathbf{x}, L^{\pi_s, P_s} \rangle = r^{\pi_s}; \\
    \forall\, P_s\in\mathcal{P}_s, \quad \langle \mathbf{x}, L^{\pi_s, P_s} \rangle \le r^{\pi_s}.
\end{align}
Since $P$ is $s$-rectangular, we have
\begin{align}
   \exists\, P\in\mathcal{P}, \quad (I-\gamma P^{\pi}) \mathbf{x} = r^{\pi}; \\  \forall\, P\in\mathcal{P}, \quad (I-\gamma P^{\pi}) \mathbf{x} \le r^{\pi}.
   \label{eqn:proof-lem-4-4}
\end{align}
Since the Bellman equation has a unique solution, the first line implies $\exists\, P_\dagger\in\mathcal{P}, \mathbf{x}=V^{\pi, P_\dagger}$. Suppose $V^{\pi, P_\dagger} \neq V^{\pi, \mathcal{P}}$, then from Eqn.~\eqref{eqn:s-rect-exist} we have
\begin{equation}
    \exists\, P_\ddagger\in\mathcal{P}, \quad\textrm{s.t.}\quad V^{\pi, P_\ddagger} = V^{\pi, \mathcal{P}} < V^{\pi, P_\dagger}.
\end{equation}
On the other hand, from Eqn.~\eqref{eqn:proof-lem-4-4}, we know
\begin{equation}
\begin{aligned}
    (I-\gamma P^{\pi}_\ddagger) V^{\pi, P_\dagger} - r^{\pi} & \le 0  \\
    (I-\gamma P^{\pi}_\ddagger) V^{\pi, P_\dagger} - (I-\gamma P^{\pi}_\dagger) V^{\pi, P_\dagger} & \le 0 \\
    \gamma (P^{\pi}_\dagger - P^{\pi}_\ddagger)V^{\pi, P_\dagger} & \le 0 \\
    (I-\gamma P^{\pi}_\ddagger)^{-1} \gamma (P^{\pi}_\dagger - P^{\pi}_\ddagger)V^{\pi, P_\dagger} & \le 0
    & \text{(see the proof of Lemma~\ref{lem:robust-v-boundary})}
    \\
    V^{\pi, P_\dagger} - V^{\pi, P_\ddagger} & \le 0 \\
    V^{\pi, P_\dagger} & \le V^{\pi, P_\ddagger}.
\end{aligned}
\end{equation}
We have an contradiction.  Therefore, we can conclude $\mathbf{x}= V^{\pi, \mathcal{P}}$ and accordingly $\{V^{\pi, \mathcal{P}}\} = \bigcap_{s\in\mathcal{S}} C^{\pi_s, \mathcal{P}_s}$.
\end{proof}

\robustvfspace*
\begin{proof}
The proof below follows exactly the same procedure as the proof of Lemma~\ref{lem:vf-space}.
By the definition of $\mathcal{V}^\mathcal{P}$ and Lemma~\ref{lem:robust-single-v}, we have
\begin{equation}
    \mathcal{V}^\mathcal{P} = \bigcup_{\pi\in\Pi}\{V^{\pi,\mathcal{P}}\} = \bigcup_{\pi\in\Pi}\bigcap_{s\in\mathcal{S}}C^{\pi_s,\mathcal{P}_s}.
\end{equation}
We can break the union into nested unions by fixing $\pi_s$ for each $s$:
\begin{equation}
   \bigcup_{\pi\in\Pi}\bigcap_{s\in\mathcal{S}}C^{\pi_s,\mathcal{P}_s} = \bigcup_{\pi_{s_{S}}\in\Delta_\mathcal{A}} \cdots \bigcup_{\pi_{s_{2}}\in\Delta_\mathcal{A}} \bigcup_{\pi_{s_{1}}\in\Delta_\mathcal{A}} \bigcap_{s\in\mathcal{S}} C^{\pi_s,\mathcal{P}_s}.
\end{equation}
Then, we have
\begin{equation}
    \begin{aligned}
    \mathcal{V}^\mathcal{P} & = \bigcup_{\pi_{s_{S}}\in\Delta_\mathcal{A}} \cdots \bigcup_{\pi_{s_{2}}\in\Delta_\mathcal{A}} \bigcup_{\pi_{s_{1}}\in\Delta_\mathcal{A}} \bigcap_{s\in\mathcal{S}}C^{\pi_s,\mathcal{P}_s} \\
    & = \bigcup_{\pi_{s_{S}}\in\Delta_\mathcal{A}} \cdots \bigcup_{\pi_{s_{2}}\in\Delta_\mathcal{A}} \bigcup_{\pi_{s_{1}}\in\Delta_\mathcal{A}} \left[ C^{\pi_{s_1},\mathcal{P}_{s_1}} \cap \left[ \bigcap_{i=2}^S C^{\pi_{s_i},\mathcal{P}_{s_i}}\right]\right] \\
    & = \bigcup_{\pi_{s_{S}}\in\Delta_\mathcal{A}} \cdots \bigcup_{\pi_{s_{2}}\in\Delta_\mathcal{A}} \left[\left[\bigcup_{\pi_{s_{1}}\in\Delta_\mathcal{A}} C^{\pi_{s_1},\mathcal{P}_{s_1}}\right] \cap \left[ \bigcap_{i=2}^S C^{\pi_{s_i},\mathcal{P}_{s_i}}\right]\right]. & \text{(distributive law of sets)}
\end{aligned}
\end{equation}

By iteratively applying the distributive law of sets, we can obtain
\begin{equation}
    \mathcal{V}^\mathcal{P} = \bigcap_{s\in\mathcal{S}}\bigcup_{\pi_s\in\Delta_\mathcal{A}} C^{\pi_s,\mathcal{P}_s},
\end{equation}
which completes the proof.
\end{proof}

\robustvfspacetwo*
\begin{proof}
Recall that $C^{\pi_s,\mathcal{P}_s}=\mathcal{C}^{\pi_s,\mathcal{P}_s}_{+} \cap \mathcal{C}^{\pi_s,\mathcal{P}_s}_{-}$, then we need to prove
\begin{equation}
    \bigcup_{\pi_s\in\Delta_\mathcal{A}} \mathcal{C}^{\pi_s,\mathcal{P}_s}_{+} \cap \mathcal{C}^{\pi_s,\mathcal{P}_s}_{-}
    = \left[\bigcup_{\pi_s\in\Delta_\mathcal{A}} \mathcal{C}^{\pi_s,\mathcal{P}_s}_{+}\right] \cap \left[\bigcup_{\pi_s\in\Delta_\mathcal{A}} \mathcal{C}^{\pi_s,\mathcal{P}_s}_{-}\right].
\end{equation}
First, by the distributive property of sets, it is trivial to obtain $\textrm{LHS}\subseteq\textrm{RHS}$. Next, we will show $\textrm{RHS}\subseteq\textrm{LHS}$. For any $\mathbf{x}\in\textrm{RHS}$, we have
\begin{equation}
    \exists\,\pi'_s, \pi''_s \in \Delta_\mathcal{A} \quad\textrm{s.t.}\quad \mathbf{x}\in \mathcal{C}^{\pi'_s,\mathcal{P}_s}_{+} \cap \mathcal{C}^{\pi''_s,\mathcal{P}_s}_{-}.
\end{equation}
When $\pi'_s=\pi''_s$, it is trivial to obtain $\mathbf{x}\in\textrm{LHS}$. When $\pi'_s\neq\pi''_s$, then we have
\begin{equation}
    \begin{aligned}
    & \exists\,P_s\in\mathcal{P}_s,\quad \langle \mathbf{x}, L^{\pi'_s,P_s} \rangle - r^{\pi'_s} \ge 0; \\
    & \forall\,P_s\in\mathcal{P}_s,\quad \langle \mathbf{x}, L^{\pi''_s,P_s} \rangle - r^{\pi''_s} \le 0.
\end{aligned}
\end{equation}

If there exists $P_s\in\mathcal{P}_s$ such that $\langle \mathbf{x}, L^{\pi''_s,P_s} \rangle - r^{\pi''_s} = 0$, then we will get $\mathbf{x}\in H^{\pi''_s,P_s}\subseteq\mathcal{C}^{\pi''_s,\mathcal{P}_s}_{+}$ and accordingly $\mathbf{x}\in\textrm{LHS}$. Therefore, we only consider the case where
\begin{equation}
    \begin{aligned}
    & \exists\,P_s\in\mathcal{P}_s,\quad \langle \mathbf{x}, L^{\pi'_s,P_s} \rangle - r^{\pi'_s} \ge 0; \\
    & \forall\,P_s\in\mathcal{P}_s,\quad \langle \mathbf{x}, L^{\pi''_s,P_s} \rangle - r^{\pi''_s} < 0.
\end{aligned}
\end{equation}

We denote
\begin{equation}
    \begin{aligned}
    \alpha^{P_s} & \coloneqq \langle \mathbf{x}, L^{\pi'_s,P_s} \rangle - r^{\pi'_s}, \\
    \beta^{P_s} & \coloneqq r^{\pi''_s} - \langle \mathbf{x}, L^{\pi''_s,P_s} \rangle, \\
    \mathscr{P}_s & \coloneqq \{P_s\mid \alpha^{P_s} \ge 0,\, \beta^{P_s} > 0,\, P_s\in\mathcal{P}_s\}, \\
    \lambda & \coloneqq \min_{P_s\in\mathscr{P}_s} \frac{\alpha^{P_s}}{\alpha^{P_s} + \beta^{P_s}}.
\end{aligned}
\end{equation}

and accordingly
\begin{equation}
    1-\lambda = \max_{P_s\in\mathscr{P}_s} \frac{\beta^{P_s}}{\alpha^{P_s} + \beta^{P_s}}.
\end{equation}
We construct
\begin{equation}
    \pi^\dagger_s \coloneqq (1-\lambda) \pi'_s + \lambda \pi''_s.
\end{equation}
Note that $0\le\lambda\le 1$. We have $\pi^\dagger_s\in\Delta_\mathcal{A}$ since $\pi^\dagger_s$ is a convex combination of $\pi'_s$ and $\pi''_s$. Then we are going to show that $\mathbf{x}\in\mathcal{C}^{\pi^\dagger_s,\mathcal{P}_s}_{+} \cap \mathcal{C}^{\pi^\dagger_s,\mathcal{P}_s}_{-}$, \ie,
\begin{equation}
    \begin{aligned}
    & \exists\,P_s\in\mathcal{P}_s,\quad \langle \mathbf{x}, L^{\pi^\dagger_s,P_s} \rangle - r^{\pi^\dagger_s} \ge 0; \\
    & \forall\,P_s\in\mathcal{P}_s,\quad \langle \mathbf{x}, L^{\pi^\dagger_s,P_s} \rangle - r^{\pi^\dagger_s} \le 0.
\end{aligned}
\end{equation}

On the one hand, denoting
\begin{equation}
    P^\dagger_s \coloneqq \argmin_{P_s\in\mathscr{P}_s} \frac{\alpha^{P_s}}{\alpha^{P_s} + \beta^{P_s}},
\end{equation}
we have
\begin{equation}
    \begin{aligned}
    \langle \mathbf{x}, L^{\pi^\dagger_s,P^\dagger_s} \rangle - r^{\pi^\dagger_s} & = \langle \mathbf{x}, (1-\lambda) L^{\pi'_s,P^\dagger_s} + \lambda L^{\pi''_s,P^\dagger_s} \rangle - (1-\lambda) r^{\pi'_s} - \lambda r^{\pi''_s} \\
    & = (1-\lambda) \left(\langle \mathbf{x}, L^{\pi'_s,P^\dagger_s} \rangle - r^{\pi'_s} \right) - \lambda \left(r^{\pi''_s} - \langle \mathbf{x}, L^{\pi''_s,P^\dagger_s} \rangle\right) \\
    & = (1-\lambda) \alpha^{P^\dagger_s} - \lambda \beta^{P^\dagger_s} \\
    & = \frac{\beta^{P^\dagger_s}\alpha^{P^\dagger_s}}{\alpha^{P^\dagger_s} + \beta^{P^\dagger_s}} - \frac{\alpha^{P\dagger_s}\beta^{P\dagger_s}}{\alpha^{P\dagger_s} + \beta^{P\dagger_s}} \\
    & = 0.
\end{aligned}
\end{equation}

On the other hand, for all $P_s\in\mathcal{P}_s$ we have \begin{equation}
    \begin{aligned}
    \langle \mathbf{x}, L^{\pi^\dagger_s,P_s} \rangle - r^{\pi^\dagger_s} & = \langle \mathbf{x}, (1-\lambda) L^{\pi'_s,P_s} + \lambda L^{\pi''_s,P_s} \rangle - (1-\lambda) r^{\pi'_s} - \lambda r^{\pi''_s} \\
    & = (1-\lambda) \left(\langle \mathbf{x}, L^{\pi'_s,P_s} \rangle - r^{\pi'_s} \right) - \lambda \left(r^{\pi''_s} - \langle \mathbf{x}, L^{\pi''_s,P_s} \rangle\right) \\
    & = (1-\lambda) \alpha^{P_s} - \lambda \beta^{P_s} \\
    & \le \frac{\beta^{P_s}\alpha^{P_s}}{\alpha^{P_s} + \beta^{P_s}} - \lambda \beta^{P_s} \\
    & \le \frac{\beta^{P_s}\alpha^{P_s}}{\alpha^{P_s} + \beta^{P_s}} - \frac{\alpha^{P_s}\beta^{P_s}}{\alpha^{P_s} + \beta^{P_s}} \\
    & \le 0.
\end{aligned}
\end{equation}
\begin{equation}
    \begin{aligned}
    \langle \mathbf{x}, L^{\pi^\dagger_s,P_s} \rangle - r^{\pi^\dagger_s} & = \langle \mathbf{x}, (1-\lambda) L^{\pi'_s,P_s} + \lambda L^{\pi''_s,P_s} \rangle - (1-\lambda) r^{\pi'_s} - \lambda r^{\pi''_s} \\
    & = (1-\lambda) \left(\langle \mathbf{x}, L^{\pi'_s,P_s} \rangle - r^{\pi'_s} \right) - \lambda \left(r^{\pi''_s} - \langle \mathbf{x}, L^{\pi''_s,P_s} \rangle\right) \\
    & = (1-\lambda) \alpha^{P_s} - \lambda \beta^{P_s} \\
    & \le \frac{\beta^{P_s}\alpha^{P_s}}{\alpha^{P_s} + \beta^{P_s}} - \lambda \beta^{P_s} \\
    & \le \frac{\beta^{P_s}\alpha^{P_s}}{\alpha^{P_s} + \beta^{P_s}} - \frac{\alpha^{P_s}\beta^{P_s}}{\alpha^{P_s} + \beta^{P_s}} \\
    & \le 0.
\end{aligned}
\end{equation}

Putting it together, we obtain $\mathbf{x}\in\mathcal{C}^{\pi^\dagger_s,\mathcal{P}_s}_{+} \cap \mathcal{C}^{\pi^\dagger_s,\mathcal{P}_s}_{-}$ and thus $\mathbf{x}\in\textrm{LHS}$.
\end{proof}

\robustvfpolyhedronlem*
\begin{proof}
\textbf{First}, we are going to prove
\begin{equation}
    \bigcup_{\pi_s\in\Delta_\mathcal{A}} \mathcal{C}^{\pi_s,\mathcal{P}_s}_{+} =
    \bigcup_{a\in\mathcal{A}} \mathcal{C}^{d_{s,a},\mathcal{P}_s}_{+}.
\end{equation}
It is trivial that $\textrm{RHS}\subseteq\textrm{LHS}$. We then focus on proving $\textrm{LHS}\subseteq\textrm{RHS}$. For any $\mathbf{x}\in\textrm{LHS}$, we have
\begin{equation}
    \exists\,\pi'_s\in\Delta_\mathcal{A} \quad\textrm{s.t.}\quad \mathbf{x}\in\mathcal{C}^{\pi'_s,\mathcal{P}_s}_{+}.
\end{equation}
Note that $\pi_s\in\Delta_\mathcal{A}$ can be written as a convex combination of $d_{s,a}, a\in\mathcal{A}$. In our case, we write
\begin{equation}
    \pi'_s = \sum_{a\in\mathcal{A}} \pi'_{s,a} d_{s,a}.
\end{equation}
Also note that for any $P_s\in\mathcal{P}_s$,
\begin{equation}
    \langle \mathbf{x}, L^{\pi'_s,P_s} \rangle - r^{\pi'_s} = \langle \mathbf{x}, \sum_{a\in\mathcal{A}} \pi'_{s,a} L^{d_{s,a},P_s} \rangle - \sum_{a\in\mathcal{A}} \pi'_{s,a} r^{d_{s,a}} = \sum_{a\in\mathcal{A}} \pi'_{s,a} \left( \langle \mathbf{x}, L^{d_{s,a},P_s} \rangle - r^{d_{s,a}}\right).
\end{equation}
Therefore, we can write $\mathbf{x}\in\mathcal{C}^{\pi'_s,\mathcal{P}_s}_{+}$ as
\begin{equation}
    \exists\,P_s\in\mathcal{P}_s,\quad \sum_{a\in\mathcal{A}} \pi'_{s,a} \left( \langle \mathbf{x}, L^{d_{s,a},P_s} \rangle - r^{d_{s,a}}\right) \ge 0.
\end{equation}
Since $\pi'_{s,a}\ge 0$ for all $a\in\mathcal{A}$, the above statement implies
\begin{equation}
    \exists\,P_s\in\mathcal{P}_s,\,\exists\,a'\in\mathcal{A} \quad\textrm{s.t.}\quad \langle \mathbf{x}, L^{d_{s,a'},P_s} \rangle - r^{d_{s,a'}} \ge 0.
\end{equation}
This is equivalent to $\mathbf{x}\in\textrm{RHS}$. Putting it together, we obtain $\textrm{LHS}=\textrm{RHS}$.

\textbf{Second}, we are going to prove
\begin{equation}
    \bigcup_{\pi_s\in\Delta_\mathcal{A}} \mathcal{C}^{\pi_s,\mathcal{P}_s}_{-}  \supseteq \bigcup_{a\in\mathcal{A}} \mathcal{C}^{d_{s,a},\mathcal{P}_s}_{-},
\end{equation}
where the equality holds when $\mathcal{P}$ is $(s,a)$-rectangular. Again, it is trivial that $\textrm{RHS}\subseteq\textrm{LHS}$. We then focus on proving $\textrm{LHS}\subseteq\textrm{RHS}$ when $\mathcal{P}$ is $(s,a)$-rectangular. For any $\mathbf{x}\in\textrm{LHS}$, we have
\begin{equation}
    \exists\,\pi'_s\in\Delta_\mathcal{A} \quad\textrm{s.t.}\quad \mathbf{x}\in\mathcal{C}^{\pi'_s,\mathcal{P}_s}_{-}.
\end{equation}
Similarly, we can obtain
\begin{equation}
    \forall\,P_s\in\mathcal{P}_s,\quad \sum_{a\in\mathcal{A}} \pi'_{s,a} \left( \langle \mathbf{x}, L^{d_{s,a},P_s} \rangle - r^{d_{s,a}}\right) \le 0.
\end{equation}
This is equivalent to
\begin{equation}
    \max_{P_s\in\mathcal{P}_s}\sum_{a\in\mathcal{A}} \pi'_{s,a} \left( \langle \mathbf{x}, L^{d_{s,a},P_s} \rangle - r^{d_{s,a}}\right) \le 0.
\end{equation}
Due to $(s,a)$-rectangularity of $\mathcal{P}$, we have
\begin{equation}
    \sum_{a\in\mathcal{A}} \pi'_{s,a} \max_{P_{s,a}\in\mathcal{P}_{s,a}} \left( \langle \mathbf{x}, L^{d_{s,a},P_s} \rangle - r^{d_{s,a}}\right) \le 0.
\end{equation}
Since $\pi'_{s,a}\ge 0$ for all $a\in\mathcal{A}$, the above statement implies
\begin{equation}
    \exists\,a'\in\mathcal{A}, \quad\textrm{s.t.}\quad \max_{P_{s,a'}\in\mathcal{P}_{s,a'}}\langle \mathbf{x}, L^{d_{s,a'},P_s} \rangle - r^{d_{s,a'}} \le 0,
\end{equation}
which is equivalent to
\begin{equation}
    \exists\,a'\in\mathcal{A},\,\forall\,P_{s,a'}\in\mathcal{P}_{s,a'} \quad\textrm{s.t.}\quad \langle \mathbf{x}, L^{d_{s,a'},P_s} \rangle - r^{d_{s,a'}} \le 0.
\end{equation}
This is essentially saying $\mathbf{x}\in\textrm{RHS}$. Putting it together, we obtain $\textrm{LHS}=\textrm{RHS}$ when $\mathcal{P}$ is $(s,a)$-rectangular.
\end{proof}

\robustvfpolyhedronthm*
\begin{proof}
The proof follows immediately from Lemma~\ref{lem:robust-vf-space}, Lemma~\ref{lem:robust-vf-space-2} and Lemma~\ref{lem:robust-vf-polyhedron}.
\end{proof}

\activesubsetlem*
\begin{proof}
Since affine transformations preserve affine hulls~\cite{dattorro2005convex}, we have
\begin{equation}
    \begin{aligned}
\mathbf{conv}(g(\mathcal{P}_s)) &= g(\mathbf{conv}(\mathcal{P}_s)), \\
\mathbf{conv}(g(\mathbf{ext}(\mathbf{conv}(\mathcal{P}_s)))) &= g(\mathbf{conv}(\mathbf{ext}(\mathbf{conv}(\mathcal{P}_s)))).
\end{aligned}
\end{equation}

Using Krein-Milman Theorem~\cite{Krein1940}, we can obtain
\begin{equation}
    g(\mathbf{conv}(\mathbf{ext}(\mathbf{conv}(\mathcal{P}_s)))) = g(\mathbf{conv}(\mathcal{P}_s)).
\end{equation}
Putting it together, we have
\begin{equation}
    \mathbf{conv}(g(\mathcal{P}_s)) = \mathbf{conv}(g(\mathbf{ext}(\mathbf{conv}(\mathcal{P}_s)))).
\end{equation}
Then by the properties of polar cones (Proposition 2.2.1
in~\cite{bertsekas2009convex}), we can get
\begin{equation}
    (g(\mathcal{P}_s))^* = (g(\mathbf{ext}(\mathbf{conv}(\mathcal{P}_s))))^*,
\end{equation}
which completes the proof.
\end{proof}

\activesubsetthm*
\begin{proof}
From Eqn.~\eqref{eqn:polar-cone} and Lemma~\ref{lem:activesubsetlem}, we know that each conic hypersurface $C^{\pi_s,\mathcal{P}_s}$ only depends on $\mathbf{ext}(\mathbf{conv}(\mathcal{P}_s))$. Then we have
\begin{equation}
    \mathcal{V}^{\mathcal{P}} = \mathcal{V}^{\mathcal{P}^\dagger}, \quad \textrm{where} \quad \mathcal{P}^\dagger= \bigtimes_{s\in\mathcal{S}} \mathbf{ext}(\mathbf{conv}(\mathcal{P}_s)).
\end{equation}
By the definition of extreme points, it is straightforward to show that
\begin{equation} \bigtimes_{s\in\mathcal{S}} \mathbf{ext}(\mathbf{conv}(\mathcal{P}_s)) = \mathbf{ext}\left(\bigtimes_{s\in\mathcal{S}} \mathbf{conv}(\mathcal{P}_s)\right).
\end{equation}
Using the properties of Cartesian products~\cite{bertsekas2003convex}, we can get
\begin{equation}
\bigtimes_{s\in\mathcal{S}} \mathbf{conv}(\mathcal{P}_s) = \mathbf{conv}\left(\bigtimes_{s\in\mathcal{S}}\mathcal{P}_s \right) = \mathbf{conv}(\mathcal{P}).
\end{equation}
Putting it together, we have $\mathcal{P}^\dagger=\mathbf{ext}(\mathbf{conv}(\mathcal{P}))$. Since $\mathcal{P}$ is assumed to be compact, then $\mathcal{P}^\dagger\subseteq\mathcal{P}$.
\end{proof}

\robustvfsegment*
\begin{proof}
Without loss of generality, consider a line parallel to the axis corresponding to state $s_1$, and denote it as
\begin{equation}
    K = \left\{\mathbf{u} + t\mathbf{e}_{s_1} \mid t\in\mathbb{R} \right\}
\end{equation}
where $\mathbf{u}\in\mathbb{R}^\mathcal{S}$ is fixed. Then the intersection between this line and the robust value space is
\begin{equation}
  K \cap \left[ \bigcap_{s\in\mathcal{S}}\left[
  \left[\bigcup_{\pi_s\in\Delta_\mathcal{A}} \mathcal{C}^{\pi_s,\mathcal{P}_s}_{+}\right] \cap \left[\bigcup_{\pi_s\in\Delta_\mathcal{A}} \mathcal{C}^{\pi_s,\mathcal{P}_s}_{-}\right]\right]\right].
\end{equation}
On the line $K$, denote the direction of the ray $\{\mathbf{u} + t\mathbf{e}_{s_1} \mid t\le 0\}$ as negative and the opposite direction as positive.

First, we have
\begin{equation}
    K \cap \mathcal{H}_{+}^{\pi_s, P_s} = \left\{ \mathbf{u} + t\mathbf{e}_{s_1} \mid t\langle\mathbf{e}_{s_1}, L^{\pi_s,P_s}\rangle \le r^{\pi_s} - \langle \mathbf{u}, L^{\pi_s,P_s} \rangle \right\}
\end{equation}
For $s\neq s_1$, since $\langle\mathbf{e}_{s_1}, L^{\pi_s,P_s}\rangle \le 0$, the intersection $K \cap \mathcal{H}_{+}^{\pi_s, P_s}$ is either the line $K$ or a negative ray. Thus, the intersection
\begin{equation}
  K \cap \left[ \bigcap_{s\in\mathcal{S}, s\neq s_1} \left[\bigcup_{\pi_s\in\Delta_\mathcal{A}} \mathcal{C}^{\pi_s,\mathcal{P}_s}_{+}\right] \right]
\end{equation}
is either the line $K$ or a negative ray.

For $s=s_1$, since $\langle\mathbf{e}_{s_1}, L^{\pi_s,P_s}\rangle > 0$, then the intersection $K \cap \mathcal{H}_{+}^{\pi_s, P_s}$ is a positive ray. Thus, the intersection
\begin{equation}
  K \cap \left[\bigcup_{\pi_s\in\Delta_\mathcal{A}} \mathcal{C}^{\pi_s,\mathcal{P}_s}_{+}\right]
\end{equation}
is also a positive ray.

Putting it together, we can obtain that the intersection
\begin{equation}
  K \cap \left[ \bigcap_{s\in\mathcal{S}} \left[\bigcup_{\pi_s\in\Delta_\mathcal{A}} \mathcal{C}^{\pi_s,\mathcal{P}_s}_{+}\right] \right]
\end{equation}
is either empty or a line segment (or a point in degenerated case).

Similarly, we can show that the intersection
\begin{equation}
  K \cap \left[ \bigcap_{s\in\mathcal{S}} \left[\bigcup_{\pi_s\in\Delta_\mathcal{A}} \mathcal{C}^{\pi_s,\mathcal{P}_s}_{-}\right] \right]
\end{equation}
is either empty or a line segment (or a point in degenerated case).

Finally, taking the intersection, we have that the intersection between $K$ and the robust value space is either empty or a line segment (or a point in degenerated case).
\end{proof}

\clearpage
\section{Additional Figures}
\label{app:figure}

\begin{figure}[ht!]
\centering
    \includegraphics[width=0.85\linewidth]{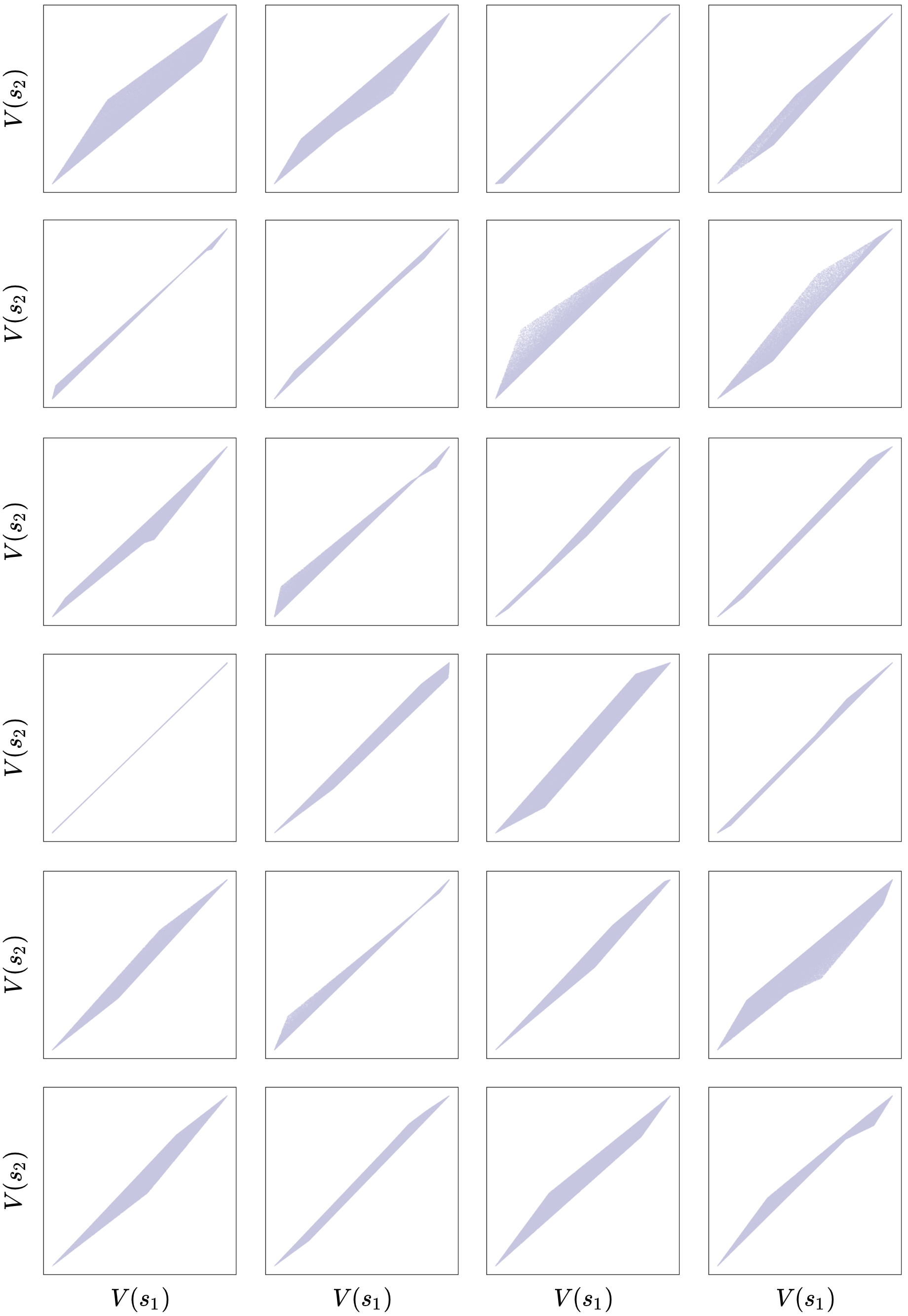}
    \caption{Visualization of the robust value space for several randomly generated $s$-rectangular RMDPs.}
     \label{fig:star-convexity}
\end{figure}


\end{document}